%% file: grokkingFDA.tex
\documentclass{article}

\usepackage{microtype}
\usepackage{graphicx}
\usepackage{subfigure}
\usepackage{booktabs} 

\usepackage{hyperref}

\usepackage[accepted]{icml2026}

\usepackage{amsmath}
\usepackage{amssymb}
\usepackage{mathtools}
\usepackage{amsthm}

\usepackage[capitalize,noabbrev]{cleveref}

\theoremstyle{plain}
\newtheorem{theorem}{Theorem}[section]
\newtheorem{proposition}[theorem]{Proposition}
\newtheorem{lemma}[theorem]{Lemma}
\newtheorem{corollary}[theorem]{Corollary}
\theoremstyle{definition}
\newtheorem{definition}[theorem]{Definition}

\theoremstyle{remark}
\newtheorem{remark}[theorem]{Remark}

\usepackage[textsize=tiny]{todonotes}

\newtheorem{example}{Example}[section]

\newcommand{\guillemets}[1]{“#1”}
\newcommand{\cotes}[1]{‘#1’}

\newcommand{\icomplex}[0]{\mathbf{i}}

\DeclareMathOperator{\vect}{vec}
\DeclareMathOperator{\tr}{tr}
\DeclareMathOperator{\diag}{diag}

\newcommand{\alg}[1]{\mathfrak{#1}}

\input{macros}

\usepackage{comment}

\usepackage{wrapfig}

\usepackage{stmaryrd} 

\usepackage{tikz-cd}

\usepackage{bbold}

\icmltitlerunning{Grokking Finite-Dimensional Algebra}

\begin{document}

\twocolumn[
  \icmltitle{Grokking Finite-Dimensional Algebra}



  \icmlsetsymbol{equal}{*}

\begin{icmlauthorlist}
\icmlauthor{Pascal Jr Tikeng Notsawo}{udem,mila}
\icmlauthor{Guillaume Dumas}{udem,mila,chusj}
\icmlauthor{Guillaume Rabusseau}{udem,mila,cifar}
\end{icmlauthorlist}

\icmlaffiliation{udem}{
Université de Montréal
}
\icmlaffiliation{mila}{Mila,
Montréal 
}
\icmlaffiliation{chusj}{CHU Sainte-Justine Research Center, Montréal 
}
\icmlaffiliation{cifar}{CIFAR AI Chair}

\icmlcorrespondingauthor{Pascal Jr Tikeng Notsawo}{pascalnotsawo@gmail.com}

  \icmlkeywords{Machine Learning, ICML}

  \vskip 0.3in
]



\printAffiliationsAndNotice{}  

\begin{abstract}

This paper investigates the grokking phenomenon, which refers to the sudden transition from a long memorization to generalization  observed during neural networks training, in the context of learning multiplication in finite-dimensional algebras (FDA).  While prior work on grokking has focused mainly on group operations, we extend the analysis to more general algebraic structures, including non-associative, non-commutative, and non-unital algebras. We show that learning group operations is a special case of learning FDA, and that learning multiplication in FDA amounts to learning a bilinear product specified by the algebra's structure tensor. For algebras over the reals, we connect the learning problem to matrix factorization with an implicit low-rank bias, and for algebras over finite fields, we show that grokking emerges naturally as models must learn discrete representations of algebraic elements. This leads us to experimentally investigate the following core questions: (i) how do algebraic properties such as commutativity, associativity, and unitality influence both the emergence and timing of grokking, (ii) how structural properties of the structure tensor of the FDA, such as sparsity and rank, influence generalization, and (iii) to what extent generalization correlates with the model learning latent embeddings aligned with the algebra's representation. Our work provides a unified framework for grokking across algebraic structures and new insights into how mathematical structure governs neural network generalization dynamics.
\end{abstract}


\section{Introduction}
\label{sec:introduction}

\begin{figure}[htp]
\centering
\subfigure[]{\includegraphics[width=.5\linewidth]{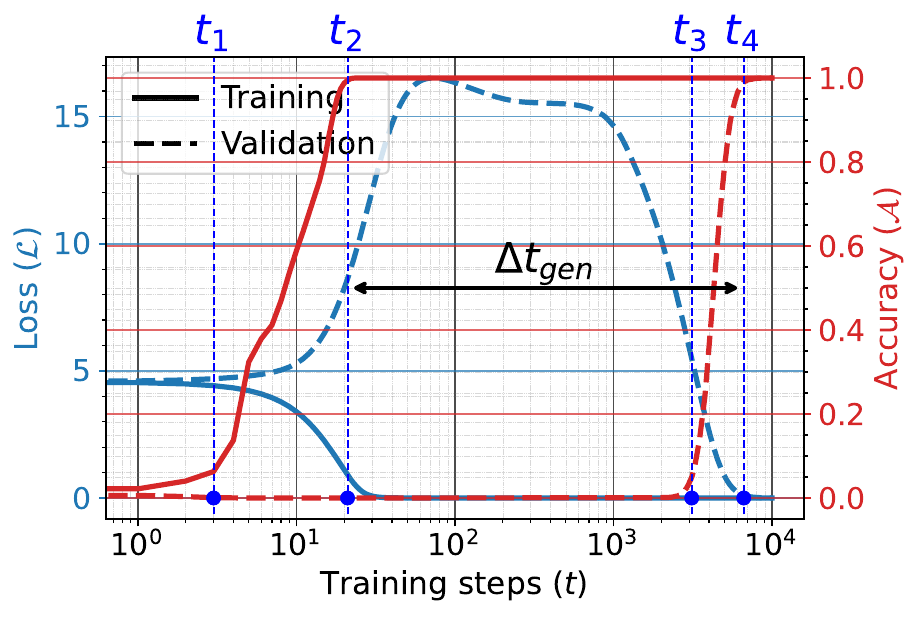}}
\subfigure[]{\includegraphics[width=.49\linewidth]{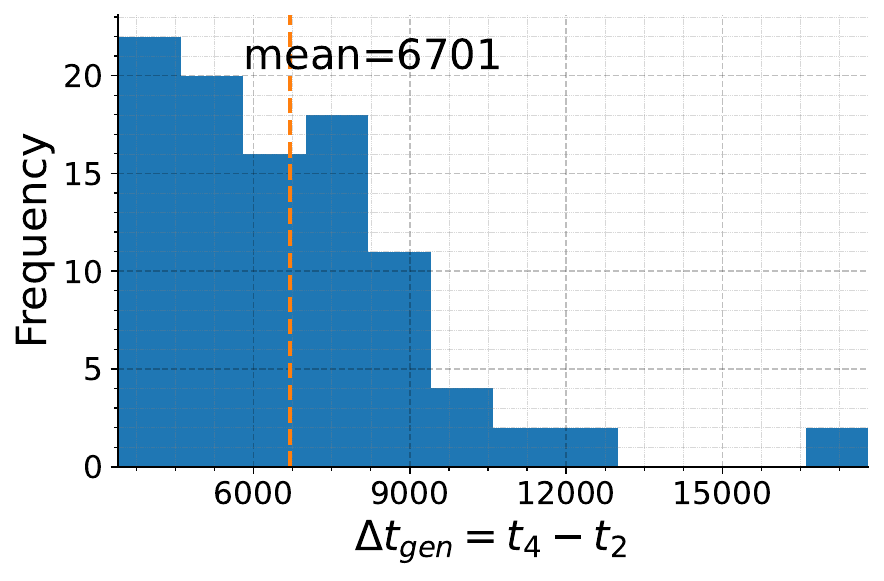}}
\caption{
\textbf{(a)} 
Grokking (non-trivial $\Delta t_{\text{gen}} = t_4 - t_2$) on a 2-layer MLP with $\mathrm{ReLU}$ activation trained to learn multiplication in the algebra of complex numbers over the field $\mathbb{F}_7=\mathbb{Z}/7\mathbb{Z}$, where $(a + b\icomplex)(c + d\icomplex) = ac + bd \icomplex^2 + (ad + bc)\icomplex$ and $\icomplex^2 = -1 \equiv 6 \text{ mod } 7$.
\textbf{(b)} 
Histogram of grokking delay $\Delta t_{\text{gen}}$ under single modifications of the multiplication rules of complex numbers in $\mathbb{F}_7$, where each perturbation consists in changing the outcome of one multiplication relation (e.g., replacing $\icomplex^2 = -1$ with $\icomplex^2 = 1$).
}
\vspace{-0.2in}
\label{fig:grokking_complex_numbers}
\end{figure}

One of the primary objectives of machine learning is generalization, which refers to the ability of models to perform well on data they have not encountered during training. Even if these models perform exceptionally well in different domains today, the underlying reason for their good generalization ability remains an open question. 
By introducing the grokking phenomenon, \citet{power2022grokking} recently provides a new lens to view the generalization capabilities of overparametrized neural networks, challenging previous notions that such networks merely memorize training data. 

The grokking phenomenon has been studied mainly on algorithmic tasks, which consist of predicting $z=x \circ y$ given $(x, y) \in \mathcal{S}^2$, with $\circ$ a binary mathematical operator and  $\mathcal{S}$ a finite set 
of discrete symbols with no internal structure; 
for instance, finite cyclic groups of order $1$ with its binary operation $\circ$ (such as addition in $\mathcal{S}=\mathbb{Z}/p\mathbb{Z}$ for some prime integer $p$). In such settings, it is now established that models like Transformers generalize by recovering the representation of the group itself~\citep{gromov2023grokking,nanda2023progress,chughtai2023toy,stander2024grokking}.
In this work, we investigate how grokking extends beyond these well-understood settings in terms of algebraic complexity, i.e, moving from groups to more general algebras. Our goal is to understand how specific algebraic or structural properties of the task influence the emergence and timing of the grokking phenomenon.


\paragraph{Motivations}
\label{sec:motivation}

While group operations provide a basic structure, algebras encompass a richer and more complex set of operations, including non-unital, non-commutative, and non-associative behaviors. For example, many systems (in physics, robotics, control, etc) evolve according to infinitesimal transformations whose interaction is governed by a Lie bracket $[\cdot, \cdot]$, a bilinear, non-commutative, non-associative, and non-unital operation. Predicting such infinitesimal updates corresponds to learning a FDA. 

In drug discovery, predicting how molecules interact with biological targets is crucial for identifying potential drug candidates. This involves understanding the complex relationships and transformations between different molecular structures.  An example of a task is chemical reaction prediction \citep{C7ME00107J}, where given two molecules $x$ and $y$, the task is to predict the product $xy$ when these molecules undergo a chemical reaction (and the reaction pathway). If a third molecule $z$ is involved, the reaction sequence $(xy)z$  might yield a different product than $x(yz)$. 
Also, chemical reactions often involve complex mechanisms that cannot be simply reversed or do not have a natural identity element (a molecule that would leave another unchanged in a reaction).

In natural language processing, understanding how the meanings of smaller units (words or phrases) combine to form the meanings of larger units (sentences or documents) is a fundamental challenge. This process is inherently compositional, similar to operations in an algebraic structure where the operation combines elements to produce another element within the same set, and for which the order in which the elements are taken is very important. 
Consider the semantic composition and decomposition tasks~\citep{turney2014semantic} as a concrete example. 
The first task involves understanding the meaning of a text by composing the meanings of the individual words in this text, and the second involves understanding the meaning of an individual word by decomposing it into various aspects 
that are latent in the word's meaning.
For example, given a noun-modifier bigram \cotes{x y}, with \cotes{x}  the head noun and \cotes{y} the noun or adjective, the semantic composition task can be to find a noun unigram \cotes{z} in the vocabulary which is a synonym of \cotes{x y}. Conversely, the semantic decomposition can be to find a bigram \cotes{x y} synonym to a given \cotes{z} from the vocabulary. Examples of (x y, z) are (presidential term, presidency), (electrical power, wattage), (milk sugar, lactose), (bass fiddle, contrabass), etc.
This kind of problem is often solved by optimizing $\argmax_{\z} g(f(\x, \y), \z)$, where $\x$, $\y$ and $\z$ are vectors representing the words \cotes{x}, \cotes{y} and \cotes{z}; $f$ a function representing \cotes{x y} from $\x$ and $\y$; and $g$ a measure of similarity. Using a symmetric $f$ is common, such as $f(\x, \y) = \x+\y$, 
but they are order independent and gives the same representation to \cotes{x y} and \cotes{y x}, so that, for example, \guillemets{good morning} $=$ \guillemets{morning good} under $f$.
\citet{LANDAUER200243} estimates that $80\%$ of the meaning of English text comes from word choice, and the remaining $20\%$ comes from word order. Therefore, methods like vector addition, which do not account for word order, potentially overlook at least $20\%$ of the meaning in a bigram \citep{turney2014semantic}.
So, the order of words affects the meaning; \guillemets{good morning} conveys a different sentiment and usage than \guillemets{morning good}.
For associativity, when combining more than two words, the order in which they are combined can affect the resulting meaning. For example, combining words in the phrase \guillemets{old country house} can yield different interpretations based on which words are combined first.


\paragraph{Contributions}  
Our contributions are fourfold:  

\begin{itemize}
    \item Leveraging the fact that any tensor $\ten{C}\in\mathbb{F}^{n\times n\times n}$ defines an $n$-dimensional algebra over a field $\mathbb{F}$ ($n\mathbb{F}$-FDA), with associativity, commutativity, and unitality encoded as tensor identities, and matrix representations characterized by explicit tensor constraints; we establishe the equivalence between $\mathbb{F}^{n\times n\times n}$ and all $n\mathbb{F}$-FDA, allowing us to focus on structure tensors rather than abstract algebras.  

    \item We show that prior work on grokking in groups (e.g.,~\citep{nanda2023progress},~\citep{chughtai2023toy}) is a special case of our tensorial framework, which naturally extends to non-associative, non-commutative, and non-unital algebras.  

    \item  
    

    We connect FDA learning over $\mathbb{R}$ to matrix factorization, while over finite fields grokking emerges when models learn explicit embeddings of FDA elements.

    \item We provide the first systematic study of grokking beyond groups, showing that algebraic constraints (associativity, commutativity, unitality) and structural features of the structure tensor (sparsity, rank) significantly affect grokking delay and sample complexity, and that generalization coincides with the emergence of algebra-consistent representations in 
    model layers\footnote{The code to reproduce all figures is available at \url{https://github.com/Tikquuss/grokking_fda}.}.  
\end{itemize}


\paragraph{Related Work}

On finite groups like $\mathbb{Z}/p\mathbb{Z}$ and $S_n$, it has been shown that models like Transformers grok by recovering the representation of the group itself~\citep{gromov2023grokking,nanda2023progress,chughtai2023toy,stander2024grokking}.
On the algorithmic task~\citep{power2022grokking}, \citet{liu2022towards} shows representation learning to be the main underlying factor of the existence of generalizing solutions for modular arithmetic, supporting~\citet{power2022grokking}'s preliminary observations. 
\citet{gromov2023grokking}~write out the exact-form expression of the weights of a 2-layer MLP (with activation function $x \mapsto x^2$)  that yield $100\%$ accuracy on modular addition.
\citet{nanda2023progress}~write out the closed-form solution of final trained weights in the case of a one-layer transformer trained on the same task, which is also made of sinusoids as in \citep{gromov2023grokking}.
\citet{chughtai2023toy}~reverse engineer (small) neural networks learning finite group composition via mathematical representation theory. \citet{stander2024grokking} do the same, with the focus on composition in 
the symmetric group $S_{5}$.

To the best of our knowledge, we are the first to study grokking in the general setting of FDAs.
Prior work on grokking studies algorithmic tasks almost exclusively through group operations. These structures enforce a rigid bundle of properties (associativity, unitality, etc), making it impossible to isolate which algebraic traits influence grokking and whether these properties generalize beyond groups. Our FDA framework provides the first setting in which all previously studied group tasks embed naturally, and non-unital and non-associative operations become accessible. Our experiments show, for instance, that associativity, commutativity, and unitality each induce distinct grokking delays and generalization patterns, and that these patterns persist across all the algebras sampled from each regime.

\paragraph{Structure of the document}  
Section~\ref{sec:background} introduces the structure-tensor view of FDA, characterizes algebraic properties as tensor identities, and provides a criterion for matrix representations. Section~\ref{sec:from_groups_to_fda} shows how the framework subsumes groups, and Section~\ref{sec:grokking_fda} presents the learning formulation together with its theoretical motivation. Experimental results and discussion are given in Section~\ref{sec:experiments_results} and Section~\ref{sec:discussion_limitation}, respectively, before concluding in Section~\ref{sec:conclusion}.


\paragraph{Notations} 
For $n \in \mathbb{N}^*$, $[n] := \{1, \dots, n\}$. For $\A \in \mathbb{R}^{m \times n}$, $\vect(\A) \in \mathbb{R}^{mn}$ denotes the vector obtained by stacking the columns of $\A$. The $i$th row and column of $\A$ are denoted by $\A_{i,:}$ (or $\A_i$) and $\A_{:,i}$, respectively. 
This notation extends naturally to tensor slices.
We denote by $\odot$ the Hadamard product, $\otimes$ the Kronecker product, $\star$ the Khatri--Rao product, $\bullet$ the face-splitting product, and $\times_n$ the mode-$n$ tensor--vector product~\citep{kolda2009tensor}. 


\section{Finite Dimensional Algebra}
\label{sec:background}



In this section, we work with standard notions from 
FDA
and adopt a concrete tensorial viewpoint. Fixing a basis, any $n$-dimensional algebra is identified with a structure tensor $\ten{C} \in \mathbb{F}^{n \times n \times n}$. 
This formulation provides a unified and operational description of algebraic structure that is directly compatible with our setting. In particular, it allows us to treat $\ten{C}$ as a concrete object that can be compared with learned representations, forming the basis for the alignment perspective developed later in the paper.

\subsection{Definitions}
\label{sec:fda_definitions}

An \textbf{algebra} $\alg{A}$ over a field $(\mathbb{F}, +, *)$ is a vector space equipped with a $\mathbb{F}$-bilinear product $\cdot: \alg{A}\times \alg{A} \to \alg{A}$ such that $(\u + \v) \cdot \w = \u \cdot \w + \v \cdot \w$ (right distributivity), $\w \cdot (\u + \v) = \w \cdot \u + \w \cdot \v$ (left distributivity) and $(\alpha\u) \cdot (\beta\v) = (\alpha*\beta) (\u \cdot \v)$ (compatibility with scalars) for all $\u, \v, \w \in \alg{A}$ and $\alpha, \beta \in \mathbb{F}$.
When $\cdot$ is associative (resp. commutative), $\alg{A}$ is said to be associative  (resp. commutative). When there exists an element $1_{\alg{A}} \in \alg{A}$ such that \( 1_{\alg{A}} \cdot \u = \u \cdot 1_{\alg{A}} = \u \ \forall \u \in \alg{A} \), $\alg{A}$ is said to be unital.
The dimension of $\alg{A}$ is its dimension as a $\mathbb{F}$-vector space, denoted by $\text{dim}_{\mathbb{F}} \alg{A}$. 
We say that $\alg{A}$ is finite-dimensional if $n=\text{dim}_{\mathbb{F}} \alg{A}$ is finite. This means that there exists a finite basis $\{\a^{(i)}\}_{i \in [n]}$ of $\alg{A}$ (as a vector space over $\mathbb{F}$) such that for every $\u \in \alg{A}$, $\u = \sum_{i=1}^{n} \alpha_i \a^{(i)}$ with $\alpha_i \in \mathbb{F}$. 

Let $(\alg{A}, \cdot)$ and $(\alg{B}, \times)$ be two $\mathbb{F}$-algebras. A homomorphism of $\mathbb{F}$-algebras $\phi : \alg{A} \to \alg{B}$ is a $\mathbb{F}$-linear map satisfying $\phi(\u\cdot\v) = \phi(\u)\times\phi(\v)$ for all $\u, \v \in \alg{A}$; and $\phi(1_{\alg{A}}) = 1_{\alg{B}}$ when $\alg{A}$ and $\alg{B}$ are unitals. 
This homomorphism $\phi$ is called an isomorphism if it is bijective. 
Two $\mathbb{F}$-algebras $\alg{A}$ and $\alg{B}$ are isomorphic if there exists an algebra isomorphism $\phi : \alg{A} \to \alg{B}$ between them. In other words, $\alg{A}$ and $\alg{B}$ may look different at the level of their elements or chosen bases, but they share exactly the same algebraic structure. 

In the following, 
$\mathbb{F}$-FDA means FDA over the field $\mathbb{F}$, and $n\mathbb{F}$-FDA means $\mathbb{F}$-FDA of dimension $n$. 

\subsection{Structure Constants of FDA}
\label{sec:structure_tensor}

Let $(\alg{A}, \cdot)$ be an $n\mathbb{F}$-FDA and $B = \{\a^{(i)}\}_{i \in [n]}$ a basis of $\alg{A}$. The structure of $\alg{A}$ in $B$ is entirely captured by the tensor $\ten{C}^{(B)} \in \mathbb{F}^{n \times n \times n}$ defined by $\a^{(i)} \cdot \a^{(j)} = \sum_{k=1}^{n} \ten{C}_{ijk}^{(B)} \a^{(k)}$.
The entries of $\ten{C}^{(B)}$ are called structure coefficients/constants of $\alg{A}$ in $B$. 
We will call the tensor $\ten{C}^{(B)}$ the \textbf{structure tensor} of $\alg{A}$ in the basis $B$, and when the context is clear, we will omit $B$ from the notation.
That said, we have for all $\u = \sum_{i=1}^n \u_i \a^{(i)}$ and $\v = \sum_{i=1}^n \v_i \a^{(i)}$ in $\alg{A}$, $\u \cdot \v = \sum_{k=1}^n \left(  \ten{C} \times_1 \u \times_2 \v \right)_k \a^{(k)}$ with 
\begin{equation}
    \ten{C} \times_1 \u \times_2 \v  := \left[ \u^\top \ten{C}_{:,:,k} \v \right]_{k \in [n]}^\top  = \ten{C}_{(3)} \left( \v \otimes \u \right)
\end{equation}
In this equation, $\ten{C}_{(3)} \in \mathbb{F}^{n \times n^2}$ is the mode-3 unfolding of $\ten{C}$: 
$\left(\ten{C}_{(3)}\right)_{k,:} =  \vect(\ten{C}_{:,:,k})  \in \mathbb{F}^{n^2} \ \forall k \in [n]$.

\begin{example}
\label{eg:complex_number_body_paper}
The complex numbers $\mathbb{C} = \left(\mathbb{R}^2, \cdot \right)$ can be seen as a $2\mathbb{R}$-FDA, with the product $\cdot$ defined as $(a+b\mathbf{i}) \cdot (c+d\mathbf{i}) = (ac-bd) + (ad+bc) \icomplex$, where $\icomplex^2=-1$.
The structure tensor of $\mathbb{C}$ in $(\a^{(1)}, \a^{(2)}) \equiv (1, \icomplex)$ is 
\begin{equation}
\label{eq:C_complex}
\begin{bmatrix}
\ten{C}_{111} & \ten{C}_{112} \\
\ten{C}_{121} & \ten{C}_{122} \\
\end{bmatrix},
\begin{bmatrix}
\ten{C}_{211} & \ten{C}_{212} \\
\ten{C}_{221} & \ten{C}_{222} \\
\end{bmatrix}
= \begin{bmatrix}
1 & 0 \\
0 & 1 \\
\end{bmatrix},
\begin{bmatrix}
0 & 1 \\
-1 & 0 \\
\end{bmatrix}
\end{equation} 
In the field $\mathbb{F}=\mathbb{F}_p:=\mathbb{Z}/p\mathbb{Z}$ for $p$ prime (or power of prime), we have $\icomplex^2=(-1)\mod p =p-1$ instead, so $\ten{C}_{221} = p-1$ (the other coefficients remain unchanged). 
\end{example}

One may ask whether every tensor $\ten{C} \in \mathbb{F}^{n \times n \times n}$ defines an $n\mathbb{F}$-FDA. If this holds, then one could in principle sample the entries of $\ten{C}$ over $\mathbb{F}$ and study the resulting algebra through the properties of its structure tensor. The following proposition shows that analyzing the space $\mathbb{F}^{n \times n \times n}$ is equivalent to studying all $n\mathbb{F}$-FDA. What remains open, however, is how to sample $\ten{C}$ in a principled way so as to obtain meaningful and general insights about the phenomena of interest. We return to this issue in Section~\ref{sec:grokking_fda}.

\begin{proposition}
\label{prop:algebra_generate_by_C}
For all $\ten{C} \in \mathbb{F}^{n \times n \times n}$, there exists an $n\mathbb{F}$-FDA whose structure tensor is $\ten{C}$ in some basis. Moreover, all algebras that have $\ten{C}$ as a structure tensor in one of their bases are isomorphic to each other.
\end{proposition}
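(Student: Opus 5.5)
For existence, I will construct the algebra directly on $\alg{A} := \mathbb{F}^n$ with its canonical basis $\{\e^{(i)}\}_{i\in[n]}$. For $\u, \v \in \mathbb{F}^n$ I define the product
\[
\u \cdot \v := \ten{C} \times_1 \u \times_2 \v = \ten{C}_{(3)}(\v\otimes\u),
\]
that is, $(\u\cdot\v)_k = \u^\top \ten{C}_{:,:,k}\v$. Each coordinate is a bilinear form in $(\u,\v)$, so this product is $\mathbb{F}$-bilinear. Hence left and right distributivity and compatibility with scalars all hold. These are immediate from linearity of matrix–vector products in each argument, and no associativity, commutativity or unit is required, because the definition of algebra in Section~\ref{sec:fda_definitions} does not demand them. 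It remains to evaluate the product on basis vectors: $\e^{(i)}\cdot\e^{(j)}$ has $k$-th coordinate $(\e^{(i)})^\top\ten{C}_{:,:,k}\e^{(j)} = \ten{C}_{ijk}$, so $\e^{(i)}\cdot\e^{(j)}=\sum_k \ten{C}_{ijk}\e^{(k)}$. Thus $\ten{C}$ is the structure tensor of $(\mathbb{F}^n,\cdot)$ in the canonical basis.

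For uniqueness up to isomorphism, suppose $(\alg{A},\cdot)$ has basis $\{\a^{(i)}\}$ and $(\alg{B},\times)$ has basis $\{\b^{(i)}\}$, both with structure tensor $\ten{C}$. I will define the linear map $\phi:\alg{A}\to\alg{B}$ by $\phi(\a^{(i)})=\b^{(i)}$. It sends a basis to a basis, so it is a bijective linear map. Multiplicativity can be checked on basis pairs by bilinearity of both products:
\[
\phi(\a^{(i)}\cdot\a^{(j)})=\phi\Big(\sum_k\ten{C}_{ijk}\a^{(k)}\Big)=\sum_k\ten{C}_{ijk}\b^{(k)}=\b^{(i)}\times\b^{(j)}=\phi(\a^{(i)})\times\phi(\a^{(j)}).
\]
Expanding general $\u,\v$ in the basis then gives $\phi(\u\cdot\v)=\phi(\u)\times\phi(\v)$.

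The one point needing care is the unital clause in the paper's definition of homomorphism, which requires $\phi(1_{\alg{A}})=1_{\alg{B}}$ when both algebras are unital. I will handle it with a short lemma: a bijective multiplicative map preserves the identity. Indeed, for any $\w\in\alg{B}$, write $\w=\phi(\u)$; then $\phi(1_{\alg{A}})\times\w=\phi(1_{\alg{A}}\cdot\u)=\phi(\u)=\w$, and symmetrically on the right. So $\phi(1_{\alg{A}})$ is a two-sided identity in $\alg{B}$, and identities are unique, since $e=e\times e'=e'$. The same argument shows that $\alg{A}$ is unital if and only if $\alg{B}$ is, so the condition is automatic. This is the only step beyond routine bookkeeping, and it is mild. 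Therefore $\phi$ is an algebra isomorphism, and all algebras with structure tensor $\ten{C}$ in some basis are isomorphic.
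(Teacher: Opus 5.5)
Your proof is correct and follows the paper's route. Existence uses the same construction $\u\times\v=\ten{C}\times_1\u\times_2\v$ on $\mathbb{F}^n$ as Lemma~\ref{lemma:algebra_generate_by_C}, and uniqueness uses a basis-to-basis linear map checked to be multiplicative as in Lemma~\ref{lemma:algebra_generate_by_C_iso_any_A}; the only differences are that you map directly between two arbitrary algebras rather than through the canonical model $\mathbb{F}[\ten{C}]$, and that you explicitly verify $\phi(1_{\alg{A}})=1_{\alg{B}}$, a point the paper's lemma leaves implicit.
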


The proof of this proposition is constructive. We define on the vector space $\mathbb{F}^n$ the bilinear map $\u \times \v = \ten{C} \times_1 \u \times_2 \v$, and show that $(\mathbb{F}^n, \times)$ is an $n\mathbb{F}$-FDA with structure tensor $\ten{C}$ in the canonical basis $\{\e^{(i)}\}_{i\in [n]}$, $\e^{(i)}_{j}=\delta_{ij} 
$ (Lemma~\ref{lemma:algebra_generate_by_C}). We call this algebra the algebra generated by $\ten{C}$, and denote it $\mathbb{F}[\ten{C}]$. We then show that $\mathbb{F}[ \ten{C}]$ is isomorphic to any $n\mathbb{F}$-FDA whose structure tensor is $\ten{C}$ in some basis (Lemma~\ref{lemma:algebra_generate_by_C_iso_any_A}).
Although any $\ten{C} \in \mathbb{F}^{n \times n \times n}$ defines an $n\mathbb{F}$-FDA, a key question is how the algebraic properties of $(\alg{A}, \cdot)$ are reflected in $\ten{C}$. The following proposition makes this connection explicit by characterizing associativity, commutativity, and unitality in terms of tensor equations. 

\begin{proposition}
\label{proposition:algebra_generate_by_C_asso_commu_uni_condition}
An $n\mathbb{F}$-FDA $(\alg{A}, \cdot)$ with structure tensor $\ten{C} \in \mathbb{F}^{n \times n \times n}$ in the basis $B = \{\a^{(i)}\}_{i\in [n]}$ is
\vspace{-0.2cm}
\begin{itemize}
    \item[(i)] \textbf{associative} if and only if 
    $\sum_{k=1}^n \ten{C}_{ijk} \ten{C}_{klm} = \sum_{k=1}^n \ten{C}_{ikm} \ten{C}_{jlk} \ \forall i,j,l,m \in [n]$;
    \item[(ii)] \textbf{commutative} if and only if $\ten{C}$ is symmetric in its first two modes, i.e. $\ten{C}_{ijk} = \ten{C}_{jik} \ \forall i,j,k \in [n]$;
    \item[(iii)] \textbf{unital} if and only if there exists $\lambda \in \mathbb{F}^n$ such that  $\sum_{i=1}^{n} \lambda_i \ten{C}_{i,:,:} = \sum_{i=1}^{n} \lambda_i \ten{C}_{:,i,:} = \mathbb{I}_n$.  
    In that case, $\lambda$ is unique and $1_{\alg{A}} = \sum_{i=1}^n \lambda_i \a^{(i)}$.
\end{itemize}
\vspace{-3mm}
\begin{proof}
See Proposition~\ref{proposition:algebra_generate_by_C_asso_commu_uni_condition_appendix}.
\end{proof}
\end{proposition}

\subsection{Representations of FDA}
\label{sec:representation_fda}

For $m\in\mathbb{N}^*$, let $\mathcal{M}_m(\mathbb{F})$ denote the set of all $m \times m$ matrices over $\mathbb{F}$, which is an $m^2\mathbb{F}$-FDA under standard matrix multiplication. A representation of dimension $m$ of a $\mathbb{F}$-algebra $(\alg{A}, \cdot)$ involves a homomorphism $\rho : \alg{A} \to \mathcal{M}_m(\mathbb{F})$ such that  $\rho(\alpha\u+\beta\v)=\alpha \rho(\u)+ \beta \rho(\v)$ and $\rho(\u \cdot \v)=\rho(\u)\rho(\v)$ 
for all $\u, \v \in \alg{A}$ and $\alpha, \beta \in \mathbb{F}$. If $\alg{A}$ is unital, it is further required that $\rho(1_{\alg{A}}) = \mathbb{I}_m$. 
The following proposition characterizes when such a representation exists in terms of the structure tensor.

\begin{proposition}
\label{proposition:algebra_representation}
Let $(\alg{A}, \cdot)$ be an $n\mathbb{F}$-FDA with structure tensor $\ten{C} \in \mathbb{F}^{n \times n \times n}$ in $B = \{\a^{(i)}\}_{i\in [n]}$. 
Fix $m\in\mathbb{N}^*$ and $\ten{R} \in\mathbb{F}^{n \times m\times m}$. Set $\ten{R}_k = \ten{R}_{k,:,:} \in\mathbb{F}^{m\times m}$ for all $k \in [n]$. The linear map $\rho:\alg{A}\to \mathcal{M}_m(\mathbb{F})$ defined by $\rho \left(\sum_{k=1}^n \alpha_k \a^{(k)}\right):=\sum_{k=1}^n \alpha_k \ten{R}_k \ \forall \alpha \in \mathbb{F}^n$ is a representation of $\alg{A}$ if and only if 
\begin{equation}
\label{eq:equation_for_R_in_B_simple_1}
\begin{array}{ll}
    \ten{R}_i \ten{R}_j = \sum_{k} \ten{C}_{ijk} \ten{R}_k \quad \forall i, j \in [n] 
\end{array}
\end{equation}
and, if $\alg{A}$ is unital with $1_{\alg{A}}=\sum_{i} \lambda_i \a^{(i)}$, $\sum_{i} \lambda_i \ten{R}_i = \mathbb{I}_m$.
\vspace{-3mm}
\begin{proof}
See Proposition~\ref{proposition:algebra_representation_from_representation_of_a_basis}.
\end{proof}
\end{proposition}

\begin{example}
With the algebra of complex numbers in $\mathbb{F}=\mathbb{R}$, the defining relations for $(\a^{(1)},\a^{(2)})=(1,\icomplex)$ are $\ten{R}_1^2=\ten{R}_1, \ \ten{R}_1\ten{R}_2=\ten{R}_2\ten{R}_1=\ten{R}_2, \ \ten{R}_2^2=-\ten{R}_1$; together with  $\ten{R}_1=\mathbb{I}_m \text{ since } 1_{\alg{A}}=\a^{(1)}$. Hence $\ten{R}_2^2=-\mathbb{I}_m$, which forces $m$ to be even. Writing $m=2k$, the general solution is $\ten{R}_2 = \S (\mathbb{I}_k \otimes \ten{C}_2^\top) \S^{-1} \ \forall \S\in \mathrm{GL}_m(\mathbb{R})$, with $\mathrm{GL}_m(\mathbb{R})$ the general linear group, and 
$\ten{C}_2^\top = \begin{bmatrix}  \begin{bmatrix} 0 & -1 \end{bmatrix}, \begin{bmatrix} 1 & 0 \end{bmatrix}\end{bmatrix}$ (the standard real representation of multiplication by $\icomplex$).
In $
\mathbb{F}_p$,  $\ten{R}_2^2=-\mathbb{I}_m$ becomes $\ten{R}_2^2=(p-1) \mathbb{I}_m$. There is no parity restriction on $m$, and a convenient normal form is $\ten{R}_2 = (p-1)^{1/2} \S \diag \big(\boldsymbol{\epsilon}\big) \S^{-1} \ \forall \S \in \mathrm{GL}_m(\mathbb{R}), \boldsymbol{\epsilon}\in\{\pm1\}^m$.
\end{example}

It is worth asking, for a given $\ten{C} \in \mathbb{F}^{n \times n \times n}$, what is the smallest $m$ (or the values of $m$) for which the system of equations~\eqref{eq:equation_for_R_in_B_simple_1} admits a non-trivial solution. This question amounts to determining the minimal dimension of a faithful representation of $\alg{A}$, which is a classical but non-trivial problem in representation theory. For associatives $n\mathbb{F}$-FDA, this smallest $m$ is less than or equal to $n$ 
(Proposition~\ref{proposition:min_m_associative_fda}).
Since our goal in Section~\ref{sec:experiments_results} is to investigate whether neural networks trained on data from $\alg{A}=\mathbb{F}[\ten{C}]$ implicitly learn such representations, we do not attempt to solve the minimality problem here. Instead, we will assume a sufficiently large $m$ so that the system can be solvable and concentrate on analyzing how the learned embedding dimension relates to the properties of $\alg{A}$ and its structure tensor.


\section{From Groups to FDA}
\label{sec:from_groups_to_fda}

We show in this section that group learning is a special case of FDA learning. In fact, the study of grokking in groups naturally extends to algebras through their structure tensors. To see this, consider a finite group $(G, \circ)$ with $n$ elements $\{g_i\}_{i\in[n]}$. One can construct the Cayley table of the group as a matrix $\A \in G^{n \times n}$, where each entry is given by $\A_{ij} = g_i \circ g_j, \ \forall i,j \in [n]$. This table contains $n^2$ operations, which are then randomly partitioned into two disjoint non-empty subsets, a training set and a test set~\citep{power2022grokking}.  More precisely, the model is trained to predict $\Psi( g_i \circ g_j)$ given $\Psi(g_i)$ and $\Psi(g_j)$, with $\Psi : G \to \{0, 1\}^n$ the one-hot encoding defined on $G$, $\Psi(g_i) := \left[ \mathbb{1}(g_k = g_i) \right]_{k \in [n]} \in \{0, 1\}^n \ \forall i \in [n]$, $\mathbb{1}$ the indicator function.

Now, for a field $\mathbb{F}$, define the set $\mathbb{F}[G] := \left\{ \sum_{i=1}^n \alpha_i \a^{(i)} \ \mid \ \alpha \in \mathbb{F}^n \right\}$, where $\a^{(i)} = \Psi(g_i) 
\in \{0_{\mathbb{F}}, 1_{\mathbb{F}}\}^n \ \forall i \in [n]$. For all $\u=\sum_{i} \u_i \a^{(i)}$ and $\v= \sum_{j} \v_j \a^{(j)}$ in $\mathbb{F}[G]$, let $\u \cdot \v := \sum_{i, j} \u_i \v_j \Psi \left(g_i \circ g_j \right) = \sum_{k} \left( \sum_{ i, j} \u_i \v_j \mathbb{1}(g_i \circ g_j = g_k) \right) \a^{(k)}$. The following proposition shows that $(\mathbb{F}[G], \cdot)$ is an $n\mathbb{F}$-FDA, and that training a model on the group $(G, \circ)$ is equivalent to training it on the multiplication in $(\mathbb{F}[G], \cdot)$. 
In fact, applying the encoding $\Psi$ to each entry of the Cayley table $\A$ of $G$ directly yields the structure tensor $\ten{C}$ of $(\mathbb{F}[G], \cdot)$ in $B = \{\a^{(i)}\}_{i\in [n]}$.  
Therefore, any framework developed for algebras already subsumes the case of groups. In particular, by investigating grokking in algebras, we not only extend the scope of previous studies but also recover them as a special case.  

\begin{proposition}
\label{proposition:from_group_to_algebra}
For a finite group $(G, \circ)$ with $n$ elements $\{g_i\}_{i \in [n]}$ and identity $e$, $(\mathbb{F}[G], \cdot)$ is an $n$-dimensional associative and unital $\mathbb{F}$-FDA with $1_{\mathbb{F}[G]} = \Psi(e)$. Also, $(\mathbb{F}[G], \cdot)$ is commutative if and only if $G$ is commutative. Moreover, the structure tensor of $\mathbb{F}[G]$ in $B 
= 
\{\Psi(g_i)\}_{i\in[n]}$ is given by $\ten{C}_{ijk} = \mathbb{1}_{\mathbb{F}}(g_i \circ g_j = g_k) \ \forall i, j, k \in [n]$; and we have $\Psi( g_i \circ g_j) = \ten{C} \times_1 \Psi(g_i) \times_2 \Psi(g_j) \ \forall  i, j \in [n]$. 
\vspace{-3mm}
\begin{proof} See Proposition~\ref{proposition:from_group_to_algebra_appendix}.
\end{proof}
\end{proposition}


\section{Learning Finite Dimensional Algebra}
\label{sec:grokking_fda}

In this section, we study supervised learning of multiplication in a finite-dimensional algebra (FDA). We show in the previous section that any $n$-dimensional FDA $\alg{A}$ over a field $\mathbb{F}$ with structure tensor $\ten{C}^* \in \mathbb{F}^{n \times n \times n}$ is isomorphic to $\mathbb{F}^n$ endowed with the bilinear product $(\u, \v) \mapsto \ten{C}^* \times_1 \u \times_2 \v \in \mathbb{F}^n$ (Proposition~\ref{prop:algebra_generate_by_C}). Given inputs $\x=(\u,\v) \in \mathbb{F}^n \times \mathbb{F}^n$, the supervised target is $\y^*(\x) = \ten{C}^* \times_1 \u \times_2 \v$.
The structure of $\ten{C}^*$ (e.g., rank, sparsity pattern) is inherited from $\alg{A}$ and can shape both sample complexity and the dynamics of generalization. \textbf{We propose to investigate how these properties impact generalization and the underlying mechanism by which they do so}.

\subsection{Grokking Regimes in FDAs}
\label{sec:grokking_regimes_FDA}

\citet{liu2023omnigrok} argue that grokking arises when task performance hinges on learning a useful representation. For algorithmic data~\citep{power2022grokking}, representation quality typically determines all-or-nothing accuracy (random guess vs.\ $100\%$), while on natural data (e.g., MNIST) representation quality may separate $95\%$ from $100\%$ accuracy. In our setting, this viewpoint yields a concrete mechanism: for a finite-field FDA ($\mathbb{F}=\mathbb{F}_p= \mathbb{Z}/p\mathbb{Z}$, $p$ prime), treat $\alg{A}$ as a vocabulary, index each element $\u\in\alg{A}$ by $\langle \u\rangle \in [q]$ with $q=|\alg{A}|$, and learn an embedding matrix $\E\in\mathbb{R}^{q\times d}$ so that $\E_{\langle \u\rangle}$ is the trainable vector attached to $\u$. Grokking corresponds to the point at which $\E$ and the downstream layers collectively represent the algebra, so a simple linear readout recovers the correct output token.

By contrast, for $\mathbb{F}=\mathbb{R}$, a model that operates directly on $(\u,\v)\in\mathbb{F}^n\times\mathbb{F}^n$ and parameterizes a (bi)linear map can often fit without a prolonged representation-formation phase; grokking (in the strict \guillemets{delayed generalization after memorization} sense) typically requires being induced via, e.g., large-scale initialization with small weight decay~\citep{liu2023omnigrok,lyu2024dichotomy} or by strongly misaligning the Neural Tangent Kernel with the target~\citep{kumar2024grokking}. Moreover,~\citet{levi2024grokking} and~\citet{notsawo2025grokking} document cases of \emph{``grokking without understanding''}: a sharp decrease in the test error during training, driven by
changes in the $\ell_2$-norm of the model parameters, but that does not result in convergence to an optimal solution.

From this analysis, we distinguish $\mathbb{R}$ from $\mathbb{F}_p$. 
The present paper focuses on the finite-field setting, where representation learning is intrinsic and grokking phenomena are most salient (Figure~\ref{fig:grokking_complex_numbers}).
We include a concise treatment of the real case below, where learning reduces to a linear inverse problem where explicit or implicit low-rank bias governs recovery, and defer rank/coherence details to the supplement.

\subsection{A Linear Inverse View for \texorpdfstring{$\mathbb{F}=\mathbb{R}$}{F=R}}
\label{sec:grokking_fda_F=R}

Let $\U,\V,\mat{Y}\in\mathbb{F}^{N\times n}$ be such that for each $s\in[N]$, $\mat{Y}_s = \ten{C}^* \times_1 \U_s \times_2 \V_s
= \ten{C}^*_{(3)}\!\left(\V_s \otimes \U_s\right)$. Stacking gives $\mat{Y}=\X \ten{C}^{*\top}_{(3)}$ with design $\X:=\V \bullet \U \in \mathbb{R}^{N\times n^2}$, where $\bullet$ is the face-splitting product
($\X_s=\V_s \otimes \U_s$). 
We recall that $\mat{Y}_s$ is the 
$s$th
row of $\mat{Y}$ treated as a column vector, same thing for $\U_s$ and $\V_s$. 
Finding another structure tensor  $\ten{C} \in \mathbb{F}^{n \times n \times n}$ such that $\mat{Y}_s = \ten{C} \times_1 \U_s \times_2 \V_s \ \forall s \in [N]$ is equivalent to solving the following linear equation in $\ten{C}$;
\begin{equation}
\label{eq:fda_as_compressed_sensing}
\X \ten{C}^{\top}_{(3)} = \mat{Y} = \X \ten{C}^{*\top}_{(3)}
\Longleftrightarrow
\M \b = \vect(\mat{Y}) = \M \b^*
\end{equation}
with $\M := \mathbb{I}_n \otimes \X \in \mathbb{F}^{Nn \times n^3}$ and $\b := \vect(\ten{C}^{\top}_{(3)}) 
\in \mathbb{F}^{n^3}$, similarly for $\b^*$.
This problem can be viewed as a matrix factorization problem with matrix $\ten{C}^{*\top}_{(3)} \in \mathbb{F}^{n^2 \times n}$ and measurement $(\U, \V)$, or a compressed sensing problem with signal $\b^*$ and measurement $\M$.
Thus, learning $\ten{C}^*$ from $(\U,\V,\mat{Y})$ is a linear inverse problem whose identifiability and sample complexity are governed by the conditioning of $\X=\V \bullet \U$ vis-à-vis the signal $\ten{C}^*_{(3)}$, and the effective low-rank structure of $\ten{C}^*_{(3)}$.
A convenient parameterization is a linear network $\ten{C}^{\top}_{(3)} = \mat{W}^{(L)} \cdots \mat{W}^{(1)}$ with $\mat{W}^{(L)}\in\mathbb{R}^{n^2\times d}$, $\mat{W}^{(i)}\in\mathbb{R}^{d\times d}$ ($1<i<L$), and $\mat{W}^{(1)}\in\mathbb{R}^{d\times n}$. For $L=1$, explicit low-rank regularization (e.g., nuclear norm) or appropriate sparsity penalties may be necessary when $N$ is sufficiently large \citep{candes2010power,notsawo2025grokking}. For $L\ge 2$, gradient descent exhibits an implicit bias toward low-rank solutions, enabling accurate recovery without explicit regularization in many regimes~\citep{gunasekar2017implicit,arora2018optimizationdeepnetworksimplicit,DBLP:journals/corr/abs-1905-13655,DBLP:journals/corr/abs-1904-13262,DBLP:journals/corr/abs-1909-12051,DBLP:journals/corr/abs-2005-06398,DBLP:journals/corr/abs-2012-09839}.
Beyond rank, recovery depends on \emph{coherence} of the left/right singular subspaces of $\ten{C}^{*\top}_{(3)}$ with the measurement matrix $\X$~\citep{candes2010power,candes2012exact,pmlr-v32-chenc14}. We summarize these implications for Equation ~\eqref{eq:fda_as_compressed_sensing} in Section~\ref{sec:grokking_fda_F=R_appendix}.

While Equation~\eqref{eq:fda_as_compressed_sensing} makes it straightforward to study recovery as a function of properties of $\ten{C}^*$, the same is not true for the induced algebra $\mathbb{R}[\ten{C}^*] = (\mathbb{R}^n, \times)$. Associativity, commutativity, and unitality correspond to polynomial constraints of measure zero in $\mathbb{R}^{n^3}$ (Proposition~\ref{proposition:algebra_generate_by_C_asso_commu_uni_condition}). Thus, a randomly sampled $\ten{C}^*$ almost surely defines a degenerate non-associative, non-commutative, non-unital product. This makes principled empirical study over $\mathbb{R}$ difficult\footnote{This limitation is a property of the algebraic landscape over $\mathbb{R}$, not our tensorial framework, which simply makes it explicit.
Structured (e.g., unital) $n\mathbb{R}$-FDA form a measure-zero subset among the set of all possible $n\mathbb{R}$-FDA. Thus, random sampling yields almost always degenerate algebras.} and motivates our focus on finite fields, where structured algebras occur with non-negligible probability.

\subsection{Finite Fields \texorpdfstring{$\mathbb{F} = \mathbb{Z}/p\mathbb{Z}$}{F=Z/pZ}}
\label{sec:grokking_fda_F=Zp/Z}

Over $\mathbb{F}=\mathbb{F}_p$, algebraic properties have positive density. For moderate $(n,p)$, the space of possibilities ($p^{n^3}$ tensors) is finite and can be systematically explored and stratified by properties such as associativity, commutativity, and unitality.   
Our experimental program is thus: (i) vary algebraic properties of $\mathbb{F}_p[\ten{C}^*]$ and measure their effect on grokking delay and generalization; (ii) study how structural features of $\ten{C}^*$ shape learning dynamics; and (iii) probe whether models learn algebraic representations during training. 


\section{Experiments and Results}
\label{sec:experiments_results}

\subsection{Experiment Setup}
\label{sec:experiments_setup}

From now on, we work over the finite field $\mathbb{F}=\mathbb{F}_p$ (prime $p$). Identify $\alg{A}\equiv \mathbb{F}^n$ and set $q:=|\alg{A}|=p^n$. Each element $\u\in\alg{A}$ is treated as a vocabulary symbol with index $\langle \u\rangle\in[q]$.  The models we use will associate to each of these symbols a trainable vector $\E_{\langle \u \rangle} \in \mathbb{R}^d$, with $\E \in \mathbb{R}^{V \times d}$, $V=q+2$ the vocabulary size, $2$ for the special tokens $\mathcal{S} = \left\{ \times, \texttt{=} \right\}$.

We train the model using a classification approach. The logits for $\x = (\u, \times, \v, \texttt{=})$ with $(\u, \v)  \in \alg{A}^2$ are given by $y_\theta(\x) = \varphi \left(\phi \left( \E_{\langle \u \rangle} \oplus \E_{\langle \times \rangle} \oplus \E_{\langle \v \rangle} \oplus \E_{\langle \texttt{=} \rangle}  \right) \right) \in \mathbb{R}^{q}$, where $\oplus$ is the vector concatenation, $\phi$ the encoder and $\varphi$ the classifier\footnote{The tokens $\times$ and $\texttt{=}$ are not strictly necessary. Training the classifier on the simpler input pair $(\u, \v)$ yields qualitatively identical conclusions. We included for consistency with the formulation in~\citep{power2022grokking}, but the results do not rely on them.}. 
The dataset $\mathcal{D}=\{( (\u, \times, \v, \texttt{=}) ,\y^*(\u, \v) \mid (\u, \v)  \in \alg{A}^2 \}$ is randomly partitioned into two disjoint and non-empty sets $\mathcal{D}_{\text{train}}$ and $\mathcal{D}_{\text{test}}$, the training and the validation dataset respectively, following a ratio  $r := |\mathcal{D}_{\text{train}}| / |\mathcal{D}| \in (0, 1]$, which allows us to interpolate between different data-availability regimes.
The models are trained to minimize the average cross-entropy loss $\mathcal{L}_{\text{train}}(\theta) = \sum_{(\x, \y^*) \in \mathcal{D}_{\text{train}}} \ell \left( y_{\theta}(\x), \langle \y^* \rangle \right)$, with 
$\ell \left( \y, i \right)  = - \y_i + \log \left( \sum_{j} \exp\left( \y_j \right) \right)  \ \forall \y \in \mathbb{R}^q, i \in [q]$.
We denoted by $\mathcal{A}_{\text{train}}$ the corresponding accuracy ($\mathcal{L}_{\text{test}}$ and $\mathcal{A}_{\text{test}}$ on $\mathcal{D}_{\text{test}}$).

We use a linear classifier $\varphi(\z) = \b + \mat{W}\z \in \mathbb{R}^{q}$. The learnable parameters $\theta$ consist of ${\E, \mat{W}, \b}$ together with those of the encoder $\phi$. We consider three architectures for $\phi$: an MLP, an LSTM, and a Transformer (full details are provided in Appendix~\ref{sec:experiments_setup_appendix}).
For clarity, we present mainly the MLP results and some Transformer results in the following section. Additional results, along with experimental details, are deferred to Appendix~\ref{sec:experimentation_details}.

\subsection{Representation Learning}
\label{sec:representation_learning}

Over a finite alphabet, the model must construct a representation in which all $q^2$ algebra products are linearly separable by the classifier head, $q:=|\alg{A}|$. Early training can memorize subsets of products via brittle lookup-like features; only after the embedding geometry aligns with the algebraic structure does linear decoding succeed on the full combinatorial support, producing the characteristic delayed generalization of grokking~\citep{liu2023omnigrok}. 
To see this, assume that $(\alg{A}, \times)$ has a group structure 
and let $\rho: \alg{A} \to \mathbb{R}^{m \times m}$ be a faithful representation of $\alg{A}$. If the model $\x \to \mat{W} \phi(\x) $ encodes a pair $\x=(\u,\v)$ as $\phi(\x) = \rho(\u \times \v)$ and the linear classifier head stores the inverses $\rho(\w)^{-1} \ \forall \w \in \alg{A}$ in its weights, then the decoding rule recovers the correct product $\u \times \v$.  In other words, under these assumptions, the model is able to generalize to all inputs. 

\begin{proposition}
\label{prop:grokking_linear_sep}
Assume that $(\alg{A}, \times)$ has a group structure, and let $\rho : \alg{A} \to \mathbb{R}^{m \times m}$ be a faithful matrix representation of $\alg{A}$.  
Suppose a model $y_\theta(\x) = \mat{W} \phi(\x) \in \mathbb{R}^{q}$ encodes pairs $\x=(\u,\v) \in \alg{A}^2$ as $\phi(\u,\v) = \rho(\u \times \v)$ and decodes with a linear classifier parameterized by weights $\mat{W} \in \mathbb{R}^{q \times m^2}$ containing $\rho(\w)^{-1}$ for each $\w \in \alg{A}$.  
Then the predicted label satisfies $\argmax_{i \in [q]} y_\theta(\x)[i] = \u \times \v$. As a consequence, the classifier linearly separates all $q$ outputs.
\vspace{-3mm}
\begin{proof}
See Proposition~\ref{prop:grokking_linear_sep_appendix}.
\end{proof}
\end{proposition}

Thus, the classifier achieves exact decoding of the group product for all pairs $(\u,\v) \in \alg{A}^2$ once the embedding geometry aligns with the group structure.  
Unfortunately, algebras do not have a group structure. However, in Proposition~\ref{proposition:algebra_representation} we established a necessary and sufficient condition for verifying that a set of matrices represents an algebra, namely $\ten{R}_i \ten{R}_j = \sum_{k} \ten{C}^*_{ijk} \ten{R}_k \ \forall i, j \in [n]$. By setting $\vec{r}^{(i)} = \vect(\ten{R}_i) \in \mathbb{F}^{m^2} \ \forall i \in [n]$ and defining $\vec{r}^{(i)} \cdot \vec{r}^{(j)} := \vect(\ten{R}_i \ten{R}_j) = \sum_{k} \ten{C}^*_{ijk} \vec{r}^{(k)} \ \forall i, j \in [n]$ in $\alg{B} := \left\{ \sum_{i=1}^n \alpha_i \vec{r}^{(i)} \mid \alpha \in \mathbb{F}^n \right\}$, we obtain $\u \cdot \v =_{\alg{B}} \ten{C}^* \times_1 \u \times_2 \v \ \forall \u, \v \in \alg{B}$, where $=_{\alg{B}}$ is the equality when expressed in basis. We performed linear probing to check whether such a structure emerges in the features learned by the models.  

\begin{figure}[ht]
\vspace{-0.05in}
\centering
\subfigure[MLP]{\includegraphics[width=.95\linewidth]{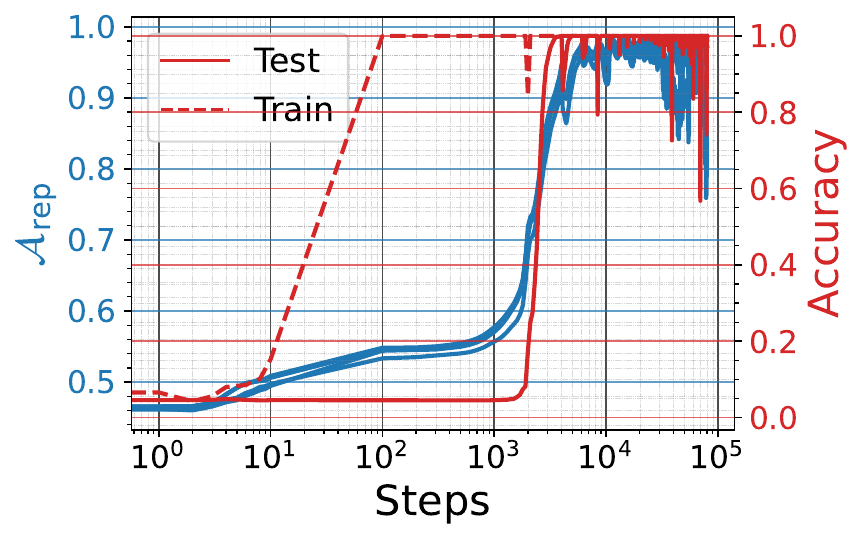}}
\subfigure[Transformer]{\includegraphics[width=.95\linewidth]{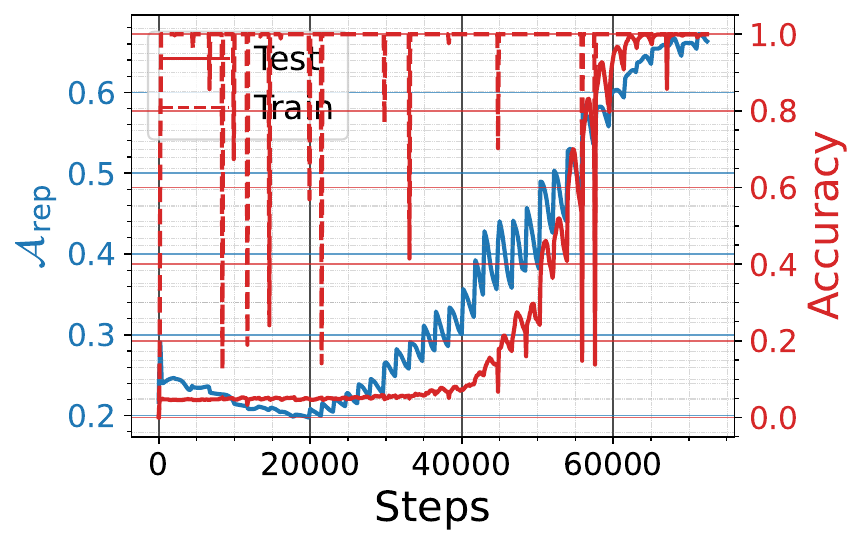}}
\caption{
Representation quality and generalization performances as a function of training steps. Before grokking $\mathcal{A}_{\text{rep}}$ remains relatively low and then sharply increases as the model groks.
}
\vspace{-0.1in}
\label{fig:performances_vs_representation_per_steps_mlp_gpt}
\end{figure}

\begin{figure}[ht]
\vspace{-0.1in}
\centering
\subfigure[MLP]{\includegraphics[width=1.\linewidth]{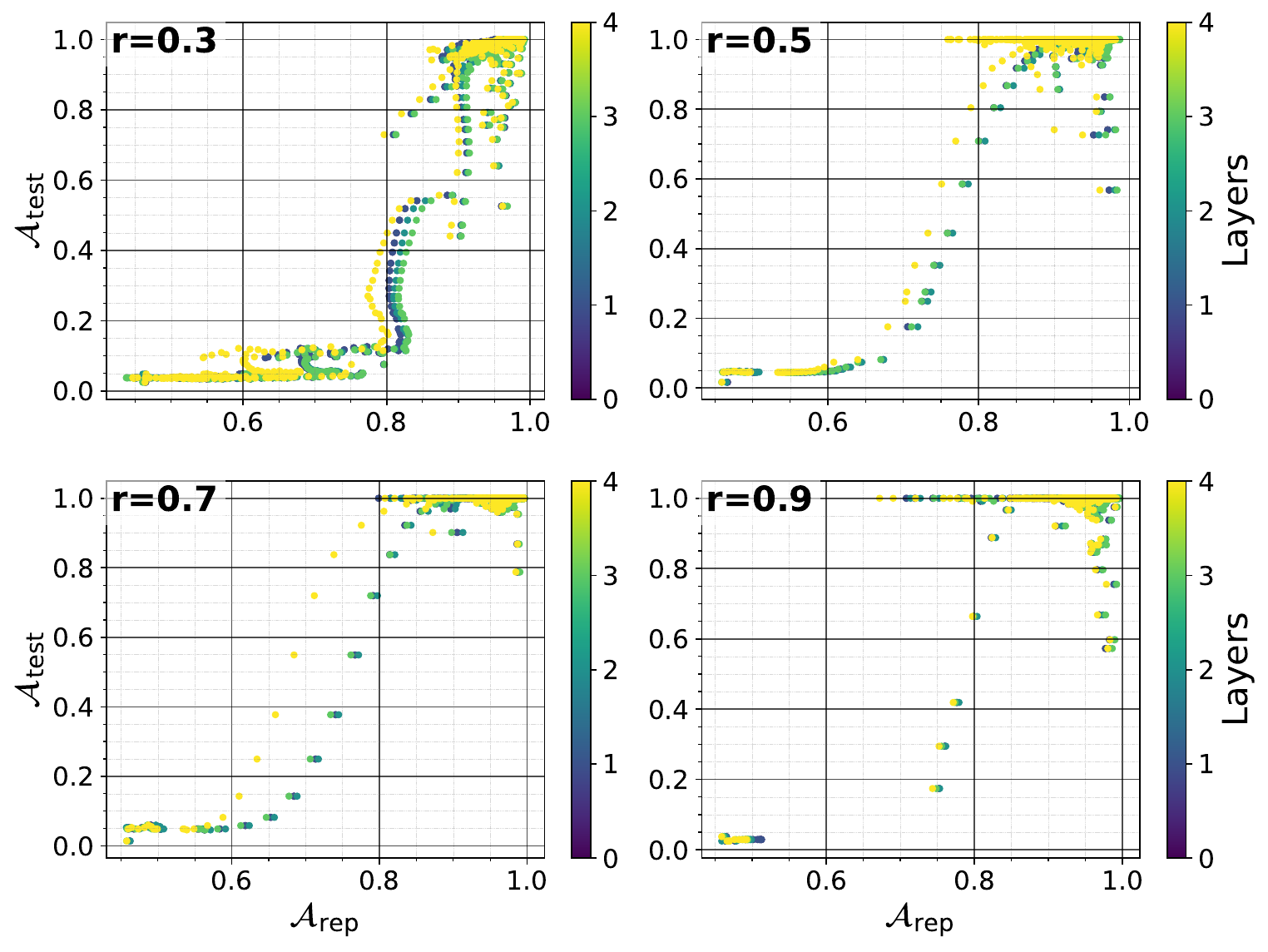}}
\subfigure[Transformer]{\includegraphics[width=1.\linewidth]{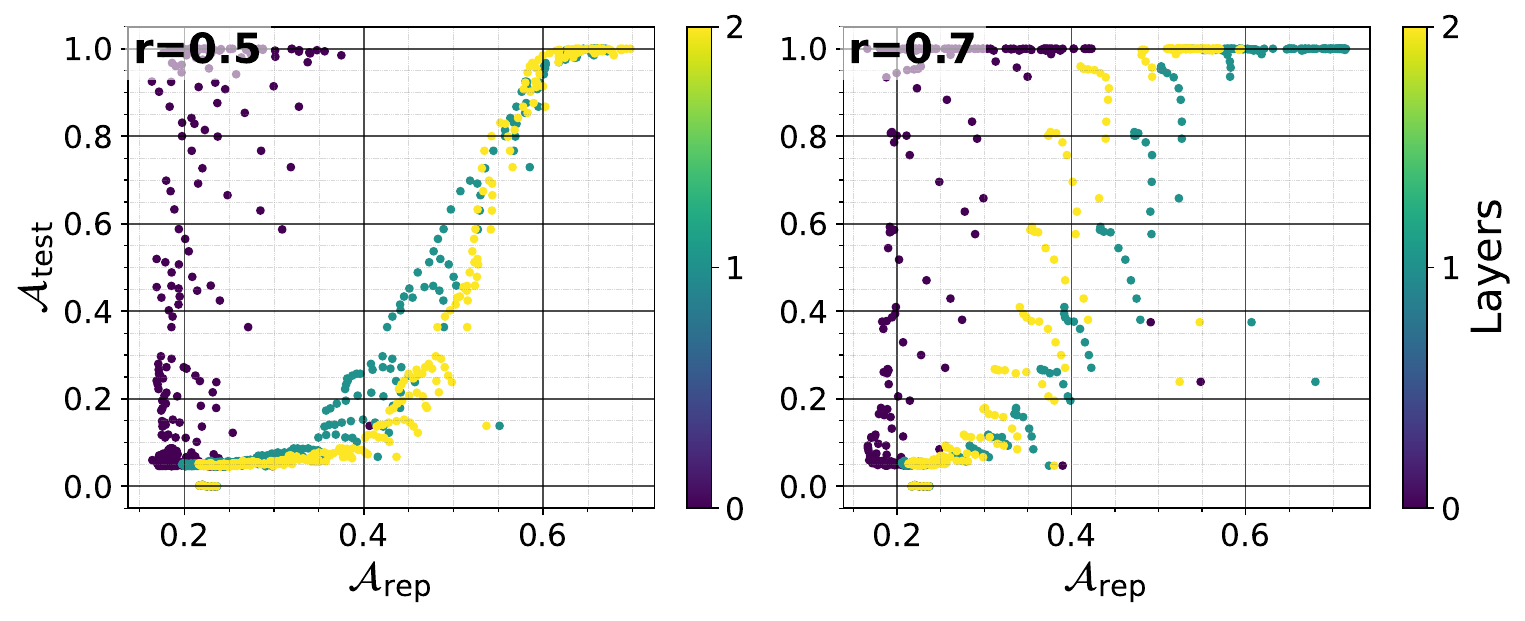}}
\caption{
Generalization accuracy as a function of representation quality for different training data size ($r$) and model layers ($0$ for first layer, $1$ for the second, etc.) for a MLP and a Transformer Encoder: $\mathcal{A}_{\text{test}}$ increases with $\mathcal{A}_{\text{rep}}$ in a low-data regime.
}
\vspace{-0.2in}
\label{fig:performances_vs_representation_mlp_gpt}
\end{figure}

More precisely, let $\f^{(\ell)}(\u) \in \mathbb{R}^{m^2}$ and $\f^{(r)}(\u) \in \mathbb{R}^{m^2}$ denote the feature vectors produced by the model for $\u \in \alg{A}$ when $\u$ appears on the left and on the right of an equation (i.e. when appearing as the left/right operand), respectively; and let $\vec{f}(\u) \in \mathbb{R}^{m^2}$ denote the feature vector of $\u$ produced by the unembedding layer (the classifier). 
We want to determine whether there exists $\ten{W} \in \mathbb{R}^{m^2 \times m^2 \times m^2}$ such that $\f(\u \times \v) = \hat{\f}_{\ten{W}}(\u, \v)$ for all $\u, \v \in \alg{A}$, where $\hat{\f}_{\ten{W}}(\u, \v) := \ten{W} \times_1 \f^{(\ell)}(\u) \times_2 \f^{(r)}(\v) 
$ is a bilinear model parameterized by $\ten{W}$. We use the 
cosine similarity  $\mathcal{A}_{\text{rep}} = \frac{1}{|\alg{A}^2|} \sum_{(\u,\v)}  { \langle \f(\u \times \v), \hat{\f}_{\ten{W}}(\u,\v) \rangle } / { \|\f(\u \times \v)\| \, \|\hat{\f}_{\ten{W}}(\u,\v)\| }$ 
associated with solving this problem (i.e., finding $\ten{W}$) as a measure of how algebra-like structure emerges in the model’s feature space.
The tensor $\ten{W}$ is obtained by minimizing a squared reconstruction error $\sum_{(\u,\v)} \left\| \f(\u \times \v) - \hat{\f}_{\ten{W}}(\u,\v) \right\|_2^2$. 
This is solved using ordinary least squares with $\ell_2$ regularization (more details in Sections~\ref{sec:representation_quality} and~\ref{sec:experimentation_details} of the Appendix).

As shown in Figure~\ref{fig:performances_vs_representation_per_steps_mlp_gpt}, before grokking $\mathcal{A}_{\text{rep}}$ remains relatively low and then sharply increases as the model groks. 
Figure~\ref{fig:performances_vs_representation_mlp_gpt} plots $\mathcal{A}_{\text{test}}$ as a function of $\mathcal{A}_{\text{rep}}$ across different model layers, training steps, and training dataset sizes. Each point is $(\mathcal{A}_{\text{rep}}^{(\ell)}, \mathcal{A}_{\text{test}})$ for a certain layer $\ell$.
We observe that $\mathcal{A}_{\text{test}}$ increases with $\mathcal{A}_{\text{rep}}^{(\ell)}$ in a low-data regime, showing that the emergence of algebra-consistent features directly facilitates generalization when data are scarce.
For Figures~\ref{fig:performances_vs_representation_per_steps_mlp_gpt} and ~\ref{fig:performances_vs_representation_mlp_gpt},  the algebra being learned is the algebra of complex numbers over $\mathbb{F}_7$.



\subsection{The Properties of the FDA on Generalization}
\label{sec:properties_fda_generalization}

We demonstrate in this section how algebraic structure (e.g., associativity) influences sample complexity, memorization time, and learning dynamics. 
For now on, we denote by $t_2$ (resp. $t_{4}$) the step at which the training (resp. test) accuracy first reaches $\approx 99\%$.

With the structure tensor $\ten{C}^*$ of complex numbers in $\mathbb{F}=\mathbb{Z}/7\mathbb{Z}$ (Example~\ref{eg:complex_number_body_paper}), we observed that the model exhibits grokking (Figure~\ref{fig:grokking_complex_numbers}), but in a manner highly sensitive to the specific entries of $\ten{C}^*$. 
A single random perturbation of one entry in $\ten{C}^*$ can dramatically reduce/increase the grokking delay (Figure~\ref{fig:grokking_complex_numbers}). 
These observations suggest that the internal algebraic structure encoded by $\ten{C}^*$ exerts a direct influence on generalization.  

Using $(n,p)=(2,7)$, we generate $\ten{C}^*$ in $\mathbb{F}_p^{n\times n \times n}$. We have  $p^{n^3}=7^8=5764801$ possibilities for such $\ten{C}^*$. Let \texttt{$a$=$associative$}, \texttt{$c$=$commutative$}, \texttt{$u$=$unital$}. 
We have the following $2^3$ possible categories : $acu$, 
$ac\bar{u}$, 
$a\bar{c}u$, 
$\bar{a}cu$, 
$a\bar{c}\bar{u}$,
$\bar{a}c\bar{u}$,
$\bar{a}\bar{c}u$ 
and $\bar{a}\bar{c}\bar{u}$.
The bar on top of the $\Bar{symbols}$ represents negation, and the concatenation is the Boolean conjunction. For example, $ac\bar{u}$ represents the tensor algebras that are associative and commutative but non-unital.
We obtain 
$996 \ (0.017\%) \ acu$,
$1741(0.03\%) \ ac\bar{u}$,
$0 \ (0.0\%) \ a\bar{c}u$,
$0 \ (0.0\%) \ \bar{a}cu$,
$96 \ (0.002\%) \ a\bar{c}\bar{u}$,
$114912 \ (1.99\%) \ \bar{a}c\bar{u}$,
$0 \ (0.0\%) \ \bar{a}\bar{c}u$ 
and $5647056 \ (97.958\%) \ \bar{a}\bar{c}\bar{u}$.
We sampled $10$ tensors in each case (\texttt{category}). We varied $r = |\mathcal{D}_{\text{train}}| / |\mathcal{D}|$ in $\{0.2, 0.3, \dots, 0.9\}$. For each triplet $(\texttt{category}, \ten{C}^*, r)$, we repeated the experiment twice.

\begin{figure}[htp]
\vspace{-0.05in}
\begin{center}
\centerline{\includegraphics[width=\columnwidth]{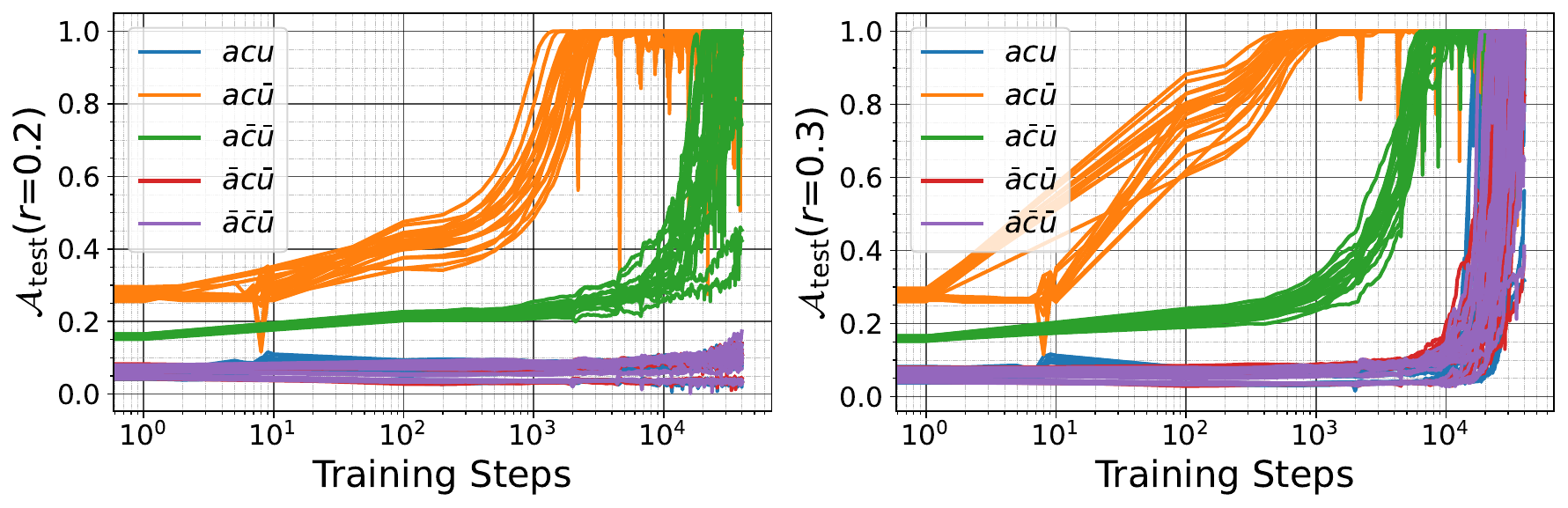}}
\caption{Evolution of the test accuracy $\mathcal{A}_{\text{test}}$ during training for different training data fraction $r \in \{0.2, 0.3\}$.
}
\label{fig:loss_and_acc_vs_steps_small}
\end{center}
\vspace{-0.2in}
\end{figure}

\begin{figure}[htp]
\vspace{-0.1in}
\begin{center}
\centerline{\includegraphics[width=\columnwidth]{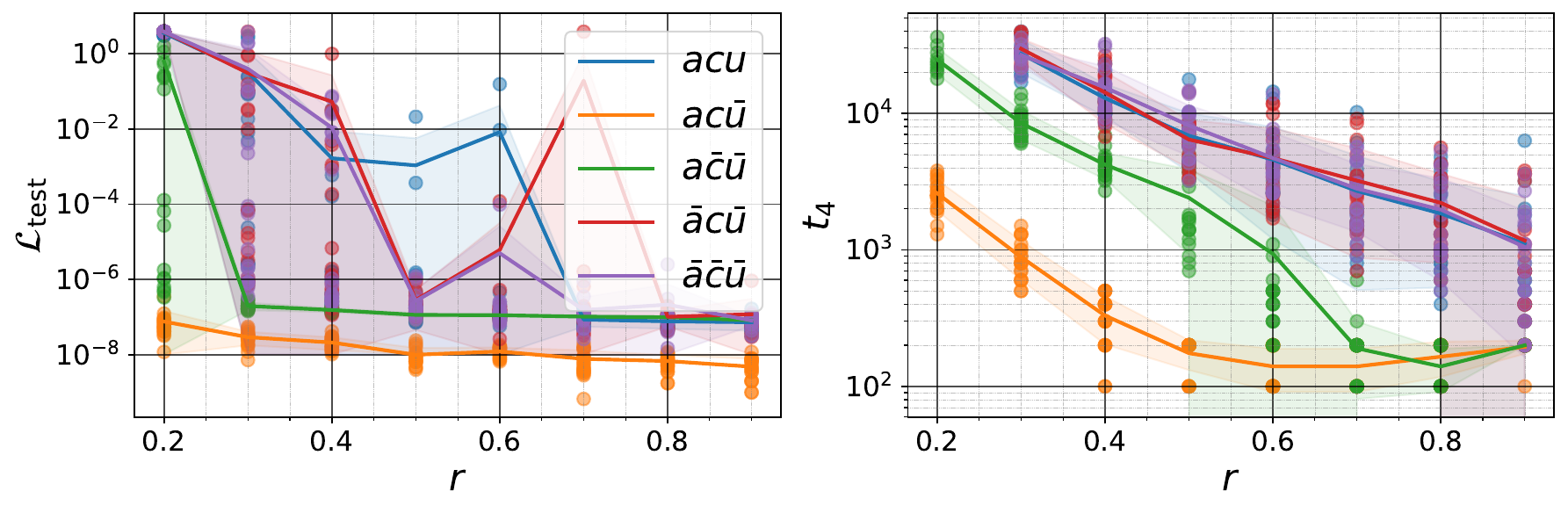}}
\caption{
Test loss $\mathcal{L}_{\text{test}}$ and grokking step $t_4$ as a function of training data fraction $r$. 
}
\label{fig:loss_test_and_t4_vs_r_train}
\end{center}
\vspace{-0.2in}
\end{figure}

Our results are summarized in Figures~\ref{fig:loss_and_acc_vs_steps_small} (for the other values of $r$, see Figure~\ref{fig:loss_and_acc_vs_steps}) and~\ref{fig:loss_test_and_t4_vs_r_train}. We varied $r$ to show that algebra-dependent effects persist across regimes. Small $r$ leads to memorization; intermediate $r$ yields delayed generalization; and this delay decreases as $r\to 1$. This matches known behavior on algorithmic tasks \citep{power2022grokking,notsawo2023predicting}.

We can observe that $ac\bar{u}$ algebras are consistently the easiest to learn (in terms of shorter grokking time and stronger generalization after grokking,  measured in terms of test loss), followed by $a\bar{c}\bar{u}$, with the remaining categories showing almost indistinguishable behavior.  This shows that the combination of non-unitality and associativity most strongly facilitates generalization.  
This finding is consistent with the previous discussion on representation learning. 
When the algebra is associative but non-unital ($a\bar{c}\bar{u}$ and $ac\bar{u}$), it admits two trivial representations\footnote{The probing from Section~\ref{sec:representation_learning} on representation learning does not target any specific example of representation. It tests whether the model learns a representation that satisfies the constraints similar to those of Proposition~\ref{proposition:algebra_representation}, up to a linear transformation.}, the zero representation and left regular representation given by the tensor $\ten{C}^*$ itself, $\ten{R}_i = \ten{C}^{*\top}_i \forall i \in [n]$ (Proposition~\ref{proposition:min_m_associative_fda}). 
Moreover, the representations constraints admit many solutions: if $(\ten{R}_i)_{i \in [n]}$ is a representation of dimension $m$, then for any $p\ge m$ and $\A\in \mathbb{F}^{p\times m}$, $\B\in \mathbb{F}^{m\times p}$ with $\B\A=\mathbb{I}_m$, 
$(\A \ten{R}_i \B)_{i \in [n]}$ is also a representation  (Proposition~\ref{proposition:conjugacy_invariance}). 
These provide simpler and more accessible solutions for the learning process, effectively lowering the complexity of the optimization landscape.  
In other words, removing the unit constraint allows the model to exploit “shortcuts” in the representation space, which explains why non-unital associative algebras generalize more quickly and grok earlier.

When the algebra is unital, the additional constraint $\sum_{i} \lambda_i \ten{R}_i = \mathbb{I}_m$ must be satisfied by its representations (Proposition~\ref{proposition:algebra_representation}). 
This requirement adds equations without adding degrees of freedom, drastically shrinking the feasible set and eliminating many representations, and
this typically delays convergence. In fact, it significantly restricts the solution space, since it forces the representation to encode an identity element across all transformations, thereby reducing the flexibility of optimization and leading to slower convergence. 
In representation theory, such unit constraints are well known to eliminate many otherwise valid low-dimensional representations, leaving only more rigid (and more complex to approximate) ones.

Among the two non-unital regimes ($ac\bar{u}$ and $a\bar{c}\bar{u}$), commutativity makes $ac\bar{u}$ even easier from representation learning perspective because it reduces the number of equations in ~\eqref{eq:equation_for_R_in_B_simple_1} from $n^2$ to $n(n+1)/2$, because $\ten{R}_i \ten{R}_j = \sum_{k} \ten{C}_{ijk} \ten{R}_k$ and $\ten{R}_j \ten{R}_i = \sum_{k} \ten{C}_{ijk} \ten{R}_k$ represent the same equation under commutativity, for all $i,j$. Another way to see this is to observe that the models have the form  $y(\u,\v) = \varphi(\phi(\u,\v))$, so that when $\phi$ learns a representation $\rho$ of the FDA, i.e. $\phi(\u,\v)=\rho(\u\times \v) \ \forall (\u,\v)$, then whenever $\u\times \v = \u'\times \v'$ we have identical predictions $y(\u,\v) = y(\u',\v')$. In a commutative FDA, $\u\times \v = \v\times \u$, so each product has two presentations $(\u,\v)$ and $(\v,\u)$. Once the classifier predicts one ordering correctly, it also predicts the swapped one, effectively reducing the number of distinct outputs it must separate. This symmetry makes $ac\bar{u}$ easier than $a\bar{c}\bar{u}$ and yields 
earlier transitions.

\subsection{The Properties of the Structure Tensor on Generalization}
\label{sec:properties_C_generalization}

We focused in this section on the direct properties of $\ten{C}^*$ that influence generalization. Indeed, while the algebraic constraints on the entries of $\ten{C}^*$ required to ensure properties such as associativity are well understood, there are other structural characteristics of $\ten{C}^*$ whose connection to the underlying algebra remains unclear, especially over finite fields.  
Here, we examined two such properties: the sparsity $s$ of $\ten{C}^*$ and the rank $r^{(3)}$ of the mode-3 unfolding $\ten{C}^*_{(3)}$, motivated by our earlier discussion in the real-valued setting. 

\begin{figure}[htp]
\vspace{-0.1in}
\begin{center}
\centerline{\includegraphics[width=\columnwidth]{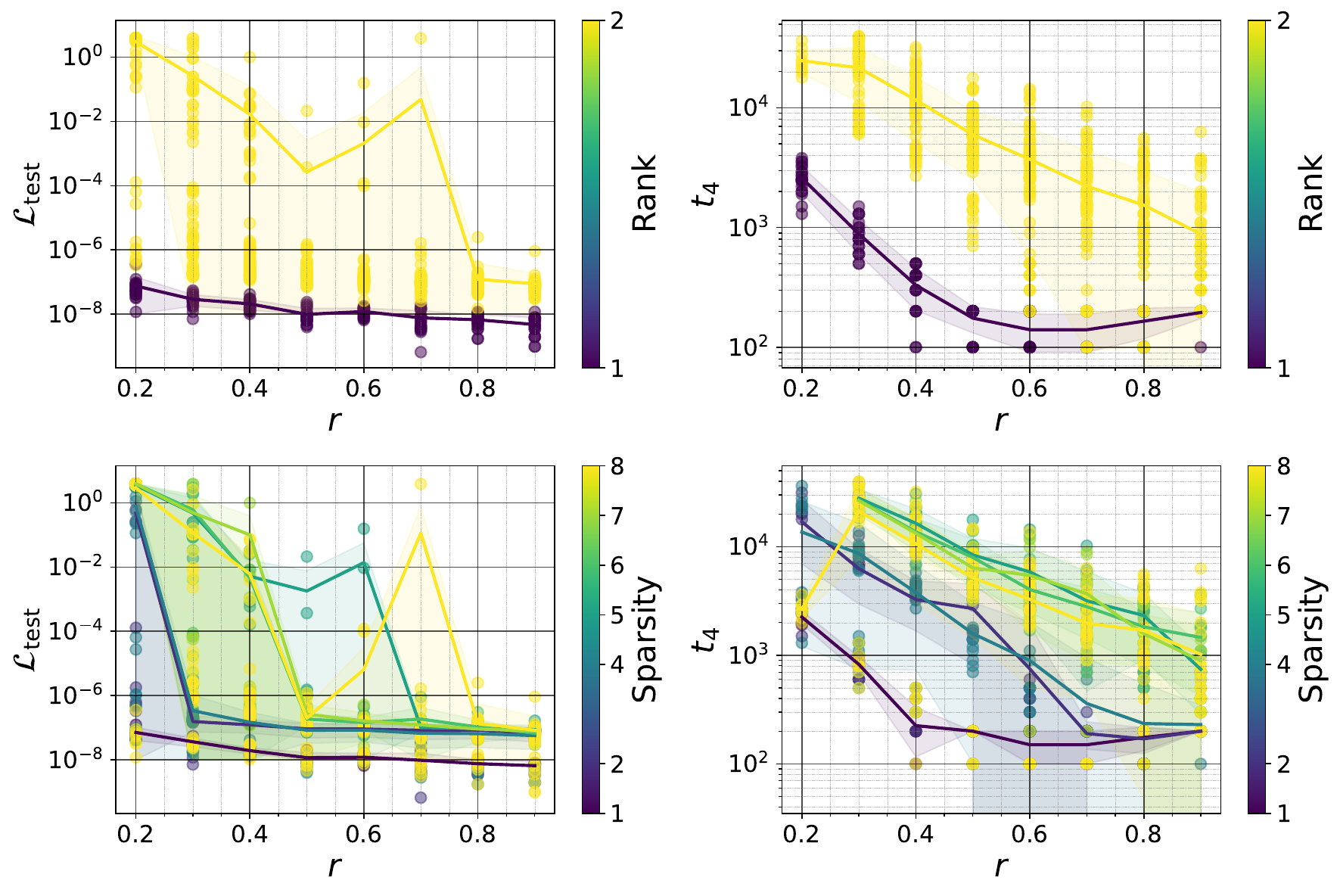}}
\caption{
Test loss and grokking step as a function of training data fraction $r$ for different $r^{(3)}=\rank(\ten{C}^*_{(3)})$ and sparsity level $s$.}
\label{fig:performances_vs_rank_and_sparisity}
\end{center}
\vspace{-0.2in}
\end{figure}

Our experiments show that both the test loss and the time to generalize increase with $r^{(3)}$ and $s$ (Figure~\ref{fig:performances_vs_rank_and_sparisity}), indicating that higher complexity in these dimensions hinders efficient generalization. This is also true for other mode-unfolding (Figure~\ref{fig:performances_vs_rank_and_sparisity_mode12}). 
Note that the rank is basis-invariant (Corollary~\ref{corollary:rank_basis_invariance}) and reflects the intrinsic multilinear complexity of the operation, which we claim is why increasing rank leads to longer grokking delays. 
Sparsity, on the other hand, is not basis-invariant. It reflects a genuine \guillemets{privileged basis} problem familiar in mechanistic interpretability~\citep{olsson2022incontextlearninginductionheads}, as the neural network sees raw coordinates, and these coordinates implicitly privilege certain representations. Our experiments, therefore, treat sparsity as a task-dependent difficulty parameter, naturally associated with each algebraic example, in the canonical basis. Our empirical claim is therefore conditional on the canonical basis used. 


\section{Discussion and Limitations}
\label{sec:discussion_limitation}

In the case of the Transformer Encoder (Figure \ref{fig:performances_vs_representation_per_steps_mlp_gpt}), $\mathcal{A}_{\text{rep}}^{(\ell)}$ does not change with $\mathcal{A}_{\text{test}}$ for layer $\ell=0$, the embedding layer. At first glance, this result contradicts previous work on group operations~\citep{power2022grokking,liu2022towards}, for which an emergence of structure is observed in the model’s embedding layer as it groks. We believe this difference is due to the embedding and unembedding layers being shared. More specifically, for layer 0, we seek $\ten{W}$ such that 
$\f(\u \times \v) 
= \ten{W} \times_1 \f(\u) \times_2 \f(\v)
= \ten{W}_{(3)} \big( \f(\v) \otimes \f(\u) \big)
$ for all $\u, \v \in \alg{A}$, where $f:  \alg{A} \to \mathbb{R}^{m^2}$ is the embedding layer. The fact that $\mathcal{A}_{\text{rep}}^{(0)}$ remains almost constant simply shows that such a structure does not emerge in our case, without ruling out the possibility that another structure might emerge, such as the emergence of low-rank embeddings. We leave further investigation for future works.

\begin{wrapfigure}{r}{0.255\textwidth}
\vspace{-0.15in}
\begin{center}
\includegraphics[width=1.\linewidth]{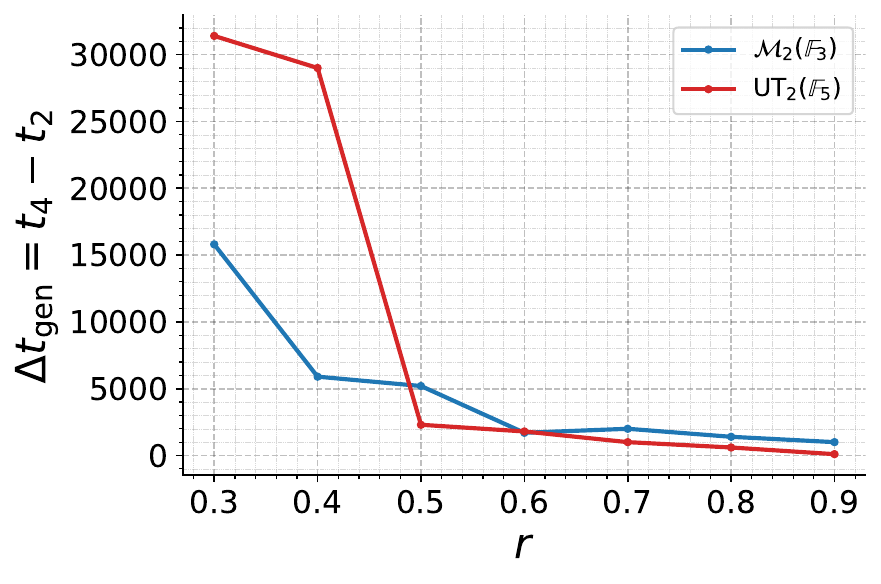}
\caption{Generalization delay $\Delta t_{\text{gen}}$ as a function of training data fraction $r \in (0, 1)$ for $\mathcal{M}_2(\mathbb{F}_3)$ and $\text{UT}_2(\mathbb{F}_5)$, 
the algebra of $2\times 2$ matrices ($n=4$) over $\mathbb{F}_3$ and th  $2\times 2$ upper triangular matrices ($n=3$) over $\mathbb{F}_5$, respectively.}
\label{fig:Delta_t_matrix_algebra}
\end{center}
\vspace{-0.2in}
\end{wrapfigure}

Also, restricting most experiments to $n=2$ over $\mathbb{F}_7$ limits the scope of the empirical evaluation.
The space of possible structure tensors grows rapidly, with $|\mathbb{F}_p^{n \times n \times n}| = p^{n^3}$. As a result, exploring this space becomes computationally challenging even for moderate values of $n$ and $p$. We therefore chose a reasonable $(n,p)$ to enable a controlled and systematic study of how algebraic properties affect generalization and grokking behavior.
Figure~\ref{fig:Delta_t_matrix_algebra} shows additional experiments with a non-trivial generalization delay on two associative, noncommutative, and unital ($a\bar{c}u$) algebras: the matrix algebras (which arise when $n$ is a perfect square) and the upper triangular matrix algebra, which are not captured in the $n=2$ setup. In fact, the minimum dimension $n$ to get an $a\bar{c}u$ algebra over any field $\mathbb{F}$ is $3$ (Proposition~\ref{proposition:minimal_dimension_asso_unital_noncommu}). 


Several directions remain open. Theoretically, a tighter link between tensor properties (e.g., rank, coherence) and sample complexity is needed over finite fields. Empirically, scaling to larger algebras and probing real-world tasks with latent algebraic structure could further clarify when and why grokking occurs.

\section{Conclusion}
\label{sec:conclusion}

We introduced a tensorial framework for studying grokking in finite-dimensional algebras, showing that learning multiplication reduces to analyzing properties of structure tensors. This formulation recovers groups as a special case and extends grokking analysis to non-associative, non-commutative, and non-unital algebras. Our experiments demonstrate that algebraic constraints, such as unitality, do not always simplify learning and may even lengthen grokking delays, while structural properties, like sparsity or low rank, often accelerate generalization. We observe that test performance improves only once model layers align with the algebraic multiplication, reinforcing the view of grokking as a representational phase transition.
These findings bridge classical algebra with modern learning theory: over real fields, FDA learning connects to low-rank matrix recovery, 
while over finite fields, grokking arises from the need to form internal representations. 



\section*{Acknowledgements}

We are grateful to David Kanaa for helpful conversations in the early stages of this work, and to Jonas Ngnawé for his feedback on the first draft of this paper. Pascal Tikeng sincerely acknowledges the support from the Canada Excellence Research Chairs (CERC) program, without which this work would not have been possible.
Guillaume Rabusseau acknowledges the support of the CIFAR AI Chair program. 
Guillaume Dumas was supported by the Institute for Data Valorization, Montreal and the Canada First Research Excellence Fund (IVADO; CF00137433), the Fonds de Recherche du Quebec (FRQ; 285289), the Natural Sciences and Engineering Research Council of Canada (NSERC; DGECR-2023-00089), and the Canadian Institute for Health Research (CIHR 192031; SCALE).

\section*{Impact Statement}

This paper aims to advance the field of Machine Learning by
improving the understanding of grokking in neural networks.
While the focus is on finite-dimensional algebra, our work
has practical implications for optimizing model training.
We acknowledge the potential ethical considerations of AI
technologies, but do not feel any specific issues need to be
highlighted here at this time.


\bibliography{grokkingFDA}
\bibliographystyle{icml2026}

\newpage
\appendix
\onecolumn

\addcontentsline{toc}{section}{TABLE OF CONTENTS} 


\noindent\textbf{Abstract} 

\noindent\textbf{1 Introduction} \hfill\pageref{sec:introduction}
\vspace{0.2\baselineskip}

\noindent\textbf{2 Finite Dimensional Algebra} \hfill\pageref{sec:background}
\vspace{0.2\baselineskip}

\hspace{1.5em}2.1 Definitions \hfill\pageref{sec:fda_definitions}
\vspace{0.2\baselineskip}

\hspace{1.5em}2.2 Structure Constants of FDA \hfill\pageref{sec:structure_tensor}
\vspace{0.2\baselineskip}

\hspace{1.5em}2.3 Representations of FDA \hfill\pageref{sec:representation_fda}
\vspace{0.2\baselineskip}

\noindent\textbf{3 From Groups to FDA} \hfill\pageref{sec:from_groups_to_fda}
\vspace{0.2\baselineskip}

\noindent\textbf{4 Learning Finite Dimensional Algebra} \hfill\pageref{sec:grokking_fda}
\vspace{0.2\baselineskip}

\hspace{1.5em}4.1 Grokking Regimes in FDAs \hfill\pageref{sec:grokking_regimes_FDA}
\vspace{0.2\baselineskip}

\hspace{1.5em}4.2 A Linear Inverse View for \texorpdfstring{$\mathbb{F}=\mathbb{R}$}{F=R} \hfill\pageref{sec:grokking_fda_F=R}
\vspace{0.2\baselineskip}

\hspace{1.5em}4.3 Finite Fields \texorpdfstring{$\mathbb{F} = \mathbb{Z}/p\mathbb{Z}$}{F=Z/pZ} \hfill\pageref{sec:grokking_fda_F=Zp/Z}
\vspace{0.2\baselineskip}

\noindent\textbf{5 Experiments and Results} \hfill\pageref{sec:experiments_results}
\vspace{0.2\baselineskip}

\hspace{1.5em}5.1 Experiment Setup \hfill\pageref{sec:experiments_setup}
\vspace{0.2\baselineskip}

\hspace{1.5em}5.2 Representation Learning \hfill\pageref{sec:representation_learning}
\vspace{0.2\baselineskip}

\hspace{1.5em}5.3 The Properties of the FDA on Generalization \hfill\pageref{sec:properties_fda_generalization}
\vspace{0.2\baselineskip}

\hspace{1.5em}5.4 The Properties of the Structure Tensor on Generalization \hfill\pageref{sec:properties_C_generalization}
\vspace{0.2\baselineskip}

\noindent\textbf{6 Discussion and Limitations} \hfill\pageref{sec:discussion_limitation}
\vspace{0.2\baselineskip}

\noindent\textbf{7 Conclusion} \hfill\pageref{sec:conclusion}
\vspace{0.2\baselineskip}

\noindent\textbf{A Notations and Useful Identities} \hfill\pageref{sec:notations}
\vspace{0.2\baselineskip}

\noindent\textbf{B Finite-Dimensional Algebra (FDA)} \hfill\pageref{sec:FDA_appendix}
\vspace{0.2\baselineskip}

\hspace{1.5em}B.1 Definitions \hfill\pageref{sec:fda_definitions_appendix}
\vspace{0.2\baselineskip}

\hspace{1.5em}B.2 Structure Constants of FDA \hfill\pageref{sec:structure_tensor_appendix}
\vspace{0.2\baselineskip}

\hspace{1.5em}B.3 Examples of FDA \hfill\pageref{sec:fda_examples}
\vspace{0.2\baselineskip}

\hspace{1.5em}B.4 From Groups to FDA: A Unified View \hfill\pageref{sec:from_groups_to_fda_appendix}
\vspace{0.2\baselineskip}

\hspace{1.5em}B.5 Representation of FDA \hfill\pageref{sec:representation_fda_appendix}
\vspace{0.2\baselineskip}

\noindent\textbf{C Learning Finite-Dimensional Algebra} \hfill\pageref{sec:grokking_fda_appendix}
\vspace{0.2\baselineskip}

\hspace{1.5em}C.1 A Linear Inverse View for \texorpdfstring{$\mathbb{F}=\mathbb{R}$}{F=R} \hfill\pageref{sec:grokking_fda_F=R_appendix}
\vspace{0.2\baselineskip}

\hspace{1.5em}C.2 Finite Fields \texorpdfstring{$\mathbb{F} = \mathbb{Z}/p\mathbb{Z}$}{F=Z/pZ} and Representation-Centric Modeling \hfill\pageref{sec:grokking_fda_F=Zp/Z_appendix}
\vspace{0.2\baselineskip}

\noindent\textbf{D Experiment Setup} \hfill\pageref{sec:experiments_setup_appendix}
\vspace{0.2\baselineskip}

\hspace{1.5em}D.1 Task Description and Model Architecture \hfill\pageref{sec:task_model}
\vspace{0.2\baselineskip}


\hspace{1.5em}D.2 Representation Learning \hfill\pageref{sec:representation_learning_appendix}
\vspace{0.2\baselineskip}


\noindent\textbf{E Experimentation Details and Additional Experiments} \hfill\pageref{sec:experimentation_details}
\vspace{0.2\baselineskip}



\section{Notations and Useful Identities}
\label{sec:notations}

We used the following standard notations.  

\begin{itemize}
     \item 

For $n \in \mathbb{N}^*$, $[n] := \{1, \dots, n\}$. For $m, n \in \mathbb{N}$ with $m \ge n$, $\llbracket n, m \rrbracket := \{n, n+1, \cdots, m\}$.

    \item 
Scalars are denoted by lowercase letters, e.g., $a$; vectors are denoted by boldface lowercase letters, e.g., $\a = [\a_i]_i$; matrices are denoted by boldface capital letters, e.g., $\A = [\A_{i,j}]_{i,j}$; higher-order tensors (order three or higher) are denoted by boldface Euler script letters, e.g., $\ten{A} = [\ten{A}_{i_1, i_2, i_3, \dots}]_{i_1, i_2, i_3, \dots}$.

     \item
The $i^{th}$ row (resp. column) of a matrix $\A$ will be denoted by $\A_{i,:}$ or simply $\A_{i}$ (resp. $\A_{:,i}$). This notation is extended to slices of a tensor straightforwardly. 
Commas in subscripts may occasionally be omitted whenever no ambiguity arises, e.g., $\A = [\A_{ij}]_{ij}$ and $\A_{i:}$.
Unless otherwise stated, indices of vectors, matrices, and tensors will start at $1$. When $0$-based indexing is used, it will be explicitly mentioned.

     \item
For a given dimension $n \in \mathbb{N}^*$, $\e^{(k)} \in \mathbb{R}^n$ is the $k^{th}$ vector of the canonical basis of $\mathbb{R}^n$, $\e^{(k)}_{l}=\delta_{kl} \forall l \in [n]$. Here, $\delta$ is the Kronecker delta function.
     \item
For a vector $\x \in \mathbb{R}^n$, $\|\x\|_0 = |\{i \in [n], \x_i \ne 0 \}|$, $\|\x\|_p = \left( \sum_{i=1}^n |\x_i|^{p} \right)^{\frac{1}{p}} \forall p \in (0, \infty)$ and $\|\x\|_\infty = \max_{i \in [n]} |\x_i|$.

     \item
For a matrix $\A \in \mathbb{R}^{m \times n}$, the operator $\vect(\A) \in \mathbb{R}^{mn}$ stacks the column of $\A$ in a vector, i.e. $(\vect (\A))_{(j-1)m+i}=\A_{ij}$ for all $(i, j) \in [m] \times [n]$. We have $\| \A \|_{\text{F}} := \|\vect(\A)\|_2$.

     \item
For the operations on vectors and matrices, we denote by  $\odot$ the Hadamard product, $\otimes$ the Kronecker product, $\star$ the Khatri-Rao product, $\bullet$ the face-splitting product, and $\circ$ the outer product. For $n$ vectors $\a^{(i)} \in \mathbb{R}^{m_i} \ \forall i \in [n]$, $\left(\a^{(1)} \circ \cdots \circ \a^{(n)} \right)_{i_1, \cdots, i_n} = \a^{(1)}_{i_1} \cdots \a^{(n)}_{i_n} \ \forall (i_1, \cdots, i_n) \in [m_1] \times \cdots \times [m_n]$.

\end{itemize}

Let $\ten{T} \in \mathbb{R}^{d_1 \times d_2 \times d_3}$.
\begin{itemize}
    \item For a vector $\a \in \mathbb{R}^{d_1}$, the matrix $\A = \ten{T} \times_1 \a \in \mathbb{R}^{d_2 \times d_3}$ is such that $\A_{jk} = \sum_{i=1}^{d_1} \ten{T}_{ijk} \a_i = \ten{T}_{:,j,k}^\top \a  \ \forall (j,k) \in [d_2] \times [d_3]$, that is $\A = \sum_{i=1}^{d_1} \a_i \ten{T}_{i,:,:}$. The matrices $\ten{T} \times_2 \a \in \mathbb{R}^{d_1 \times d_3}$ and $\ten{T} \times_3 \a \in \mathbb{R}^{d_1 \times d_2}$ are define similarly, for $\a \in \mathbb{R}^{d_2}$ and $\a \in \mathbb{R}^{d_3}$ respectively: $\ten{T} \times_2 \a = \sum_{j=1}^{d_2} \ten{T}_{:,j,:} \a_j$ and $\ten{T} \times_3 \a = \sum_{k=1}^{d_3} \ten{T}_{:,:,k} \a_k$.
    
    So, for $\a \in \mathbb{R}^{d_1}$ and $\b \in \mathbb{R}^{d_2}$, the vector $\c = \ten{T} \times_1 \a \times_2 \b  = \A \times_2 \b = \A^\top \b \in \mathbb{R}^{d_3}$, with $\A = \ten{T} \times_1 \a \in \mathbb{R}^{d_2 \times d_3}$, is such that $\c_{k} = \a^\top \ten{T}_{:,:,k} \b = \A_{:,k}^\top \b \ \forall k \in [d_3]$, i.e. $\c = \sum_{i=1}^{d_1} \sum_{j=1}^{d_2} \a_i \b_j \ten{T}_{i,j,:}  = \sum_{j=1}^{d_2} \b_j \A_{j,:}$.

As a special case, identifying matrices with tensors in $\mathbb{R}^{d_1 \times d_2 \times 1}$ gives
\begin{equation}
\label{eq:A_x_a}
\begin{split}
\left\{
    \begin{array}{ll}
        \A \times_1 \a = \A^\top \a  \in \mathbb{R}^{d_2} \\
        \A \times_2 \b = \A \b  \in \mathbb{R}^{d_1} 
    \end{array}
\right.
\forall \A \in \mathbb{R}^{d_1 \times d_2}, \a \in \mathbb{R}^{d_1}, \b \in \mathbb{R}^{d_2}
\end{split}
\end{equation}
    
    \item  For a matrix $\A \in \mathbb{R}^{m \times d_1}$, the tensor $\ten{A} = \ten{T} \times_1 \A \in \mathbb{R}^{m \times d_2 \times d_3}$ is such that $\ten{A}_{ijk} = \sum_{l=1}^{d_1} \A_{i l} \ten{T}_{l j k}  \ \forall (i,j,k) \in [m] \times [d_2] \times [d_3]$; or equivalently, $\ten{A}_{i,:,:} = \sum_{l=1}^{d_1} \A_{i l} \ten{T}_{l,:,:} \in \mathbb{R}^{d_2 \times d_3} \ \forall i \in [m]$. 

    Similarly, for $\A \in \mathbb{R}^{m \times d_2}$, $\ten{A} = \ten{T} \times_2 \A \in \mathbb{R}^{d_1 \times m \times d_3}$ is such that $\ten{A}_{ijk} = \sum_{l=1}^{d_2} \A_{j l} \ten{T}_{i l k}  \ \forall (i,j,k) \in [d_1] \times [m] \times [d_3]$; or equivalently, $\ten{A}_{:j:} = \sum_{l=1}^{d_2} \A_{j l} \ten{T}_{:,l,:} \in \mathbb{R}^{d_1 \times d_3}  \ \forall j \in [m]$;
    
     In the same way, for $\A \in \mathbb{R}^{m \times d_3}$, the tensor $\ten{A} = \ten{T} \times_3 \A \in \mathbb{R}^{d_1 \times d_2 \times m}$ is such that $\ten{A}_{ijk} = \sum_{l=1}^{d_3} \A_{k l} \ten{T}_{i j l} \ \forall (i,j,k) \in [d_1] \times [d_2] \times [m]$; or equivalently, $\ten{A}_{:,:,k} = \sum_{l=1}^{d_3} \A_{k l} \ten{T}_{:,:,l} \in \mathbb{R}^{d_1 \times d_2}  \ \forall k \in [m]$.

    So, for $\A^{(\ell)} \in \mathbb{R}^{m_\ell \times d_\ell} \ \forall \ell \in [3]$, the tensor $\ten{A} = \ten{T} \times_1 \A^{(1)} \times_2 \A^{(2)} \times_3 \A^{(3)} \in \mathbb{R}^{m_1 \times m_2 \times m_3}$ is such that 
\begin{equation}
\label{eq:T_A_A_A}
\ten{A}_{ijk} = \sum_{l=1}^{d_1} \sum_{r=1}^{d_2} \sum_{s=1}^{d_3} \ten{T}_{l r s}  \A^{(1)}_{il} \A^{(2)}_{jr} \A^{(3)}_{ks} \forall i,j,k 
\Longleftrightarrow
\ten{A} = \sum_{l=1}^{d_1} \sum_{r=1}^{d_2} \sum_{s=1}^{d_3} \ten{T}_{l r s} \A^{(1)}_{:,l} \circ \A^{(2)}_{:,r} \circ \A^{(3)}_{:s}
\end{equation}

    \item We denoted by $\ten{T}_{(1)} \in \mathbb{R}^{d_1 \times d_2d_3}$, $\ten{T}_{(2)} \in \mathbb{R}^{d_2 \times d_1d_3}$ and $\ten{T}_{(3)} \in \mathbb{R}^{d_3 \times d_1d_2}$ the mode-1,2,3 unfolding of $\ten{T}$, respectively. They are defined by $\left(\ten{T}_{(1)}\right)_{i, (j-1)d_3+k} = \left(\ten{T}_{(2)}\right)_{j, (k-1)d_1+i} = \left(\ten{T}_{(3)}\right)_{k, (j-1)d_1+i} = \ten{T}_{ijk} \ \forall (i,j,k) \in [d_1] \times [d_2] \times [d_3]$, or equivalently:
\begin{equation}
\label{eq:mode-unfolding_T}
\ten{T}_{(1)} =
\begin{bmatrix}
- \ \vect(\ten{T}_{1,:,:}^\top)  \ - \\
-  \ \vect(\ten{T}_{2,:,:}^\top) \ - \\
- \ \ \quad \cdots  \quad \quad - \\
-   \vect(\ten{T}_{d_1,:,:}^\top) -
\end{bmatrix}
\quad \quad
\ten{T}_{(2)} =
\begin{bmatrix}
-  \ \vect(\ten{T}_{:,1,:}) \ - \\
-  \ \vect(\ten{T}_{:,2,:}) \ - \\
- \ \ \quad \cdots  \quad \quad - \\
-  \vect(\ten{T}_{:,d_2,:}) -
\end{bmatrix}
\quad \quad
\ten{T}_{(3)} =
\begin{bmatrix}
- \ \vect(\ten{T}_{:,:,1}) \ - \\
-  \ \vect(\ten{T}_{:,:,2})  \ - \\
-  \ \ \quad \cdots  \quad \quad - \\
-  \vect(\ten{T}_{:,:,d_3})  -
\end{bmatrix}
\end{equation}


The following identities follow from these definitions using standard tensor and Kronecker-product manipulations.

\begin{itemize}

    \item For all $\ell \in [3]$, we have 
\begin{equation}
\label{eq:AT_l}
\A \ten{T}_{(\ell)} = (\ten{T} \times_\ell \A)_{(\ell)} \ \forall \A \in \mathbb{R}^{m \times d_\ell}
\end{equation}

    \item We have
\begin{equation}
\label{eq:T_A_B_mode}
\begin{split}
& (\ten{T} \times_2 \A \times_3 \B)_{(1)} = \ten{T}_{(1)} \left( \A \kron \B \right)^\top \in \mathbb{R}^{d_1 \times m_2 m_3} \ \forall \A \in \mathbb{R}^{m_2 \times d_2}, \B \in \mathbb{R}^{m_3 \times d_3}
\\ & (\ten{T} \times_1 \A \times_3 \B)_{(2)} = \ten{T}_{(2)} \left( \B \kron \A \right)^\top \in \mathbb{R}^{d_2 \times m_1 m_3} \ \forall \A \in \mathbb{R}^{m_1 \times d_1}, \B \in \mathbb{R}^{m_3 \times d_3}
\\ & (\ten{T} \times_1 \A \times_2 \B)_{(3)} = \ten{T}_{(3)} \left( \B \kron \A \right)^\top \in \mathbb{R}^{d_3 \times m_1 m_2} \ \forall \A \in \mathbb{R}^{m_1 \times d_1}, \B \in \mathbb{R}^{m_2 \times d_2}
\end{split}
\end{equation}

    \item For all $\ell \in [3]$, we have $\ten{T} \times_\ell \A \times_\ell \B = \ten{T}$ for all $\A, \B \in \mathbb{R}^{d_\ell \times d_\ell}$ such that $\B \A = \mathbb{I}_{d_\ell}$.

    \item We have 
\begin{equation}
\label{eq:vect_Ta}
\begin{split}
& \vect\left((\ten{T} \times_1 \a)^\top\right) = \ten{T}_{(1)}^\top \a \ \forall \a \in \mathbb{R}^{d_1}, 
\\& \vect(\ten{T} \times_2 \a) = \ten{T}_{(2)}^\top \a \ \forall \a \in \mathbb{R}^{d_2}
\\& \vect(\ten{T} \times_3 \a) = \ten{T}_{(3)}^\top \a \ \forall \a \in \mathbb{R}^{d_3}
\end{split}
\end{equation}

    \item For $\a \in \mathbb{R}^{d_1}$ and $\b \in \mathbb{R}^{d_2}$, we have
\begin{equation}
\label{eq:operation_algebra_with_kronecker_product_notations}
\ten{T} \times_1 \a \times_2 \b 
= \left[ \a^\top \ten{T}_{::k} \b \right]_{k \in [n]}^\top 
= \left[ \vect (\a^\top \ten{T}_{::k} \b) \right]_{k \in [n]}^\top
= \left[ \left( \b \otimes \a \right)^\top \vect( \ten{T}_{::k}) \right]_{k \in [n]}^\top 
= \ten{T}_{(3)} \left( \b \otimes \a \right)
\end{equation}

    \item If we CP-decompose $\ten{T}$ as $\ten{T} = \llbracket \A, \B, \C \rrbracket := \sum_{\ell=1}^R \A_{:,\ell} \circ \B_{:,\ell} \circ \C_{:,\ell}$ with $\A \in \mathbb{R}^{d_1 \times R}$, $\B \in \mathbb{R}^{d_2 \times R}$ and $\C \in \mathbb{R}^{d_3 \times R}$ the three mode loading matrices, then $\ten{T}_{(1)} = \A(\B \star
    \C)^\top$, $\ten{T}_{(2)} = \B(\C \star \A)^\top$, and $\ten{T}_{(3)} = \C(\B \star \A)^\top$. We recall that $\circ$ is the outer product, so that $\T_{ijk} = \sum_{\ell=1}^R (\A_{:,\ell} \circ \B_{:,\ell} \circ \C_{:,\ell})_{ijk} = \sum_{\ell=1}^R \A_{i\ell} \B_{j\ell} \C_{k\ell} \ \forall (i, j, k) \in [d_1]\times[d_2]\times[d_3]$.

    \item The CP-rank of $\ten{T}$, denoted $\rank_{\text{CP}}(\ten{T})$, is the smallest $R$ for which there exist $\A \in \mathbb{R}^{d_1 \times R}$, $\B \in \mathbb{R}^{d_2 \times R}$ and $\C \in \mathbb{R}^{d_3 \times R}$ such that $\ten{T} = \llbracket \A, \B, \C \rrbracket$. We always have $\rank_{\text{CP}}(\ten{T}) \le \min\{d_1d_2, d_1d_3, d_2d_3\}$, and when one of the tensor dimensions equals $1$, the CP-rank reduces to the usual matrix rank under the natural identification between matrices and third-order tensors with a singleton mode. For instance, if $d_3 = 1$, $\ten{T} \in \mathbb{R}^{d_1 \times d_2 \times 1}$ can be identified with a matrix $\ten{T}_{:,:,1} = \sum_{\ell=1}^R \C_{1,\ell} \A_{:,\ell}  \B_{:,\ell}^\top \in \mathbb{R}^{d_1 \times d_2}$, which is precisely a rank-$R$ matrix factorization of $\ten{T}_{:,:,1}$.  Consequently, $\rank_{\mathrm{CP}}(\ten{T}) = \rank(\ten{T}_{:,:,1})$.
    The same argument applies whenever $d_1=1$ or $d_2=1$.

\end{itemize}

\end{itemize}




\section{Finite-Dimensional Algebra (FDA)}
\label{sec:FDA_appendix}

We intentionally adopt a relatively broad and self-contained exposition in this section in order to make the paper accessible to a wider audience. Readers already familiar with the underlying algebraic background may safely skip most of the introductory material and proceed directly to the proofs of the propositions (many of which rely on standard or straightforward arguments), or move directly to the next section.

\subsection{Definitions}
\label{sec:fda_definitions_appendix}

A \textbf{monoid} is a set equipped with a binary operation that combines any two elements to form a third element of that set. Specifically, a monoid $(M,\cdot)$ must satisfy 
$a \cdot b \in M \ \forall a,b \in M$ (\textit{closure}), 
$(a \cdot b)\cdot c = a \cdot (b \cdot c) \ \forall a,b,c \in M$ (\textit{associativity}), 
and 
$\exists e \in M, \ e \cdot a = a \cdot e = a \ \forall a \in M$ (\textit{identity element}).

A \textbf{group} is a monoid in which every element admits an inverse. More precisely, a group $(G,\cdot)$ is a monoid such that 
$\forall a \in G, \exists b \in G, \ a \cdot b = b \cdot a = e$, 
where $e$ denotes the identity element of $G$. The element $b$ is called the inverse of $a$ and is usually denoted by $a^{-1}$.
A group $G$ is said to be abelian or commutative if  $a \cdot b = b \cdot a \ \forall a,b \in G$.
An additive group is a group where the group operation is typically thought of as addition. This terminology is most commonly used when the group elements are intuitively additive in nature, such as numbers or functions. 
The group operation is denoted by $+$, the identity element by $0$, and the inverse of an element $a$ is denoted by $-a$.
%
A multiplicative group is a group in which the group operation is typically understood as multiplication. This terminology is used especially when dealing with elements that naturally support a multiplicative structure, such as non-zero numbers, invertible matrices, and permutations.
The group operation is denoted by $*$ or simply by juxtaposition ($ab$), the identity element by $1$, and the inverse of an element \(a\) by $a^{-1}$. 

A \textbf{ring} $(R, +, *)$ is a structure consisting of a set equipped with two binary operations: addition ($+$) and multiplication ($*$), such that $(R, +)$ is an abelian (additive) group, $(R, *)$ is a (multiplicative) monoid, and multiplication is distributive over addition ($a * (b + c) = a * b + a * c$ and $(a + b) * c = a * c + b * c$, corresponding to the left and right distributivity respectively).
A ring is commutative if $(R, *)$ is a commutative monoid: $a * b = b * a$ for all $a, b \in R$.

\begin{example}
For $m \in \mathbb{N}^*$, let $\mathcal{M}_m(\mathbb{F})$ denote the set of $m \times m$ matrices with entries in $\mathbb{F}$. Equipped with matrix multiplication, $\mathcal{M}_m(\mathbb{F})$ forms a monoid. The subset $\mathrm{GL}_m(\mathbb{F}) \subset \mathcal{M}_m(\mathbb{F})$ of invertible matrices forms a group under matrix multiplication, the general linear group of degree $m$ over $\mathbb{F}$.
Moreover, endowed with the usual matrix addition and multiplication, $\mathcal{M}_m(\mathbb{F})$ forms a non-commutative ring.
\end{example}

A \textbf{field} $(\mathbb{F}, +, *)$  is a commutative ring in which every non-zero element has a multiplicative inverse, making $(\mathbb{F} \setminus \{0\}, *)$ an abelian group as well.

\begin{example}
For a prime integer $p$, $\mathbb{F}_p = \mathbb{Z}/p\mathbb{Z}$ endowed with modular arithmetic forms a finite field, the prime field of order $p$ (also known as the Galois field of order $p$). 
In general, for any integer $q$,
a finite field of order $q$ exists if and only if $q$ is a prime power (i.e., $q=p^k$ where $p$ is a prime number and $k$ is a positive integer). In this case, all fields of order $q$ are isomorphic to each other. 
In $\mathbb{F}_p$, the multiplicative inverse of an element $\alpha$ is the element $\beta$ (denoted $\alpha^{-1}$) such that $(\alpha \beta)\%p=1$, and can be computed using the extended Euclidean Algorithm. The result of $\alpha/\beta$ is $(\alpha \beta^{-1})\%p$. 
\end{example}

A \textbf{vector space} over a field $(\mathbb{F}, +, *)$ is a set $V$ equipped with two operations, vector addition ($V \times V \rightarrow V$) and scalar multiplication ($\mathbb{F} \times V \rightarrow V$), such that $V$ with vector addition forms an abelian group (with identity $0$ and inverses $-\v$) and scalar multiplication is associative and distributive over vector addition, i.e. for all $a,b\in\mathbb F$ and $\u,\v\in V$: $a(\u+\v)=a\u+a\v, \ (a+b)\u=a\u+b\u, \ (ab)\u=a(b\u), \ 1_{\mathbb F}\u=\u$.

A \textbf{module} over a ring $R$ generalizes vector spaces by allowing the scalars to come from a ring, not necessarily a field. The key differences are the absence of scalar inverses generally and the fact that the ring does not need to be commutative.


An \textbf{algebra} $\alg{A}$ over a field (resp. ring) $(\mathbb{F}, +, *)$ is a vector space (resp. module) equipped with a $\mathbb{F}$-bilinear product\footnote{i.e. a map that is linear in each argument separately: $(\alpha\u + \beta\v) \cdot \w = \alpha(\u \cdot \v) + \beta(\u \cdot \w)$ and $\w \cdot (\alpha\u + \beta\v) = \alpha(\w \cdot \u) + \beta(\w \cdot \v)$ for all $\u, \v, \w \in \alg{A}$, $\alpha,\beta \in \mathbb{F}$.} $\cdot: \alg{A}\times \alg{A} \to \alg{A}$ such that $(\u + \v) \cdot \w = \u \cdot \w + \v \cdot \w$ (right distributivity), $\w \cdot (\u + \v) = \w \cdot \u + \w \cdot \v$ (left distributivity) and $(\alpha\u) \cdot (\beta\v) = (\alpha*\beta) (\u \cdot \v)$ (compatibility with scalars) for all $\u, \v, \w \in \alg{A}$ and $\alpha, \beta \in \mathbb{F}$.
When $\cdot$ is associative (resp. commutative), the algebra is said to be associative  (resp. commutative). When the multiplicative identity exists, i.e., there exists an element $1_{\alg{A}} \in \alg{A}$ such that \( 1_{\alg{A}} \cdot \u = \u \cdot 1_{\alg{A}} = \u \ \forall \u \in \alg{A} \), the algebra is said to be unital.
The dimension of $\alg{A}$ is its dimension as a $\mathbb{F}$-vector space, denoted by $\text{dim}_{\mathbb{F}} \alg{A}$. 
We say that $\alg{A}$ is finite-dimensional if $n=\text{dim}_{\mathbb{F}} \alg{A}$ is finite. This means that a finite number of elements in the set can be combined linearly to express any element in the algebra, i.e., there exists a finite basis $\{\a^{(i)}\}_{i \in [n]}$ of $\alg{A}$ (as a vector space over $\mathbb{F}$) such that for every $\u \in \alg{A}$, $\u = \sum_{i=1}^{n} \alpha_i \a^{(i)}$ with $\alpha_i \in \mathbb{F} \ \forall i \in [n]$.

Let $(\alg{A}, \cdot)$ and $(\alg{B}, \times)$ be two $\mathbb{F}$-algebras. A homomorphism of $\mathbb{F}$-algebras $\phi : \alg{A} \to \alg{B}$ is a $\mathbb{F}$-linear map satisfying $\phi(\u\cdot\v) = \phi(\u)\times\phi(\v)$ for all $\u, \v \in \alg{A}$; and $\phi(1_{\alg{A}}) = 1_{\alg{B}}$ when $\alg{A}$ and $\alg{B}$ are unitals. 
The algebra homomorphism $\phi$ is called an isomorphism if it is bijective, or equivalently, if there exists an algebra homomorphism $\varphi: \alg{B} \to \alg{A}$, often denoted $\phi^{-1}$, such that $\varphi(\phi(\u)) = \u \ \forall \u \in \alg{A}$ and $\phi(\varphi(\v)) = \v \ \forall \v \in \alg{B}$. 
If further $\alg{B} = \alg{A}$, the isomorphism $\phi$ is called an automorphism of $\alg{A}$. 

Two $\mathbb{F}$-algebras $\alg{A}$ and $\alg{B}$ are isomorphic (or, informally, similar) if there exists an algebra isomorphism $\phi : \alg{A} \to \alg{B}$ between them. In other words, $\alg{A}$ and $\alg{B}$ may look different at the level of their elements or chosen bases, but they share exactly the same algebraic structure: the operations in one correspond perfectly to the operations in the other under $\phi$.



\subsection{Structure Constants of FDA}
\label{sec:structure_tensor_appendix}

Let $(\alg{A}, \cdot)$ be an $n\mathbb{F}$-FDA and $B = \{\a^{(i)}\}_{i \in [n]}$ a basis of $\alg{A}$ (as a vector space over $\mathbb{F}$). Then the structure of $\alg{A}$ in $B$ is entirely captured by the tensor $\ten{C}^{(B)} \in \mathbb{F}^{n \times n \times n}$ defined by
\begin{equation}
\label{eq:structure_tensor}
\a^{(i)} \cdot \a^{(j)} = \sum_{k=1}^{n} \ten{C}_{ijk}^{(B)} \a^{(k)}
\end{equation}
The elements of $\ten{C}^{(B)}$ are called structure coefficients/constants of $\alg{A}$ in $B$. We will call the tensor $\ten{C}^{(B)}$ the \textbf{structure tensor} of $\alg{A}$ in the basis $B$, and when the context is clear, we will omit $B$ from the notation.
That said, we have for all $\u = \sum_{i=1}^n \u_i \a^{(i)}$ and $\v = \sum_{i=1}^n \v_i \a^{(i)}$ in $\alg{A}$,
\begin{equation}
\label{eq:uv=Cuv}
\begin{split}
\u \cdot \v & 
= \sum_{i=1}^n \sum_{j=1}^n  \u_i \v_j (\a^{(i)} \cdot \a^{(j)})  
= \sum_{k=1}^n \left(  \sum_{i=1}^n \sum_{j=1}^n \ten{C}_{ijk}  \u_i \v_j \right) \a^{(k)} 
= \sum_{k=1}^n \left(  \ten{C} \times_1 \u \times_2 \v \right)_k \a^{(k)} 
\end{split}
\end{equation}
Using $(\u, \v)=(\a^{(i)}, \a^{(j)})$, we obtain $\ten{C}_{ijk} = \a^{(i)\top} \ten{C}_{::k} \a^{(j)}$ for all $i, j, k \in [n]$. Note that (see Equation~\eqref{eq:operation_algebra_with_kronecker_product_notations})
\begin{equation}
\ten{C} \times_1 \u \times_2 \v 
= \ten{C}_{(3)} \left( \v \otimes \u \right)
\end{equation}
with (Equation~\eqref{eq:mode-unfolding_T})
\begin{equation}
\ten{C}_{(3)}^\top :=
\begin{bmatrix}
|                   & |                   &        & |                   \\
\vect(\ten{C}_{::1}) & \vect(\ten{C}_{::2}) & \cdots & \vect(\ten{C}_{::n}) \\
|                   & |                   &        & |                   \\
\end{bmatrix}
\in \mathbb{F}^{n^2 \times n}
\end{equation}


The Proposition~\ref{prop:algebra_generate_by_C} states that for all $\ten{C} \in \mathbb{F}^{n \times n \times n}$, there exists an $n\mathbb{F}$-FDA whose structure tensor is $\ten{C}$ is some basis $B$; and that all $n\mathbb{F}$-FDA that have $\ten{C}$ as a structure tensor in one of their bases are isomorphic to each other. The proof of the first part is given by Lemma~\ref{lemma:algebra_generate_by_C} and the proof of the second part is given by Lemma~\ref{lemma:algebra_generate_by_C_iso_any_A}.

\begin{lemma}
\label{lemma:algebra_generate_by_C}
For a ring $\mathbb{F}$ (which can be a field),
let $\ten{C} \in \mathbb{F}^{n \times n \times n}$, and $\times : \mathbb{F}^n \times \mathbb{F}^n \to \mathbb{F}^n$ defined by $\u \times \v = \ten{C} \times_1 \u \times_2 \v$. Then $(\mathbb{F}^n, \times)$ is an $n\mathbb{F}$-FDA, and $\ten{C}$ its structure tensor in its canonical basis $\{\e^{(i)}\}_{i\in [n]}$.
\vspace{-3mm}
\begin{proof}
It is easy to check that this structure forms an algebra over $\mathbb{F}$. For the structure tensor, observe that $(\u \times \v)_{k} = (\ten{C} \times_1 \u \times_2 \v)_k  
= \sum_{=1}^n \sum_{j=1}^n \ten{C}_{ijk}  \u_i \v_j$ for all $\u, \v \in \mathbb{F}^n$. So $(\e^{(i)} \times \e^{(j)})_k = \ten{C}_{ijk} \ \forall i, j, k \in [n]$.  
\end{proof}
\end{lemma} 

\begin{lemma}
\label{lemma:algebra_generate_by_C_iso_any_A}
Let $(\alg{A}, \cdot)$ be an $n\mathbb{F}$-FDA with structure tensor $\ten{C} \in \mathbb{F}^{n \times n \times n}$ in a basis $B = \{\a^{(i)}\}_{i\in [n]}$, and $\{\e^{(i)}\}_{i\in [n]}$ be the canonical basis of $\mathbb{F}^n$.  
The linear map $\phi: \mathbb{F}^n \rightarrow \alg{A}$ defined by $\phi(\e^{(i)}) = \a^{(i)} \ \forall i \in [n]$ is an isomorphism (of vector spaces), and we have the relation $\phi(\u \times \v) = \phi(\u) \cdot \phi(\v) \ \forall \u, \v \in \mathbb{F}^n$.
As a consequence, by defining the product $\u \times \v = \ten{C} \times_1 \u \times_2 \v$ on $ \mathbb{F}^n$, $\phi$ becomes an algebra isomorphism between $(\mathbb{F}^n, \times)$ and $(\alg{A}, \cdot)$.
\vspace{-3mm}
\begin{proof}
It is easy to see that $\phi$ is an isomorphism of vector spaces. We also have for all $\u, \v \in \mathbb{F}^n$:
\begin{equation}
\begin{split}
\phi\left( \u \times \v \right)  & 
= \sum_{k=1}^n \left( \ten{C} \times_1 \u \times_2 \v \right)_k \phi(\e^{(k)})  
= \sum_{k=1}^n \left(  \sum_{i=1}^n \sum_{j=1}^n \ten{C}_{ijk} \u_i \v_j \right) \a^{(k)}  \\ &
= \sum_{i=1}^n \sum_{j=1}^n  \u_i \v_j \sum_{k=1}^n \ten{C}_{ijk} \a^{(k)}  \\ &
= \sum_{i=1}^n \sum_{j=1}^n \u_i \v_j (\a^{(i)} \cdot \a^{(j)})  
= \left( \sum_{i=1}^n \u_i \a^{(i)} \right) \cdot \left( \sum_{j=1}^n \v_j \a^{(j)} \right)  \\ &
= \left( \sum_{i=1}^n \u_i \phi(\e^{(i)}) \right) \cdot \left( \sum_{j=1}^n \v_j \phi(\e^{(j)}) \right)   \\ & 
= \phi\left( \sum_{i=1}^n \u_i \e^{(i)} \right) \cdot \phi\left( \sum_{j=1}^n \v_j \e^{(j)} \right)  
= \phi\left(\u\right) \cdot \phi\left(\v\right) 
\end{split}
\end{equation} 

\end{proof}
\end{lemma}

This shows that learning multiplication $\cdot$ in $(\alg{A}, \cdot)$ is equivalent to learning $\times$ in $\mathbb{F}[\ten{C}] := (\mathbb{F}^n, \times)$.
If we are working in an arbitrary basis $\{ \a^{(i)} \}_{i\in [n]}$ of $\mathbb{F}^n$ such that $\a^{(i)} = \sum_{k=1}^n \P_{ki} \e^{(k)} \forall i \in [n]$, we need to rescale $\ten{C}$ accordingly. 
\begin{proposition}
\label{proposition:algebra_generate_by_C_change_of_basis}
Let $\ten{C}$ and $\tilde{\ten{C}}$ be the structure tensors of an $n\mathbb{F}$-FDA $\alg{A}$ with respect to two bases $B = \{ \a^{(i)}) \}_{i \in [n]}$ and $\tilde{B} = \{ \tilde{\a}^{(i)} \}_{i \in [n]}$ respectively. If $\P \in \mathbb{F}^{n\times n}$ is the basis change matrix from $B$ to $\tilde{B}$, i.e. $\tilde{\a}^{(i)} =  \sum_{k=1}^n \P_{ki} \a^{(k)} \ \forall i \in [n]$, then we have $\tilde{\ten{C}} = \ten{C} \times_1 \P^\top \times_2 \P^\top \times_3 \P^{-1}$, or equivalently,  $\tilde{\ten{C}} \times_3 \P = \ten{C} \times_1 \P^\top \times_2 \P^\top$.
If further $B$ and $\tilde{B}$ are orthogonal basis, then $\tilde{\ten{C}} = \ten{C} \times_1 \P^\top \times_2 \P^\top \times_3 \P^\top$. 
\vspace{-3mm}
\begin{proof}
For all $i, j \in [n]$, we have
\begin{equation}
\begin{split}
\tilde{\a}^{(i)} \cdot \tilde{\a}^{(j)}
& = \left( \sum_{\ell=1}^n \P_{\ell i} \a^{(\ell)} \right) \cdot \left( \sum_{r=1}^n \P_{r j} \a^{(r)} \right) \\ & 
= \sum_{\ell=1}^n \sum_{r=1}^n  \P_{\ell i} \P_{r j} \left(\a^{(\ell)} \cdot \a^{(r)} \right) \\ &
= \sum_{\ell=1}^n \sum_{r=1}^n  \P_{\ell i} \P_{r j} \sum_{s=1}^n \ten{C}_{\ell r s} \a^{(s)} \\ & 
= \sum_{\ell=1}^n \sum_{r=1}^n  \P_{\ell i} \P_{r j} \sum_{s=1}^n \ten{C}_{\ell r s} \sum_{k=1}^n (\P^{-1})_{ks} \tilde{\a}^{(k)} \\ & 
= \sum_{k=1}^n \left(  \sum_{\ell, r, s=1}^n \ten{C}_{\ell r s}  (\P^\top)_{i\ell } (\P^\top)_{jr} ({\P^{-1}})_{ks} \right) \tilde{\a}^{(k)} \\ & 
= \sum_{k=1}^n \left( \ten{C} \times_1 \P^\top \times_2 \P^\top \times_3 \P^{-1} \right)_{ijk} \tilde{\a}^{(k)} \text{ (Equation~\eqref{eq:T_A_A_A})}
\end{split}
\end{equation}
So $\tilde{\ten{C}} = \ten{C} \times_1 \P^\top \times_2 \P^\top \times_3 \P^{-1}$. 
We have (see Equation~\eqref{eq:AT_l}) $(\tilde{\ten{C}} \times_3 \P)_{(3)} 
= \P \tilde{\ten{C}}_{(3)} 
= \P (\ten{C} \times_1 \P^\top \times_2 \P^\top \times_3 \P^{-1})_{(3)}
= \P \P^{-1} (\ten{C} \times_1 \P^\top \times_2 \P^\top )_{(3)}
= (\ten{C} \times_1 \P^\top \times_2 \P^\top )_{(3)}$. So $\tilde{\ten{C}} \times_3 \P = \ten{C} \times_1 \P^\top \times_2 \P^\top$.
If further $B$ and $\tilde{B}$ are orthogonal basis, then $\P^{-1} = \P^\top$. 
In fact, let $\B=[-\a^{(i)}-]_{i \in [n]}^\top \in \mathbb{F}^{n \times n}$ ($\a_i$ is the $i^{th}$ column of $\B$) and $\tilde{\B}=[-\tilde{\a}^{(i)}-]_{i \in [n]}^\top \in \mathbb{F}^{n \times n}$. We have $\tilde{\B} = \B \P$. Assuming $\B^\top \B = \mathbb{I}_n$, this give $\P = \B^\top \tilde{\B}$, so that $\P^\top \P = \mathbb{I}_n$ if $\B \B^\top = \tilde{\B}^\top \tilde{\B} = \mathbb{I}_n$ and $\P \P^\top = \mathbb{I}_n$ if $\tilde{\B} \tilde{\B}^\top = \B^\top \B =  \mathbb{I}_n$.
\end{proof}
\end{proposition}

As a consequence of this proposition, we get that the rank of the structure tensor of a FDA is basis-invariant: it is an intrinsic invariant of the algebra.
\begin{corollary}
\label{corollary:rank_basis_invariance}
For all $i \in [3]$, the rank of $\ten{C}^{(B)}_{(i)} \in \mathbb{F}^{n \times n^2}$ is independant from the basis $B$.
\vspace{-3mm}
\begin{proof}
If $\P$ is the basis change matrix from an arbitrary basis $B$ to another basis $\tilde{B}$, then $\ten{C}^{(\tilde{B})} = \ten{C}^{(B)} \times_1 \P^\top \times_2 \P^\top \times_3 \P^{-1}$ by Proposition~\ref{proposition:algebra_generate_by_C_change_of_basis}. This implies (see Equations~\eqref{eq:AT_l} and~\eqref{eq:T_A_B_mode}):
\begin{itemize}
    \item $\ten{C}^{(\tilde{B})}_{(1)} = \left( \ten{C}^{(B)} \times_1 \P^\top \times_2 \P^\top \times_3 \P^{-1} \right)_{(1)} = \P^\top \left( \ten{C}^{(B)} \times_2 \P^\top \times_3 \P^{-1} \right)_{(1)} = \P^{\top} \ten{C}^{(B)}_{(1)} (\P^\top \kron \P^{-1})^\top$
    \item $\ten{C}^{(\tilde{B})}_{(2)} = \left( \ten{C}^{(B)} \times_1 \P^\top \times_2 \P^\top \times_3 \P^{-1} \right)_{(2)} = \P^\top \left( \ten{C}^{(B)} \times_1 \P^\top \times_3 \P^{-1} \right)_{(2)} = \P^\top \ten{C}^{(B)}_{(2)} (\P^{-1} \kron \P^\top)^\top$ 
    \item $\ten{C}^{(\tilde{B})}_{(3)} = \left( \ten{C}^{(B)} \times_1 \P^\top \times_2 \P^\top \times_3 \P^{-1} \right)_{(3)} = \P^{-1} \left( \ten{C}^{(B)} \times_1 \P^\top \times_2 \P^\top \right)_{(3)} = \P^{-1} \ten{C}^{(B)}_{(3)} (\P \kron \P)$
\end{itemize}
In each case, $\ten{C}^{(\tilde{B})}_{(i)}$ is obtained from $\ten{C}^{(B)}_{(i)}$ by left- and right-multiplication with invertible matrices. Such operations preserve matrix rank, hence $\rank\left(\ten{C}^{(\tilde{B})}_{(i)}\right) = \rank\left(\ten{C}^{(B)}_{(i)}\right) \ \forall i \in [3]$.
\end{proof}
\end{corollary}

Sparsity, on the other hand, is not basis-invariant. But it
may be possible to find a basis of $\alg{A}$ in which $\ten{C}$ is as sparse as possible, making the problem of learning $\alg{A}$ simpler to study;
$\tilde{\ten{C}} \in  
\arg\min_{ \P \ne \mathbb{I}_n, \rank(\P) = n } \left\| \ten{C} \times_1 \P^\top \times_2 \P^\top \times_3 \P^{-1}
\right\|_{0}$.
For example, if we go from a basis $B = \{ \a^{(i)} \}_{i \in [n]}$ to its orthonormal equivalent $Q = \{ \q^{(i)} \}_{i \in [n]}$, then we have $\B=\Q\P$ under the QR decomposition, with $\B=[-\a^{(i)}-]_{i \in [n]}^\top \in \mathbb{F}^{n \times n}$ ($\a_i$ is the $i^{th}$ column of $\B$), $\Q =[-\q^{(i)}-]_{i \in [n]}^\top \in \mathbb{F}^{n \times n}$ orthogonal, and $\P \in \mathbb{F}^{n \times n}$ upper triangular. 
Note that  $\ten{C}_{ijk}^{(Q)} = \sum_{\ell, r, s} \ten{C}_{\ell r s}^{(B)} \P_{\ell i} \P_{r j} (\P^{-1})_{ks}$, with  $\P^{-1}$ also upper triangular. 

Although any tensor $\ten{C} \in \mathbb{F}^{n \times n \times n}$ defines an $n\mathbb{F}$-FDA, a key question is how the algebraic properties of $(\alg{A}, \cdot)$ are reflected in its structure of $\ten{C}$. The following proposition makes this connection explicit by characterizing associativity, commutativity, and the existence of a unit directly in terms of tensor equations.

\begin{proposition}
\label{proposition:algebra_generate_by_C_asso_commu_uni_condition_appendix}
Let $(\alg{A}, \cdot)$ be an $n\mathbb{F}$-FDA with structure tensor $\ten{C} \in \mathbb{F}^{n \times n \times n}$ in $B = \{\a^{(i)}\}_{i\in [n]}$. $\alg{A}$ is 
\vspace{-0.2cm}
\begin{itemize}
    \item[(i)] associative if and only if $\sum_{k=1}^n \ten{C}_{ijk} \ten{C}_{klm} = \sum_{k=1}^n \ten{C}_{ikm} \ten{C}_{jlk} \ \forall i, j, l, m \in [n]$; 
    \item[(ii)] commutative if and only if $\ten{C}$ is symmetric is its first two modes, i.e. $\ten{C}_{ij:} = \ten{C}_{ji:}  \ \forall i,j\in [n]$, or equivalently, $\ten{C}_{::k}^{\top} = \ten{C}_{::k} \forall k \in [n]$; 
    \item[(iii)] unital if  and only if  there exist $\lambda \in \mathbb{F}^n$ such that $\ten{C} \times_1 \lambda = \mathbb{I}_n = \ten{C} \times_2 \lambda$, that is $\sum_{i=1}^{n} \lambda_i \ten{C}_{i,:,:} = \mathbb{I}_n = \sum_{i=1}^{n} \lambda_i \ten{C}_{:,i,:}$, or equivalently, $\ten{C}_{(1)}^\top \lambda = \vect (\mathbb{I}_n) = \ten{C}_{(2)}^\top \lambda$. In this case, $1_{\alg{A}} = \sum_{i=1}^n \lambda_i \a^{(i)}$, with $\lambda \in \mathbb{F}^n$ the unique solution of the system of equations $\ten{C}_{(1)}^\top \lambda = \vect (\mathbb{I}_n) = \ten{C}_{(2)}^\top \lambda$.
\end{itemize}
\vspace{-3mm}
\begin{proof}
(i) For associative FDA, the multiplication must satisfy $(\a^{(i)} \cdot \a^{(j)}) \cdot \a^{(l)} = \a^{(i)} \cdot (\a^{(j)} \cdot \a^{(l)}) \ \forall i,j,l$.
Expanding this using the structure constants (Equation~\eqref{eq:structure_tensor}), we require $\sum_{m=1}^n \left( \sum_{k=1}^n \ten{C}_{ijk} \ten{C}_{klm} \right) \a^{(m)} = \sum_{m=1}^n \left( \sum_{k=1}^n \ten{C}_{ikm} \ten{C}_{jlk} \right) \a^{(m)} \ \forall i,j,l \in [n]$.

(ii) For commutative FDA, the multiplication must satisfy $\a^{(i)} \cdot \a^{(j)} = \a^{(j)} \cdot \a^{(i)} \ \forall i,j\in [n] \Longleftrightarrow \sum_{k=1}^{n} \ten{C}_{ijk} \a^{(k)} = \sum_{k=1}^{n} \ten{C}_{jik} \a^{(k)} \ \forall i,j\in [n]  \Longleftrightarrow \ten{C}_{ij:} = \ten{C}_{ji:}  \ \forall i,j\in [n]$.

(iii) First, consider the algebra $(\mathbb{F}[ \ten{C}], \times)$ generated by $\ten{C}$. Suppose $(\mathbb{F}[ \ten{C}], \times)$ is unital, i.e. there exists $\lambda \in \mathbb{F}^n$ such that $\u \times \lambda = \u = \lambda \times \u $ for all $\u \in \mathbb{F}^n$. Then, expanding using Equation~\eqref{eq:uv=Cuv}, wet get $\ten{C} \times_1 \u \times_2 \lambda = \u =   \ten{C} \times_1 \lambda \times_2 \u \ \forall \u \in \mathbb{F}^n$, which is equivalent to $\ten{C} \times_2 \lambda = \mathbb{I}_n = \ten{C} \times_1 \lambda$. The direction  $\Longleftarrow$ of this equivalence is obvious, considering the fact that $\mathbb{I}_n \times_1 \u = \mathbb{I}_n^\top \u = \u$ and $\mathbb{I}_n \times_2 \u = \mathbb{I}_n \u = \u$ for all $\u \in \mathbb{F}^n$ (Equation~\eqref{eq:A_x_a}). The direction $\Longrightarrow$ follows by taking $\u$ as a $k^{\text{th}}$ canonical vector of $\mathbb{F}^n$, for all $k \in [n]$.
We also have 
\begin{equation}
\begin{split}
\ten{C} \times_1 \lambda = \mathbb{I}_n = \ten{C} \times_2 \lambda 
& \Longleftrightarrow (\ten{C} \times_1 \lambda)^\top = \mathbb{I}_n = \ten{C} \times_2 \lambda
\\ & \Longleftrightarrow \vect\left((\ten{C} \times_1 \lambda)^\top\right) = \vect(\mathbb{I}_n) = \vect(\ten{C} \times_2 \lambda)
\\ & \Longleftrightarrow \ten{C}_{(1)}^\top \lambda = \vect (\mathbb{I}_n) = \ten{C}_{(2)}^\top \lambda \text{ (Equation~\eqref{eq:vect_Ta})}
\end{split}
\end{equation}

Now, suppose there exist $\lambda, \tilde{\lambda} \in \mathbb{F}^n$ such that $\ten{C} \times_1 \lambda = \mathbb{I}_n = \ten{C} \times_2 \lambda$ and $\ten{C} \times_1 \tilde{\lambda} = \mathbb{I}_n = \ten{C} \times_2 \tilde{\lambda}$. From $\ten{C} \times_1 \lambda = \mathbb{I}_n$, we get $\ten{C} \times_1 \lambda \times_2 \tilde{\lambda} = \mathbb{I}_n \times_2 \tilde{\lambda} = \tilde{\lambda}$; and from $\mathbb{I}_n = \ten{C} \times_2 \tilde{\lambda}$, we get $\lambda = \mathbb{I}_n \times_1 \lambda = \ten{C} \times_2 \tilde{\lambda} \times_1 \lambda$. So $\lambda = \ten{C} \times_1 \lambda \times_2 \tilde{\lambda} =  \tilde{\lambda}$. Therefore, when a solution $\lambda$ to $\ten{C}_{(1)}^\top \lambda = \vect (\mathbb{I}_n) = \ten{C}_{(2)}^\top \lambda$ exists, it is unique, and thus equals $1_{\mathbb{F}[ \ten{C}]}$.
Finally, since $(\mathbb{F}[ \ten{C}], \times)$ and $(\alg{A}, \cdot)$ are isomorphic by the linear map $\phi: \mathbb{F}^n \rightarrow \alg{A}$ defined by $\phi(\e^{(i)}) = \a^{(i)} \ \forall i \in [n]$ (Lemma~\ref{lemma:algebra_generate_by_C_iso_any_A}), if $(\mathbb{F}[ \ten{C}], \times)$ is unital with $1_{\mathbb{F}[ \ten{C}]} = \sum_{i=1}^n \lambda_i \e^{(i)}$, then $(\alg{A}, \cdot)$ is also unital with $1_{\alg{A}} 
= \phi(1_{\mathbb{F}[ \ten{C}]})
= \sum_{i=1}^n \lambda_i \phi(\e^{(i)}) 
= \sum_{i=1}^n \lambda_i \a^{(i)}$.
\end{proof}
\end{proposition}

The corollary of (iii) in this proposition tells us how we can get the unit element of $(\mathbb{F}[ \ten{C}], \times)$ in $B = \{\e^{(i)}\}_{i\in [n]}$ when we only have access to $\ten{C}$ and the condition of its existence.
\begin{corollary}
\label{corollary:algebra_generate_by_C_uni_condition}
Let $(\alg{A}, \cdot)$ be an $n\mathbb{F}$-FDA with structure tensor $\ten{C} \in \mathbb{F}^{n \times n \times n}$ in $B = \{\a^{(i)}\}_{i\in [n]}$.  Let $\A := \left[ \ten{C}_{(1)}, \ten{C}_{(2)} \right]^\top \in \mathbb{F}^{2n^2 \times n}$ and $\b := \left[\vect(\mathbb{I}_n), \vect(\mathbb{I}_n) \right]^\top \in \mathbb{F}^{2n^2}$. $\alg{A}$ is unital if and only if $\rank_\mathbb{F}(\A) = \rank_\mathbb{F} \left( \left[\A | \b \right] \right)$, where $\left[\A | \b \right] \in \mathbb{F}^{2n^2 \times (n+1)}$ is the augmented matrix of $\A$ with $\b$ added as a column. 
In this case, $1_{\alg{A}} = \sum_{i=1}^n \lambda_i \a^{(i)}$, with $\lambda$ the unique solution to $\A \lambda = \b$.
\vspace{-3mm}
\begin{proof} 
We need to check that there exists a joint solution to the two $\lambda$ linear equations 
$\ten{C}_{(k)}^\top \lambda = \vect(\mathbb{I}_n)$,
$k \in \{1, 2\}$, that is $\A \lambda = \b$. The system has $2n^2$ equations for $n$ variables, and solutions are cosets of $\ker(\A)$. We just need to check the system's consistency (in $\mathbb{F}$, this precision is important). 
\end{proof}
\end{corollary}
We can also solve the system $\A \lambda = \b$ to find the coordinates of the unit element of $\alg{A}$ in $B = \{\a^{(i)}\}_{i\in [n]}$. It all depends on the field we are in. This can be done quite easily in $\mathbb{R}$ or $\mathbb{C}$. The system $\A\lambda=\b$ has solutions if and only if  $\A\A^{\dagger}\b=\b$, and the solutions are the vectors $\lambda=\A^{\dagger}\b+\left(\mathbb{I}-\A^{\dagger}\A\right)\u$ for an arbitrary conformable vector  $\u$.
Here, $\A^{\dagger}$ is the pseudo-inverse of $\A$ : $\A\A^{\dagger}\A=\A$ and $\A^{\dagger}\A\A^{\dagger}=\A^{\dagger}$.

In a field like $\mathbb{F}_p$, a given solution of the system is not guaranteed to be in $\mathbb{F}$, and solving the system is not easy. 
In fact, in this field, everything is modulo $p$, for example, Equation~\eqref{eq:uv=Cuv} should be read 
$\left( \sum_{i} \u_i \a^{(i)} \right) \cdot \left( \sum_{i}  \v_i  \a^{(i)} \right)  = \sum_{k} \left( \left(  \sum_{i,j} \ten{C}_{ijk} \u_i \v_j \right) \% p \right) \a^{(k)}$, and the system, $\left(\A \lambda \right) \% p = \b$. The multiplicative inverse of an element $\alpha$ is the element $\beta$ (denoted $\alpha^{-1}$) such that $(\alpha \beta)\%p=1$, and can be computed using the extended Euclidean Algorithm. The result of $\alpha/\beta$ is $(\alpha \beta^{-1})\%p$. So, these details must be considered if we opt for Gaussian Elimination to solve the system. 

It should be noted that associative, unital, noncommutative algebras only exist for $n\ge3$ over any field $\mathbb{F}$.

\begin{proposition}
\label{proposition:minimal_dimension_asso_unital_noncommu}
For any field $\mathbb{F}$, the smallest integer $n \in \mathbb{N}^*$ for which there exists an associative, unital, noncommutative $n\mathbb{F}$-FDA is $n=3$.
Equivalently, (i) every associative and unital $n\mathbb{F}$-FDA with $n \le 2$ is commutative; (ii) and there exists an associative, unital, noncommutative $3\mathbb{F}$-FDA.
\vspace{-3mm}
\begin{proof}
We split the proof into two parts.

\textbf{(i) Nonexistence for $n \le 2$.}

For $n=1$, let $(\alg{A},\cdot)$ be a unital $1\mathbb{F}$-FDA. Since $\dim_{\mathbb{F}}(\alg{A})=1$, the unit element $1_{\alg{A}}$ forms a basis of $\alg{A}$. Therefore $\alg{A} = \{ \lambda 1_{\alg{A}} , \lambda \in \mathbb{F}\} $. Hence, for all $\u=\lambda 1_{\alg{A}}$ and $\v=\mu 1_{\alg{A}}$ in $\alg{A}$, we have 
\begin{equation}
\u\cdot \v 
= (\lambda 1_{\alg{A}})\cdot (\mu 1_{\alg{A}})
= \lambda \mu (1_{\alg{A}}\cdot 1_{\alg{A}})
= \lambda \mu 1_{\alg{A}}
= \mu \lambda 1_{\alg{A}}
= \v\cdot \u
\end{equation}
So $\alg{A}$ is commutative.
Now let $n=2$, and let $(\alg{A},\cdot)$ be an associative and unital $2\mathbb{F}$-FDA. Since $\dim_{\mathbb{F}}(\alg{A})=2$ and $1_{\alg{A}}\neq 0$, there exists $\e \in \alg{A}$ such that $B=\{1_{\alg{A}}, \e\}$ is a basis of $\alg{A}$. Since $\alg{A}$ is closed under multiplication, there exist $\alpha,\beta \in \mathbb{F}$ such that $\e^2=\alpha 1_{\alg{A}}+\beta \e$.
Let $\u=a1_{\alg{A}}+b\e$ and $\v=c1_{\alg{A}}+d\e$ be two arbitrary elements of $\alg{A}$, with $a,b,c,d \in \mathbb{F}$. Then 
\begin{equation}
\begin{split}
\u\cdot\v 
& = (a1_{\alg{A}}+b\e)\cdot(c1_{\alg{A}}+d\e)
\\ & = ac 1_{\alg{A}}\cdot 1_{\alg{A}} +ad 1_{\alg{A}}\cdot \e +bc \e\cdot 1_{\alg{A}} +bd \e^2
\\ &= ac\,1_{\alg{A}}+ad\e+bc\e+bd(\alpha 1_{\alg{A}}+\beta \e)
\\ &= (ac+\alpha bd)1_{\alg{A}}+(ad+bc+\beta bd)\e
\end{split}
\end{equation}
The right-hand side is symmetric in $(a,b)$ and $(c,d)$, since $ac+\alpha bd = ca+\alpha db$ and $ad+bc+\beta bd = cb+da+\beta db$. Therefore $\u\cdot \v = \v\cdot \u \ \forall \u, \v \in \alg{A}$.
So every associative and unital $2\mathbb{F}$-FDA is commutative.


\textbf{(ii) Existence for $n=3$.}

Consider
\[
\alg{A}
= \textsc{UT}_2(\mathbb{F})
:=
\left\{
\begin{pmatrix}
a & b\\
0 & c
\end{pmatrix}
\ :\ a,b,c \in \mathbb{F}
\right\},
\]
equipped with the usual matrix multiplication. This is the algebra of upper triangular $2\times2$ matrices over $\mathbb{F}$.
We have $\alg{A} = \text{span}_{\mathbb{F}} \left\{ \a^{(1)}, \a^{(2)}, \a^{(3)} \right\}$ where
\begin{equation}
\a^{(1)}=
\begin{pmatrix}
1 & 0\\
0 & 0
\end{pmatrix},
\quad
\a^{(2)}=
\begin{pmatrix}
0 & 1\\
0 & 0
\end{pmatrix},
\quad
\a^{(3)}=
\begin{pmatrix}
0 & 0\\
0 & 1
\end{pmatrix}
\end{equation}
These three matrices are linearly independent, so $\dim_{\mathbb{F}}(\alg{A})=3$.
Since matrix multiplication is bilinear and associative, $(\alg{A},\cdot)$ is an associative $3\mathbb{F}$-FDA. It is also unital, with unit $1_{\alg{A}}= \a^{(1)}+\a^{(3)}$.
Finally, $\alg{A}$ is not commutative, because $\a^{(1)} \a^{(2)} = \a^{(2)} \ne 0 = \a^{(2)} \a^{(1)}$.
\end{proof}
\end{proposition}


\subsection{Examples of FDA}
\label{sec:fda_examples}

Examples~\ref{eg:complex_number},~\ref{eg:dual_number} and \ref{eg:polynomials} are associative, commutative and unital. Example~\ref{eg:matrices} is associative and unital but not commutative.
The algebra of quaternions is associative and unitary but non-commutative. The algebra of octonions is unitary but neither associative nor commutative.
Lie algebras (Example~\ref{eg:lie_algebra}) are not associative, not commutative, nor unital.

\begin{example}[Complex Numbers]
\label{eg:complex_number}
$\mathbb{F} = \mathbb{R}$, $n=2$ and $\alg{A} = \mathbb{C} = \left(\mathbb{F}^2, \cdot \right)$. The product $\cdot$ is defined as $(a+b\icomplex) \cdot (c+d\icomplex) := (ac-bd) + (ad+bc) \icomplex$ with $\icomplex^2=-1$.
The structure tensor of $\alg{A}$ in its canonical basis $(\a^{(1)}, \a^{(2)}) \equiv (1, \icomplex)$ is 
\begin{equation}
\label{eq:T_complex}
\begin{bmatrix}
\ten{C}_{111} & \ten{C}_{112} \\
\ten{C}_{121} & \ten{C}_{122} \\
\end{bmatrix},
\begin{bmatrix}
\ten{C}_{211} & \ten{C}_{212} \\
\ten{C}_{221} & \ten{C}_{222} \\
\end{bmatrix}
= \begin{bmatrix}
1 & 0 \\
0 & 1 \\
\end{bmatrix},
\begin{bmatrix}
0 & 1 \\
-1 & 0 \\
\end{bmatrix}
\end{equation}
since  $\a^{(1)} \cdot \a^{(1)} = \a^{(1)} \Longrightarrow \ten{C}_{11:} = [1, 0]$, $\a^{(1)} \cdot \a^{(2)} = \a^{(2)} \cdot \a^{(1)} = \a^{(2)} \Longrightarrow \ten{C}_{12:} = \ten{C}_{21:} = [0, 1]$,  and $\a^{(2)} \cdot \a^{(2)} = -\a^{(1)} \Longrightarrow \ten{C}_{22:} = [-1, 0]$. 
In the field $\mathbb{F}=\mathbb{Z}/p\mathbb{Z}$ for $p$ prime, we have $\icomplex = (-1)\%p = p-1$, so
$\ten{C}_{221} = p-1$. 
\end{example}

In the same way, we can also calculate the structure constants of quaternions ($n=4$) and octonions ($n=8$).

\begin{example}[Dual Numbers]
\label{eg:dual_number}
$\alg{A} = \left(\mathbb{F}^2, \cdot \right)$ with product $\cdot$ is defined as $(a+b\varepsilon) \cdot (c+d\varepsilon) := ac + (ad+bc) \varepsilon$, where $\varepsilon^2=0$.
The structure tensor of $\alg{A}$ in its canonical basis $(\a^{(1)}, \a^{(2)}) \equiv (1, \varepsilon)$ is 
\begin{equation}
\label{eq:T_dual}
\begin{bmatrix}
\ten{C}_{111} & \ten{C}_{112} \\
\ten{C}_{121} & \ten{C}_{122} \\
\end{bmatrix},
\begin{bmatrix}
\ten{C}_{211} & \ten{C}_{212} \\
\ten{C}_{221} & \ten{C}_{222} \\
\end{bmatrix}
= \begin{bmatrix}
1 & 0 \\
0 & 1 \\
\end{bmatrix},
\begin{bmatrix}
0 & 1 \\
0 & 0 \\
\end{bmatrix}
\end{equation}
\end{example}

\begin{example}[Polynomials modulo $x^n-1$]
\label{eg:polynomials}
$\alg{A} = \left(\mathbb{F}_{n-1}[x], \cdot \right)$, with $\mathbb{F}_{n-1}[x]$ the set of polynomial of degree at most $n-1$ on $\mathbb{F}$. The product $\cdot$ is defined as 
$\left( \sum_{i=0}^{n-1} a_i x^i \right) \cdot \left( \sum_{i=0}^{n-1} b_i x^i \right) 
:= \sum_{i,j=0}^{n-1} a_i b_i x^{(i+j) \% n} 
= \sum_{k=0}^{n-1} \left( \sum_{i,j=0}^{n-1} a_i b_i \delta_{k, (i+j) \% n} \right)  x^k
$.
Equivalently, $\alg{A} \simeq \mathbb{F}[x]/(x^n-1)$, with $\mathbb{F}[x]$ the polynomial ring in one indeterminate $x$ over the field $\mathbb{F}$.
The structure tensor of $\alg{A}$ in its canonical basis $(\a^{(0)}, \dots, \a^{(n-1)}) \equiv (x^0, \cdots, x^{n-1} )$ is $\ten{C}_{ijk} = \delta_{k, (i+j) \% n} \ \forall i, j, k \in \llbracket 0, n-1 \rrbracket$ since $x^i\cdot x^j=x^{(i+j)\bmod n}$.
Here we adopt $0$-based indexing, i.e., indices range from $0$ to $n-1$, for the convenience of notation.
\end{example}

\begin{example}[Square matrices of size $n$]
\label{eg:matrices}
$\alg{A} = \left( \mathcal{M}_n(\mathbb{F}), \cdot \right)$, with $\mathcal{M}_n(\mathbb{F})$  the set of square matrices of size $n$ with entries in $\mathbb{F}$. The product $\cdot$ is defined as
$ \left( \A \cdot \B \right)_{ij} := \sum_{k=1}^{n} \A_{ik} \B_{kj} $. 
Let's consider the canonical basis 
$B
=\{\a^{(i)}\}_{i \in [n^2]} 
= \{ \a^{((q-1)n+r)} \}_{(q, r) \in [n]^2}$ 
of $\mathcal{M}_n(\mathbb{F})$, where each $\a^{((q-1)n+r)}$ has $1_\mathbb{F}$ at position $(q, r) \in [n]^2$ and $0_\mathbb{F}$ everywhere. 
Since $\a^{((q_1-1)n+r_1)} \cdot \a^{((q_2-1)n+r_2)} = \delta_{q_2r_1} \a^{((q_1-1)n+r_2)}$,
the structure tensor of $\alg{A}$ in $B$ is 
$\ten{C}_{(q_1-1)n+r_1, (q_2-1)n+r_2, (q_3-1)n+r_3} = \delta_{q_2r_1} \delta_{q_1q_3} \delta_{r_2r_3}$. 
\end{example}

\begin{example}[Upper triangular matrices of size $t$]
\label{eg:upper_triangular_matrices}
$\alg{A} = \left( \textsc{UT}_t(\mathbb{F}), \cdot \right)$, with $\textsc{UT}_t(\mathbb{F}) := \left\{ \A \in \mathcal{M}_t(\mathbb{F}) : \A_{ij}=0_{\mathbb{F}} \forall i>j \right\}$ the set of upper triangular square matrices of size $m$ with entries in $\mathbb{F}$. The product $\cdot$ is defined as the usual matrix multiplication. The dimension of $\textsc{UT}_t(\mathbb{F})$ is $n = \frac{t(t+1)}{2}$, since the free entries are exactly those on and above the diagonal.

\end{example}

\begin{example}
\label{eg:lie_algebra}
A Lie algebra $(\alg{A}, \cdot)$ is an algebra  whose binary operation $\cdot$ (generally denoted by $[\cdot, \cdot]$ and called the Lie bracket) satisfies the alternating property (or antisymmetry) $\u \cdot \u = 0 \ \forall \u \in \alg{A}$, and the Jacobi identity
$(\u\cdot \v)\cdot \w + (\v\cdot \w)\cdot \u + (\w\cdot \u)\cdot \v = 0 \ \forall \u,\v,\w \in \alg{A}$.
%
\end{example}

For any associative algebra $(\alg{A}, \cdot)$, one can define a Lie algebra using $[\u, \v] = \u \cdot \v - \v \cdot \u \ \forall \u, \v \in \alg{A}$.
For example, if we replace matrix multiplication $\A \B$ with the matrix commutator 
$
\A \cdot \B := 
\A\B-\B\A$, 
then $\alg{A} = \left( \mathcal{M}_n(\mathbb{F}), \cdot \right)$ becomes a Lie algebra, with $\ten{C}_{(q_1-1)n+r_1, (q_2-1)n+r_2, (q_3-1)n+r_3} = \delta_{q_2r_1} \delta_{q_1q_3} \delta_{r_2r_3}
- \delta_{q_1r_2} \delta_{q_2q_3} \delta_{r_1r_3}
$. In general, we have the following result about the structure tensor of Lie Algebra.

\begin{proposition}
\label{proposition:algebra_generate_by_C_lie_condition}
An $n\mathbb{F}$-FDA $(\alg{A}, \cdot)$ with structure tensor $\ten{C} \in \mathbb{F}^{n \times n \times n}$ in the basis $B = \{\a^{(i)}\}_{i\in [n]}$ is a Lie algebra if and only if
\vspace{-0.2cm}
\begin{itemize}
    \item[(i)] \textbf{skew-symmetric}:
\begin{equation}
\label{eq:skew-symmetric-c}
\ten{C}_{ijk} = -\ten{C}_{jik} \qquad \forall i,j,k \in [n];
\end{equation}
    \item[(ii)] \textbf{Jacobi identity}:
\begin{equation}
\label{eq:Jac_id_C}
    \sum_{k=1}^n \ten{C}_{ijk}\ten{C}_{k\ell m}
    + \sum_{k=1}^n \ten{C}_{j\ell k}\ten{C}_{kim}
    + \sum_{k=1}^n \ten{C}_{\ell i k}\ten{C}_{kjm}
    = 0
    \qquad \forall i,j,\ell,m \in [n]
\end{equation}
\end{itemize}
\vspace{-3mm}
\begin{proof}
Recall that $(\alg{A},\cdot)$ is a Lie algebra if and only if its product is bilinear, skew-symmetric, and satisfies the Jacobi identity.  
Since $(\alg{A},\cdot)$ is already an $n\mathbb{F}$-FDA, bilinearity is part of the definition. It therefore remains to characterize skew-symmetry and the Jacobi identity in terms of $\ten{C}$.

(i) We first characterize skew-symmetry. For all $i,j \in [n]$, we have
$\a^{(i)} \cdot \a^{(j)} = \sum_{k=1}^n \ten{C}_{ijk}\a^{(k)}$ and $\a^{(j)} \cdot \a^{(i)} = \sum_{k=1}^n \ten{C}_{jik}\a^{(k)}$.
Thus,
\begin{equation}
\a^{(i)} \cdot \a^{(j)} = - \a^{(j)} \cdot \a^{(i)} \ \forall i,j \in [n]
\Longleftrightarrow
\sum_{k=1}^n \ten{C}_{ijk}\a^{(k)} = \sum_{k=1}^n (-\ten{C}_{jik})\a^{(k)} \quad \forall i,j \in [n]
\end{equation}
Since $B=\{\a^{(k)}\}_{k\in[n]}$ is a basis, this holds if and only if Equation~\eqref{eq:skew-symmetric-c} holds.

(ii) We now characterize the Jacobi identity. For all $i,j,\ell \in [n]$, we have
\begin{equation}
(\a^{(i)}\cdot \a^{(j)})\cdot \a^{(\ell)}
= \sum_{k=1}^n \ten{C}_{ijk}\left(\a^{(k)}\cdot \a^{(\ell)}\right)
= \sum_{k=1}^n \ten{C}_{ijk}\sum_{m=1}^n \ten{C}_{k\ell m}\a^{(m)}
= \sum_{m=1}^n \left( \sum_{k=1}^n \ten{C}_{ijk}\ten{C}_{k\ell m} \right)\a^{(m)}
\end{equation}
Similarly,
\begin{equation}
(\a^{(j)}\cdot \a^{(\ell)})\cdot \a^{(i)} = \sum_{m=1}^n
\left( \sum_{k=1}^n \ten{C}_{j\ell k}\ten{C}_{kim} \right)\a^{(m)}
\quad \text{ and } \quad
(\a^{(\ell)}\cdot \a^{(i)})\cdot \a^{(j)} = \sum_{m=1}^n \left( \sum_{k=1}^n \ten{C}_{\ell i k}\ten{C}_{kjm} \right)\a^{(m)}
\end{equation}
Thus, the Jacobi identity
\begin{equation}
(\a^{(i)}\cdot \a^{(j)})\cdot \a^{(\ell)} + (\a^{(j)}\cdot \a^{(\ell)})\cdot \a^{(i)} + (\a^{(\ell)}\cdot \a^{(i)})\cdot \a^{(j)} = 0 \qquad \forall i,j,\ell \in [n]
\end{equation}
is equivalent to
\begin{equation}
\sum_{m=1}^n
\left[
\sum_{k=1}^n \ten{C}_{ijk}\ten{C}_{k\ell m}
+ \sum_{k=1}^n \ten{C}_{j\ell k}\ten{C}_{kim}
+ \sum_{k=1}^n \ten{C}_{\ell i k}\ten{C}_{kjm}
\right]\a^{(m)} = 0
\qquad \forall i,j,\ell \in [n]
\end{equation}
Since $B$ is a basis, this holds if and only if Equation~\eqref{eq:Jac_id_C} holds.
\end{proof}
\end{proposition}

\subsection{From Groups to FDA: A Unified View}
\label{sec:from_groups_to_fda_appendix}

\begin{definition}
\label{def:group_algebra}
For a group $(G, \circ)$ and a field $\mathbb{F}$, the group algebra of $G$ over $\mathbb{F}$, denoted $(\mathbb{F}[G], \cdot)$, is the set of all linear combinations of elements of $G$ with coefficients in $\mathbb{F}$. If $G$ has $n$ elements $\{g_i\}_{i\in [n]}$, then  $\mathbb{F}[G] = \left\{ \sum_{i=1}^n \alpha_ig_i \ \mid \ \alpha_1, \dots, \alpha_n \in \mathbb{F} 
\right\}$ and $\left( \sum_{i=1}^n \alpha_i g_i \right) \cdot \left( \sum_{j=1}^n \beta_j g_j \right) = \sum_{i, j=1}^n \alpha_i \beta_i \left(g_i \circ g_j \right) = \sum_{k=1}^n \left( \sum_{ i, j=1 }^n \alpha_i \beta_j \mathbb{1}(g_i \circ g_j = g_k) \right) g_k$, with $\mathbb{1}$ the indicator function. We have $1_{\mathbb{F}[G]} = e$, with $e$ is the identity of $G$.
The structure tensor of $\mathbb{F}[G]$ in $\{g_i\}_{i=1}^n$ is $\ten{C}_{ijk} = \mathbb{1}( g_k = g_i \circ g_j) \in \{0, 1\} \ \forall i, j, k \in [n]$.
\end{definition}

Note that the formal expression $\sum_{i=1}^n \alpha_i g_i$ should not be interpreted as a genuine linear combination inside $G$, since a group has no intrinsic compatibility between its operation $\circ$ and the scalar or additive structure of $\mathbb{F}$. As explained in Section~\ref{sec:from_groups_to_fda}, this expression acquires meaning once we pass to one-hot encoding. Redefine $\mathbb{F}[G] := \left\{ \sum_{i=1}^n \alpha_i \a^{(i)} \;\big|\; \alpha \in \mathbb{F}^n \right\}$ where $\a^{(i)}=\Psi(g_i)$ and $\Psi:G \to \{0_{\mathbb{F}},1_{\mathbb{F}}\}^n$ is the one-hot encoding of $G$ over $\mathbb{F}$, given by  $\Psi(g_i) := \big[ \mathbb{1}(g_k=g_i) \big]_{k\in[n]}$. Multiplication in $\mathbb{F}[G]$, which is a free vector space over $G$  with multiplication given by linearly extending the group law, is then defined by $\u \cdot \v := \sum_{i,j} \u_i \v_j \Psi(g_i\circ g_j)  = \sum_{k=1}^n \Big(\sum_{i,j=1}^n \u_i \v_j \mathbb{1}(g_i \circ g_j = g_k)\Big)\a^{(k)}$ for all $\u=\sum_i \u_i \a^{(i)}$ and $\v=\sum_j \v_j \a^{(j)}$. In this way, linear combinations of group elements become well-defined elements of the vector space $\mathbb{F}[G]$, and the group multiplication naturally extends to an algebra product on $\mathbb{F}[G]$.
With this construction, we can show that $(\mathbb{F}[G], \cdot)$ is an $n\mathbb{F}$-FDA, and that training a model on the group $(G, \circ)$ is equivalent to training it on the multiplication in $(\mathbb{F}[G], \cdot)$.

\begin{proposition}
\label{proposition:from_group_to_algebra_appendix}
For a finite group $(G, \circ)$ with $n$ elements $\{g_i\}_{i \in [n]}$ and identity $e$, $(\mathbb{F}[G], \cdot)$ is an $n$-dimensional associative and unital $\mathbb{F}$-FDA with $1_{\mathbb{F}[G]} = \Psi(e)$. Also, $(\mathbb{F}[G], \cdot)$ is commutative if and only if $G$ is commutative. Moreover, the structure tensor of $\mathbb{F}[G]$ in $B 
= 
\{\Psi(g_i)\}_{i\in[n]}$ is given by $\ten{C}_{ijk} = \mathbb{1}_{\mathbb{F}}(g_i \circ g_j = g_k) \ \forall i, j, k \in [n]$; and we have $\Psi( g_i \circ g_j) = \ten{C} \times_1 \Psi(g_i) \times_2 \Psi(g_j) \ \forall  i, j \in [n]$.
\vspace{-3mm}
\begin{proof}
It is easy to check that $\mathbb{F}[G]$ is a vector space over $\mathbb{F}$, and that $\cdot$ verifies the axioms of an algebra operation (bilinearity, right and left distributivity, compatibility with scalars). The associativity and unitality conditions follow from the fact that $G$ is by definition associative and unital. 
Now, let $i, j \in [n]$. We have  $\Psi(g_i) \cdot \Psi(g_j) = \Psi(g_i \circ g_j)$, so $\ten{C}_{ijk} = \mathbb{1}(g_k = g_i \circ g_j) = (\Psi(g_i \circ g_j))_k \ \forall k \in [n]$. 
If $G$ is commutative, then $\mathbb{F}[G]$ is commutative by Proposition~\ref{proposition:algebra_generate_by_C_asso_commu_uni_condition_appendix} since $\ten{C}_{ijk} = \mathbb{1}(g_i \circ g_j = g_k) = \mathbb{1}(g_j \circ g_i = g_k) = \ten{C}_{jik} \ \forall i, j, k \in [n]$. The converse is also true by the same proposition. 
Finally, we have $\Psi( g_i \circ g_j) = \ten{C} \times_1 \Psi(g_i) \times_2 \Psi(g_j)$ for all $i, j \in [n]$ since 
\begin{equation}
\begin{split}
\left( \ten{C} \times_1 \Psi(g_i) \times_2 \Psi(g_j) \right)_k = \sum_{l,r=1}^n \ten{C}_{lrk} (\Psi(g_i))_l (\Psi(g_j))_r 
= \sum_{l,r=1}^n \ten{C}_{lrk} \delta_{li} \delta_{rj} 
= \ten{C}_{ijk} 
= (\Psi(g_i \circ g_j))_k
\ \forall k\in[n]
\end{split}
\end{equation}
\end{proof}
\end{proposition}

\begin{remark}
Given a group $G = \{g_i\}_{i \in [n]}$, the structure tensor of $\mathbb{F}[G]$ in $B = \{\Psi(g_i)\}_{i\in[n]}$ is the same for all $\mathbb{F}=\mathbb{Z}/p\mathbb{Z}$, $p>1$. For all $i, j, k \in [n]$, we have $\ten{C}_{ijk} = 1$ if $g_i \circ g_j = g_k$ and $0$ otherwise. So for groups and finite fields, we can always choose $p=2$ without loss of generality.  
\end{remark}

\begin{remark}
The assignment sending a group $G$ to its group algebra $\mathbb{F}[G]$ is a functor from the category of groups to the category of $\mathbb{F}$-algebras.
Indeed, to every group homomorphism $\phi : G \to H$ one can associate the algebra homomorphism  $\mathbb{F}[\phi] : \mathbb{F}[G] \to \mathbb{F}[H]$ defined by $\mathbb{F}[\phi] \left(\sum_{g \in G} \alpha_g g\right) := \sum_{g \in G} \alpha_g \phi(g) = \sum_{h \in H} \left( \sum_{\substack{g \in G, \ \phi(g) = h}} \alpha_g \right) h$. 
This construction preserves identities ($\mathbb{F}[\mathrm{id}_G] = \mathrm{id}_{\mathbb{F}[G]}$ with $\mathrm{id}_G$ the identity map on the group $G$, and $\mathrm{id}_{\mathbb{F}[G]}$ the identity map on the algebra $\mathbb{F}[G]$) and composition ($\mathbb{F}[\psi \circ \phi] = \mathbb{F}[\psi] \circ \mathbb{F}[\phi]$ for all $\phi : G \to H$ and $\psi : H \to K$). Hence, it defines a functor. As a consequence, $\mathbb{F}[\Psi]$ is an isomorphism between $(\mathbb{F}[G], \cdot)$ and $(\mathbb{F}[\ten{C}], \times)$ in Proposition~\ref{proposition:from_group_to_algebra_appendix}.
\begin{equation}
\begin{tikzcd}
G \arrow[r, "{\phi}"] \arrow[d, mapsto] &
H \arrow[d, mapsto] \\
\mathbb{F}[G] \arrow[r, "{\mathbb{F}[\phi]}"] &
\mathbb{F}[H]
\end{tikzcd}
\end{equation}
\end{remark}

\begin{example}[Group algebra of the additive group $\mathbb{Z}/n\mathbb{Z}$ over $\mathbb{F}$]
\label{eg:group_algebra_Zn/Z}
$\alg{A} = \left(\mathbb{F}[\mathbb{Z}/n\mathbb{Z}], \cdot \right)$, with  $\mathbb{Z}/n\mathbb{Z} = \left\{ g_i \right\}_{i \in \llbracket 0, n-1 \rrbracket } \equiv \llbracket 0, n-1 \rrbracket$ and $g_i \circ g_j = (i + j) \mod n = g_{(i + j) \mod n} \ \forall i, j \in \llbracket 0, n-1 \rrbracket$. So $\ten{C}_{ijk} = \mathbb{1}(g_k = g_i \circ g_j) = \delta_{k, (i + j) \% n} \ \forall i, j, k \in \llbracket 0, n-1 \rrbracket$. This is the structure tensor of $\mathbb{F}_{n-1}[x] = \mathbb{F}[x]/(x^n - 1)$ in its canonical basis (Example~\ref{eg:polynomials}), so $\mathbb{F}[\mathbb{Z}/n\mathbb{Z}] \cong \mathbb{F}_{n-1}[x]$. Here we adopt $0$-based indexing for the convenience of notation.
\end{example}

\begin{example}[Group algebra of the Dihedral group $D_t$ over $\mathbb{F}$]
\label{eg:group_algebra_Dn}
$\alg{A} = \left(\mathbb{F}[D_t], \cdot \right)$, with 
$D_t = (r_0, \dots, r_{t-1}, s_0, \dots, s_{t-1})$ 
the group of symmetries of a regular $t$-gon, where each $r_i$ denotes the rotation of angle $2\pi i/t$ and each $s_i$ the reflection across the axis passing through vertex $i$ (the axis at angle $\pi i/t$).
The group composition is given by
\begin{equation}
r_i \circ r_j = r_{(i+j)\% t}, \quad
r_i \circ s_j = s_{(i+j)\% t}, \quad
s_i \circ r_j = s_{(i-j)\% t}, \quad
s_i \circ s_j = r_{(i-j)\% t}
\quad \forall i, j \in \llbracket 0, t-1 \rrbracket
\end{equation}
We have $n = \mathrm{dim}_{\mathbb{F}} \alg{A} = |D_t| = 2t$.
Let us index the elements of $D_t = \{ g_i \}_{ i \in \llbracket 0, n-1 \rrbracket }$ by $g_i = r_{i}$ for $i \in \llbracket 0, t-1 \rrbracket$ and $g_i = s_{i-t}$ for $i \in \llbracket t, n-1 \rrbracket$. 
Then, for all $i,j,k \in \llbracket 0,n-1 \rrbracket$, the structure tensor of $\alg{A}$ in its canonical basis $\{ g_i \}_{i \in \llbracket 0, n-1 \rrbracket}$
is given by
\begin{equation*}
\label{eq:structure_tensor_Dt}
\begin{split}
\ten{C}_{ijk} & 
= 
\left\{
    \begin{array}{llll}
        \delta_{k, (i+j) \% t} & \mbox{if } 0 \le i,j < t & \mbox{since } g_i \circ g_j = r_{j} \circ r_{i} = r_{(i+j)\%t} = g_{(i+j)\%t} \\[2mm]
        \delta_{k,t + (i+j-t) \% t} & \mbox{if } 0 \le i < t, \ t \le j < 2t & \mbox{since } g_i \circ  g_j =  r_{i} \circ s_{j-t} = s_{(i+j-t)\%t} = g_{t + (i+j-t)\%t} \\[2mm]
        \delta_{k,t + (i - t - j) \% t} & \mbox{if } t \le i < 2t, 0 \le j < t & \mbox{since } g_i \circ  g_j =  s_{i-t} \circ r_{j} = s_{(i-t-j)\%t} = g_{t+(i-t-j)\%t} \\[2mm]
        \delta_{k, ((i - t) - (j - t)) \% t} & \mbox{if } t \le i,j < 2t & \mbox{since } g_i \circ g_j =  s_{i-t} \circ s_{j-t} = r_{(i-t-j+t)\% t} = g_{(i-t-j+t)\% t}
    \end{array}
\right.
\end{split}
\end{equation*}
%
\end{example}

\begin{example}[Group algebra of the symmetric group  $S_t$ over $\mathbb{F}$]
\label{eg:group_algebra_sn}
$\alg{A} = \left(\mathbb{F}[S_t], \cdot \right)$.
The elements of $\mathbb{F}[S_t]$ can be seen as endomorphisms of $\mathbb{F}^t$ and can be written $\u = \sum_{\sigma \in S_t} \u_{\sigma} \sigma$, i.e. $\u$ takes an element of $\mathbb{F}^t$, applies all the permutations of $S_t$ to it, then makes a linear combination of these permutations with coefficients $(\u_{\sigma})_{\sigma \in S_t}$ in $\mathbb{F}$.
The product $\cdot$ is define as 
$
\left( \sum_{\sigma \in S_t} \u_{\sigma} \sigma \right) \cdot \left( \sum_{\pi \in S_t} \v_{\pi} \pi \right)
= \sum_{\sigma, \pi \in S_t} \u_{\sigma} \v_{\pi} \left(\sigma \circ \pi\right)
= \sum_{\gamma \in S_t} \left( \sum_{\sigma \circ \pi = \gamma} \u_{\sigma} \v_{\pi}  \right) \gamma
$, where $\circ$ is the composition of permutations.
We have $n = \text{dim}_{\mathbb{F}} \alg{A} = |S_t| = t!$.
\end{example}


\subsection{Representation of FDA}
\label{sec:representation_fda_appendix}

Structure-like groups have representations with respect to their elements and the operators acting on them. 

\begin{definition}
A representation of a group $(G, \circ)$ over a field $\mathbb{F}$ is a group homomorphism $\rho: G \to \mathrm{GL}_m(\mathbb{F})$
such that $\rho(g\circ h)=\rho(g)\rho(h) \ \forall g,h\in G$ and $\rho(e)=\mathbb{I}_m$, where $e$ is the identity of $G$. 
\end{definition}

For example, any cyclic group of order $p$ generated by $g$ has the following 2-dimensional irreducible representation:
\begin{equation}
\label{eq:representation_g_k_group}
\rho(g^k) = \begin{pmatrix}
\cos\left(\frac{2\pi k}{p}\right) & -\sin\left(\frac{2\pi k}{p}\right) \\
\sin\left(\frac{2\pi k}{p}\right) & \cos\left(\frac{2\pi k}{p}\right)
\end{pmatrix}
\ \forall k \in \llbracket 0, p \rrbracket
\end{equation}
Each element $g^k$ is represented by the angle rotation $2\pi k/p$. 
%
An example of a finite cyclic group of order $p$ is the additive group $(\mathbb{F}_p, +)$ for $p$ a prime integer.

\begin{definition}
\label{def:algebra_representation}
The representation of an $\mathbb{F}$-algebra $(\alg{A}, \cdot)$ involves a homomorphism $\rho$ from $\alg{A}$ to the algebra of 
matrices (Example~\ref{eg:matrices}), preserving both the linear and the multiplicative structures, i.e. $\rho(\alpha\u+\beta\v)=\alpha \rho(\u)+ \beta \rho(\v)$ and $\rho(\u \cdot \v)=\rho(\u)\rho(\v)$ for all $\u, \v \in \alg{A}$ and $\alpha, \beta \in \mathbb{F}$. When $\alg{A}$ is unital, it is also required that $\rho(1_{\alg{A}}) = \mathbb{I}$, the identity matrix.
\end{definition}


This notion of algebra representation is central to the interpretation of the results of this paper. However, the theory of algebra representations is somewhat more abstract than that of groups, involving concepts such as (right or left) modules of algebra, semi-simplicity, and others. Nevertheless, the picture becomes less abstract (though still difficult) if we work directly with the structure tensor of the algebra.  
Let us consider the representations of an $n\mathbb{F}$-FDA $(\alg{A}, \cdot)$ over $\mathbb{F}$ itself, of dimension $m \in \mathbb{N}^*$. From the  Definition~\ref{def:algebra_representation}, we see that it is necessary and sufficient (Proposition~\ref{proposition:algebra_representation_linearity_basis}) to find a representation $\ten{R}_i := \rho(\a^{(i)}) \in \mathbb{F}^{m \times m}$ of the elements of a basis $B=\{\a^{(i)}\}_{i \in [n]}$ of $\alg{A}$ in order to represent all other elements using
\begin{equation}
\rho\left(\sum_{k=1}^n \alpha_k \a^{(k)}\right) = \sum_{k=1}^n \alpha_k \ten{R}_k \ \forall (\alpha_1, \cdots, \alpha_n) \in \mathbb{F}^n
\end{equation}
In this way, we always have $\rho(\alpha\u+\beta\v)=\alpha \rho(\u)+ \beta \rho(\v)$ for all $\u, \v \in \alg{A}$ and $\alpha, \beta \in \mathbb{F}$. 
To also ensure that $\rho(\u \cdot \v)=\rho(\u)\rho(\v) \ \forall \u, \v \in \alg{A}$, it is necessary and sufficient (Propositon~\ref{proposition:algebra_representation_from_representation_of_a_basis}) that 
\begin{equation}
P_{i,j}(\ten{R}) := \ten{R}_i \ten{R}_j - \sum_{k=1}^n \ten{C}^{(B)}_{ijk} \ten{R}_k = 0 \ \forall i, j \in [n]
\end{equation}
That is,
\begin{equation}
P_{i,j}^{l, r}(\ten{R}) := \sum_{s=1}^m \ten{R}_{i l s} \ten{R}_{j s r} - \sum_{k=1}^n \ten{C}^{(B)}_{ijk} \ten{R}_{k l r} = 0 
\ \forall i, j \in [n]; \; l, r \in [m]
\end{equation}

\begin{proposition}
\label{proposition:algebra_representation_linearity_basis}
Let $(\alg{A}, \cdot)$ be an $n\mathbb{F}$-FDA. For all representation $\rho$ and all basis $B = \{\a^{(i)}\}_{i\in [n]}$ of $\alg{A}$, we have $\rho\left(\sum_{k=1}^n \alpha_k \a^{(k)}\right) = \sum_{k=1}^n \alpha_k \rho\left(\a^{(k)}\right) \ \forall (\alpha_1, \cdots, \alpha_n) \in \mathbb{F}^n$ and $\rho\left(\a^{(i)}\right) \rho\left(\a^{(j)}\right) = \sum_{k=1}^n \ten{C}^{(B)}_{ijk} \rho\left(\a^{(k)}\right)
\ \forall i,j\in[n]$.
\vspace{-3mm}
\begin{proof} $\rho\left(\sum_{k=1}^n \alpha_k \a^{(k)}\right) = \sum_{k=1}^n \alpha_k \rho\left(\a^{(k)}\right) \ \forall \alpha \in \mathbb{F}^n$ since $\rho$ is an algebra homomorphism.
For all $i,j\in[n]$ :
\begin{equation}
\begin{split}
\rho\left(\a^{(i)}\right) \rho\left(\a^{(j)}\right)
& = \rho\left(\a^{(i)} \cdot \a^{(j)}\right) \text{ since $\rho$ is an algebra homomorphism}
\\ & = \rho \left(\sum_{k=1}^n \ten{C}^{(B)}_{ijk} \a^{(k)}\right) \text{ by the definition of $\ten{C}^{(B)}$}
\\ & = \sum_{k=1}^n \ten{C}^{(B)}_{ijk} \rho\left(\a^{(k)}\right)
\end{split}
\end{equation}
\end{proof}
\end{proposition}

\begin{proposition}
\label{proposition:algebra_representation_from_representation_of_a_basis}
Let $(\alg{A}, \cdot)$ be an $n\mathbb{F}$-FDA with structure tensor $\ten{C} \in \mathbb{F}^{n \times n \times n}$ in $B = \{\a^{(i)}\}_{i\in [n]}$.
Fix $m\in\mathbb{N}^*$ and $\ten{R} \in\mathbb{F}^{n \times m\times m}$. Set $\ten{R}_k = \ten{R}_{k,:,:} \in\mathbb{F}^{m\times m} \ \forall k \in [n]$, and define the linear map $\rho:\alg{A}\to \mathcal{M}_m(\mathbb{F})$ by
\begin{equation}
\rho \left(\sum_{k=1}^n \alpha_k \a^{(k)}\right):=\sum_{k=1}^n \alpha_k \ten{R}_k \ \forall (\alpha_1, \cdots, \alpha_n) \in \mathbb{F}^n
\end{equation}
Then $\rho$ is an algebra homomorphism if and only if
\begin{equation}
\label{eq:R_iR_j=C_kR_k}
\ten{R}_i\ten{R}_j=\sum_{k=1}^n \ten{C}_{ijk} \ten{R}_k
\quad \forall i,j\in[n]
\end{equation}
If $\alg{A}$ is unital with $1_{\alg{A}}=\sum_{i=1}^n \lambda_i \a^{(i)}$, then the additional condition $\rho(1_{\alg{A}})=\mathbb{I}_m$ is equivalent to 
\begin{equation}
\sum_{i=1}^n \lambda_i \ten{R}_i = \mathbb{I}_m
\end{equation}
\vspace{-3mm}
\begin{proof}
($\Longrightarrow$) Suppose $\rho$ is an algebra homomorphism. Then, we have for all $i,j\in[n]$ :
\begin{equation}
\begin{split}
\ten{R}_i\ten{R}_j
& = \rho(\a^{(i)})\rho(\a^{(j)}) \text{ by the definition of $\rho$}
\\ & = \rho\left( \a^{(i)}\cdot \a^{(j)} \right) \text{ since $\rho$ is an algebra homomorphism}
\\ & = \rho \left(\sum_{k=1}^n \ten{C}_{ijk} \a^{(k)}\right) \text{ by the definition of $\ten{C}$ (Equation~\eqref{eq:structure_tensor})}
\\ & = \sum_{k=1}^n \ten{C}_{ijk} \ten{R}_k \text{ by the definition of $\rho$}
\end{split}
\end{equation}

($\Longleftarrow$) Assume \eqref{eq:R_iR_j=C_kR_k} holds. For all  $\u=\sum_{i=1}^n \u_i \a^{(i)}$ and $\v=\sum_{i=1}^n \v_i \a^{(i)}$, we have
\begin{equation}
\begin{split}
\rho(\u)\rho(\v)
& =\left(\sum_{i=1}^n \u_i \ten{R}_i\right)\left(\sum_{i=1}^n \v_i \ten{R}_i \right)   \text{ by the definition of $\rho$}
\\ & = \sum_{i,j=1}^n \u_i \v_j \ten{R}_i\ten{R}_j \\ & 
= \sum_{i,j=1}^n \u_i \v_j \sum_{k=1}^n \ten{C}_{ijk} \ten{R}_k \text{ (Equation~\eqref{eq:R_iR_j=C_kR_k})}
\\ & = \sum_{k=1}^n \left(\sum_{i,j=1}^n \u_i \v_j \ten{C}_{ijk}\right)\ten{R}_k \\ & 
= \rho \left( \sum_{k=1}^n \left( \sum_{i,j=1}^n \u_i \v_j \ten{C}_{ijk} \right) \a^{(k)} \right)  \text{ by the definition of $\rho$}
\\ & = \rho \left( \u \cdot \v \right)  \text{ (Equation~\eqref{eq:uv=Cuv})}
\end{split}
\end{equation}
Therefore, $\rho$ is multiplicative. Since it is a $\mathbb{F}$-linear map, this proves that it is a homomorphism of $\mathbb{F}$-algebras.
\end{proof}
\end{proposition}
From this proposition, it follows that we can find a solution $\ten{R} \in \mathbb{F}^{n \times m\times m}$ to 
\begin{equation}
\label{eq:equation_for_R_in_B_simple}
\begin{split}
& P_{i,j}(\ten{R}) := \ten{R}_i \ten{R}_j - \sum_{k=1}^n \ten{C}^{(B)}_{ijk} \ten{R}_k = 0 \ \forall i, j \in [n] 
\\& 
\sum_{i=1}^n \lambda^{(B)}_i \ten{R}_i = \mathbb{I}_m \text{ if $\alg{A}$ is unital with  $1_{\alg{A}}=\sum_{i=1}^n \lambda^{(B)}_i \a^{(i)}$} 
\end{split}
\end{equation}
in order to obtain a representation of size $m$ of $\alg{A}$ over $\mathbb{F}$, or more generally over any field into which $\mathbb{F}$ embeds. In particular,  even if $\mathbb{F} = \mathbb{Z}/p\mathbb{Z}$, one may regard its elements as integer representatives and solve the same equations over $\mathbb{R}$ or  $\mathbb{C}$, so that real (or complex) matrix representations remain valid. 

It should be noted that if we switch from the basis $B$ to another basis $\tilde{B}$, then a solution $\ten{R}$ in $B$ to the above system of equations becomes $\ten{R} \times_1 \P^{\top}$ in $\tilde{B}$, with $\P \in \mathbb{F}^{n\times n}$ the basis change matrix from $B$ to $\tilde{B}$.

\begin{proposition}
\label{proposition:solution_representation_algebra_generate_by_C_change_of_basis}
Let $\ten{C}$ and $\tilde{\ten{C}}$ be the structure tensors of an $n\mathbb{F}$-FDA $\alg{A}$ with respect to two bases $B = \{ \a^{(i)}) \}_{i \in [n]}$ and $\tilde{B} = \{ \tilde{\a}^{(i)} \}_{i \in [n]}$ respectively. 
Let $\P \in \mathbb{F}^{n\times n}$ be the basis change matrix from $B$ to $\tilde{B}$, i.e. $\tilde{\a}^{(i)} =  \sum_{k=1}^n \P_{ki} \a^{(k)} \ \forall i \in [n]$. 
For all $\ten{R} \in\mathbb{F}^{n \times m\times m}$, by defining $\tilde{\ten{R}} := \ten{R} \times_1 \P^{\top}$, which is equivalent to $\ten{R} = \tilde{\ten{R}} \times_1 {\P^{-1}}^{\top}$, we have: 
\begin{itemize}
    \item $\ten{R}_i\ten{R}_j=\sum_{k=1}^n \ten{C}_{ijk} \ten{R}_k \ \forall i,j\in[n] \iff \tilde{\ten{R}}_i \tilde{\ten{R}}_j = \sum_{k=1}^n \tilde{\ten{C}}_{ijk} \tilde{\ten{R}}_k \ \forall i,j\in[n]$;
    \item $\sum_{i=1}^n \lambda_i \ten{R}_i = \mathbb{I}_m \iff \sum_{i=1}^n \tilde{\lambda}_i \tilde{\ten{R}}_i = \mathbb{I}_m$ if further $\alg{A}$ is unital with $1_{\alg{A}} = \sum_{i=1}^n \lambda_i \a^{(i)} = \sum_{i=1}^n \tilde{\lambda}_i \tilde{\a}^{(i)}$.
\end{itemize}
\vspace{-3mm}
\begin{proof}
Let $\ten{R} \in\mathbb{F}^{n \times m\times m}$. 
Set $\tilde{\ten{R}} = \ten{R} \times_1 \P^{\top}$.
First, note that this is equivalent to $\ten{R} = \tilde{\ten{R}} \times_1 {\P^{-1}}^{\top}$ since
$( \ten{R} \times_1 \P^{\top} \times_1 {\P^{-1}}^{\top} )_{(1)} = {\P^{-1}}^{\top} ( \ten{R} \times_1 \P^{\top} )_{(1)} = {\P^{-1}}^{\top} \P^{\top} \ten{R}_{(1)} = \ten{R}_{(1)}$ and 
$( \tilde{\ten{R}} \times_1 {\P^{-1}}^{\top} \times_1 \P^{\top} )_{(1)} = \P^{\top} {\P^{-1}}^{\top} \tilde{\ten{R}}_{(1)} = \tilde{\ten{R}}_{(1)}$ (Equation~\eqref{eq:AT_l}).

Assume $\ten{R}_i\ten{R}_j=\sum_{k=1}^n \ten{C}_{ijk} \ten{R}_k \ \forall i,j\in[n]$. Then, for all $i, j \in [n]$, we have:
\begin{equation}
\begin{split}
\tilde{\ten{R}}_i \tilde{\ten{R}}_j &
= \left( \sum_{l=1}^n \P_{li} \ten{R}_l \right) \left( \sum_{r=1}^n \P_{rj} \ten{R}_r \right) 
= \sum_{l=1}^n \sum_{r=1}^n \P_{li} \P_{rj} \ten{R}_l  \ten{R}_r \\ &
= \sum_{l=1}^n \sum_{r=1}^n \P_{li} \P_{rj} \sum_{s=1}^n   \ten{C}_{lrs} \ten{R}_s 
= \sum_{s=1}^n \sum_{l=1}^n \sum_{r=1}^n \ten{C}_{lrs} \P_{li} \P_{rj} \ten{R}_s \\ &
= \sum_{s=1}^n \sum_{l=1}^n \sum_{r=1}^n \ten{C}_{lrs} \P_{li} \P_{rj} \sum_{k=1}^n (\P^{-1})_{ks} \tilde{\ten{R}}_k \\ &
= \sum_{k=1}^n \left(  \sum_{\ell=1}^n \sum_{r=1}^n \sum_{s=1}^n \ten{C}_{l r s}  (\P^\top)_{il} (\P^\top)_{jr} ({\P^{-1}})_{ks} \right) \tilde{\ten{R}}_k \\ &
= \sum_{k=1}^n \left( \ten{C} \times_1 \P^\top \times_2 \P^\top \times_3 \P^{-1} \right)_{ijk} \tilde{\ten{R}}_k \\ &
= \sum_{k=1}^n \tilde{\ten{C}}_{ijk} \tilde{\ten{R}}_k \text{ (Proposition~\ref{proposition:algebra_generate_by_C_change_of_basis})}
\end{split}
\end{equation}
Further, assume that $\alg{A}$ is unital with $1_{\alg{A}} = \sum_{i=1}^n \lambda_i \a^{(i)} = \sum_{i=1}^n \tilde{\lambda}_i \tilde{\a}^{(i)}$. We have $\tilde{\lambda} = \P^{-1} \lambda$. So
\begin{equation}
\begin{split}
\sum_{i=1}^n \lambda_i \ten{R}_i = \mathbb{I}_m 
\Longleftrightarrow \sum_{i=1}^n \lambda_i \sum_{k=1}^n (\P^{-1})_{ki} \tilde{\ten{R}}_i = \mathbb{I}_m 
& 
\Longleftrightarrow \sum_{k=1}^n \left( \sum_{i=1}^n   (\P^{-1})_{ki} \lambda_i \right) \tilde{\ten{R}}_k = \mathbb{I}_m \\ & 
\Longleftrightarrow \sum_{k=1}^n \left(  \P^{-1} \lambda \right)_k \tilde{\ten{R}}_k = \mathbb{I}_m \\ & 
\Longleftrightarrow \sum_{k=1}^n \tilde{\lambda}_k \tilde{\ten{R}}_k = \mathbb{I}_m
\end{split}
\end{equation}
Similarly, assume $\tilde{\ten{R}}_i\tilde{\ten{R}}_j=\sum_{k=1}^n \tilde{\ten{C}}_{ijk} \tilde{\ten{R}}_k \ \forall i,j\in[n]$. Then, for all $i, j \in [n]$, we have (the derivation steps are similar to those above) $\ten{R}_i \ten{R}_j = \sum_{k=1}^n \left( \tilde{\ten{C}} \times_1 {\P^{-1}}^\top \times_2 {\P^{-1}}^\top \times_3 \P \right)_{ijk} \ten{R}_k = \sum_{k=1}^n \ten{C}_{ijk} \ten{R}_k$.
\end{proof}
\end{proposition}

We have a system of $n^2 m^2$ (for a non-unital $\alg{A}$) or $(n^2+1)m^2$ (for a unital $\alg{A}$) non-linear equations in $n m^2$ variables, so in general, overdetermined. Solutions form an algebraic variety.
For a non-unital $\alg{A}$, the trivial solution is $\ten{R}=0_{n\times m\times m}$, the zero tensor. For a unital $\alg{A}$, there is no trivial solution. 
\textbf{It is worth asking what is the smallest $m$ (or the values of $m$) for which this system admits a non-trivial solution}., i.e., a solution to 
\begin{equation}
\label{eq:equation_for_R_in_B}
\begin{split}
& P(\ten{R}) = \sum_{i,j} \|P_{i,j}(\ten{R})\|_{\mathrm{F}}^2 = \sum_{i,j} \sum_{l,r} \left( P_{i,j}^{l, r}(\ten{R}) \right)^2 = 0 \qquad\text{subject to}\qquad \sum_{i=1}^n \|\ten{R}_i\|_{\mathrm{F}}^2 \ne 0  
\\ & \sum_{i=1}^n \lambda^{(B)}_i \ten{R}_i = \mathbb{I}_m \text{ if $\alg{A}$ is unital with $1_{\alg{A}}=\sum_{i=1}^n \lambda^{(B)}_i \a^{(i)}$} 
\end{split}
\end{equation}

For associatives $n\mathbb{F}$-FDA, the smallest $m$ is less than or equal to $n$, since the tensor $\ten{R} \in \mathbb{F}^{n \times n\times n}$ defined by $\ten{R}_i = \ten{C}_i^\top \forall i \in [n]$ is solution to~\eqref{eq:equation_for_R_in_B_simple} (Proposition~\ref{proposition:min_m_associative_fda}).
For non-unital FDA, if there is a solution for a certain $m$, then there is one for all $p\ge m$. In fact, if $\ten{R}\in \mathbb{F}^{n\times m\times m}$ is solution to Equation~\eqref{eq:equation_for_R_in_B_simple}, then for all $\A \in \mathbb{F}^{p \times m}$ and $\B \in \mathbb{F}^{m \times p}$ such that $\B\A = \mathbb{I}_m$, $\tilde{\ten{R}} = \left[ \A \ten{R}_i \B \right]_{i \in [n]} \in \mathbb{F}^{n\times p\times p}$ is also solution to Equation~\eqref{eq:equation_for_R_in_B_simple} (Proposition~\ref{proposition:conjugacy_invariance}).
But this is false for unital FDA in general. If $\sum_{i=1}^n \lambda_i \ten{R}_i = \mathbb{I}_m$, then we have $\sum_{i=1}^n \lambda_i \A\ten{R}_i\B = \A \left( \sum_{i=1}^n \lambda_i \ten{R}_i \right) \B = \A \B$.  For this to equals $\mathbb{I}_p$, we need $p=m$.
%
%
Describing the solution set of~\eqref{eq:equation_for_R_in_B_simple} in full generality is challenging. A basic invariance is simultaneous conjugation: if $\ten{R}=[\ten{R}_1,\dots,\ten{R}_n]$ solves \eqref{eq:equation_for_R_in_B_simple} for all $i,j$, then so does $[\S \ten{R}_1 \S^{-1},\dots,\S \ten{R}_n \S^{-1}]$ for any $\S\in\mathrm{GL}_m(\mathbb{F})$ (Corollary~\ref{corollary:conjugacy_invariance}).

\begin{proposition}[Left-regular representation]
\label{proposition:min_m_associative_fda}
If $(\alg{A}, \cdot)$ is an associative $n\mathbb{F}$-FDA with structure tensor $\ten{C} \in \mathbb{F}^{n \times n \times n}$ in $B = \{\a^{(i)}\}_{i\in [n]}$, then the tensor $\ten{R} \in \mathbb{F}^{n \times n\times n}$ defined by $\ten{R}_i := \ten{C}_i^\top \forall i \in [n]$ satisfies
\begin{equation}
\begin{split}
&  \ten{R}_i \ten{R}_j = \sum_{k=1}^n \ten{C}_{ijk} \ten{R}_k = 0 \ \forall i, j \in [n] 
, \quad \text{ and }
\sum_{i=1}^n \lambda_i \ten{R}_i = \mathbb{I}_n \text{ if $\alg{A}$ is unital with 
$1_{\alg{A}}=\sum_{i=1}^n \lambda_i \a^{(i)}$} 
\end{split}
\end{equation}
\vspace{-3mm}
\begin{proof}
Assume $(\alg{A}, \cdot)$ is an associative $n\mathbb{F}$-FDA. That is (Proposition~\ref{proposition:algebra_generate_by_C_asso_commu_uni_condition_appendix}) 
\begin{equation}
\begin{split}
\sum_{k=1}^n \ten{C}_{ijk} \ten{C}_{klm} = \sum_{k=1}^n \ten{C}_{ikm} \ten{C}_{jlk} \ \forall i, j, l, m \in [n] & 
\Longleftrightarrow \sum_{k=1}^n \ten{C}_{ijk} (\ten{C}_k^\top)_{ml} =  (\ten{C}_i^\top \ten{C}_j^\top)_{ml} \ \forall i, j, l, m \in [n] \\ & 
\Longleftrightarrow \sum_{k=1}^n \ten{C}_{ijk} \ten{C}_k^\top =  \ten{C}_i^\top \ten{C}_j^\top \ \forall i, j \in [n] \\ & 
\Longleftrightarrow \sum_{k=1}^n \ten{C}_{ijk} \ten{R}_k =  \ten{R}_i \ten{R}_j \ \forall i, j \in [n]
\end{split}
\end{equation}
If further $\alg{A}$ is unital, then $\ten{C} \times_1 \lambda = \sum_{i=1}^{n} \lambda_i \ten{C}_i = \mathbb{I}_n$ (Proposition~\ref{proposition:algebra_generate_by_C_asso_commu_uni_condition_appendix}), which is equivalent to $\sum_{i=1}^n \lambda_i \ten{R}_i = \mathbb{I}_n$.
\end{proof}
\end{proposition}

\begin{proposition}[Invariance of the solution set]
\label{proposition:conjugacy_invariance}
Let $\ten{C} \in \mathbb{F}^{n\times n\times n}$. For all  $\ten{R}\in \mathbb{F}^{n\times m\times m}$, if $\ten{R}_i \ten{R}_j = \sum_{k=1}^n \ten{C}_{ijk} \ten{R}_k \ \forall i,j\in[n]$, then $\left(\A \ten{R}_i \B \right) \left (\A \ten{R}_j \B\right) = \sum_{k=1}^n \ten{C}_{ijk} (\A \ten{R}_k \B) \ \forall i,j\in[n]$ for all for all $\A \in \mathbb{F}^{p \times m}$ and $\B \in \mathbb{F}^{m \times p}$ such that $\B\A = \mathbb{I}_m$.
\end{proposition}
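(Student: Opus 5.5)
This is a direct computation. The only ingredients are associativity of matrix multiplication and the hypothesis $\B\A = \mathbb{I}_m$. Fix $\A \in \mathbb{F}^{p\times m}$ and $\B \in \mathbb{F}^{m\times p}$ with $\B\A=\mathbb{I}_m$, and set $\tilde{\ten{R}}_i := \A\ten{R}_i\B \in \mathbb{F}^{p\times p}$ for all $i\in[n]$. For fixed $i,j\in[n]$, the plan is to regroup the product $\tilde{\ten{R}}_i\tilde{\ten{R}}_j$ so that the middle factors $\B$ and $\A$ meet:
\begin{equation*}
(\A\ten{R}_i\B)(\A\ten{R}_j\B) = \A\ten{R}_i(\B\A)\ten{R}_j\B = \A\ten{R}_i\ten{R}_j\B .
\end{equation*}

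Next I substitute the hypothesis $\ten{R}_i\ten{R}_j=\sum_{k=1}^n\ten{C}_{ijk}\ten{R}_k$. The map $\X\mapsto\A\X\B$ is $\mathbb{F}$-linear, so I can pull the sum and the scalars $\ten{C}_{ijk}$ outside:
\begin{equation*}
\A\Big(\sum_{k=1}^n\ten{C}_{ijk}\ten{R}_k\Big)\B = \sum_{k=1}^n\ten{C}_{ijk}\,\A\ten{R}_k\B = \sum_{k=1}^n\ten{C}_{ijk}\tilde{\ten{R}}_k .
\end{equation*}
Since $i,j$ were arbitrary, this gives the claim.

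There is no real obstacle. The one point to flag is that only the one-sided condition $\B\A=\mathbb{I}_m$ is used; $\A\B=\mathbb{I}_p$ is not needed, which is why $p\ge m$ is allowed. This also explains the remark preceding the proposition: the unital constraint $\sum_i\lambda_i\ten{R}_i=\mathbb{I}_m$ is not preserved, since it maps to $\A\B$ rather than $\mathbb{I}_p$. In particular, when $p=m$ and $\A=\S$, $\B=\S^{-1}$, the same computation yields the simultaneous-conjugation invariance referred to as Corollary~\ref{corollary:conjugacy_invariance}.
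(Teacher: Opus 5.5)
Your proof is correct. The paper states this proposition without a written proof, but your argument is the one it implicitly relies on: regroup $(\A\ten{R}_i\B)(\A\ten{R}_j\B)=\A\ten{R}_i(\B\A)\ten{R}_j\B$, then use linearity of $\X\mapsto\A\X\B$; the paper's remark that $\sum_i\lambda_i\A\ten{R}_i\B=\A\B$ in the unital case uses the same computation and matches your point that only $\B\A=\mathbb{I}_m$ is needed.
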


\begin{corollary}[Conjugacy invariance of the solution set]
\label{corollary:conjugacy_invariance}
Let $\ten{C} \in \mathbb{F}^{n\times n\times n}$. For all  $\ten{R}\in \mathbb{F}^{n\times m\times m}$, if $\ten{R}_i \ten{R}_j = \sum_{k=1}^n \ten{C}_{ijk} \ten{R}_k \ \forall i,j\in[n]$, then $\left(\S \ten{R}_i \S^{-1} \right) \left (\S \ten{R}_j \S^{-1}\right) = \sum_{k=1}^n \ten{C}_{ijk} (\S \ten{R}_k \S^{-1}) \ \forall i,j\in[n]$ for all $\S \in \mathrm{GL}_m(\mathbb{F})$. If further $\sum_{i=1}^n \lambda_i \ten{R}_i = \mathbb{I}_m$, then $\sum_{i=1}^n \lambda_i \S\ten{R}_i\S^{-1} = \mathbb{I}_m$.
\end{corollary}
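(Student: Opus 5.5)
The plan is to derive the corollary from Proposition~\ref{proposition:conjugacy_invariance}, taking $p=m$, $\A=\S$ and $\B=\S^{-1}$. Since $\S\in\mathrm{GL}_m(\mathbb{F})$, the matrix $\S^{-1}$ exists, the shapes $\A\in\mathbb{F}^{m\times m}$, $\B\in\mathbb{F}^{m\times m}$ are correct, and $\B\A=\S^{-1}\S=\mathbb{I}_m$. The hypothesis of that proposition is exactly ours, $\ten{R}_i\ten{R}_j=\sum_k \ten{C}_{ijk}\ten{R}_k$. Its conclusion therefore gives $(\S\ten{R}_i\S^{-1})(\S\ten{R}_j\S^{-1})=\sum_k\ten{C}_{ijk}\,\S\ten{R}_k\S^{-1}$ for all $i,j\in[n]$, which is the first claim.

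Proposition~\ref{proposition:conjugacy_invariance} is stated without a written proof, so for self-containment I will also give the one-line verification. Using associativity of matrix multiplication,
\begin{equation*}
(\A\ten{R}_i\B)(\A\ten{R}_j\B)=\A\ten{R}_i(\B\A)\ten{R}_j\B=\A(\ten{R}_i\ten{R}_j)\B=\A\Big(\sum_k\ten{C}_{ijk}\ten{R}_k\Big)\B=\sum_k\ten{C}_{ijk}\,\A\ten{R}_k\B,
\end{equation*}
where the last step is linearity of $\X\mapsto\A\X\B$.

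For the unital part, suppose in addition that $\sum_i\lambda_i\ten{R}_i=\mathbb{I}_m$. By linearity,
\begin{equation*}
\sum_i\lambda_i\S\ten{R}_i\S^{-1}=\S\Big(\sum_i\lambda_i\ten{R}_i\Big)\S^{-1}=\S\,\mathbb{I}_m\,\S^{-1}=\mathbb{I}_m.
\end{equation*}
This argument needs $\S\S^{-1}=\mathbb{I}_m$ as well as $\S^{-1}\S=\mathbb{I}_m$. That is exactly why it works for $p=m$ with a genuine inverse, in contrast with the general rectangular case, where the text notes that $\A\B\neq\mathbb{I}_p$. There is no real obstacle; the only point requiring care is that $\S$ must be two-sided invertible, and this is guaranteed by $\S\in\mathrm{GL}_m(\mathbb{F})$.
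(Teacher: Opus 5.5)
Your proof is correct and follows the route the paper implicitly takes. The corollary is Proposition~\ref{proposition:conjugacy_invariance} specialized to $p=m$, $\A=\S$, $\B=\S^{-1}$, and the unital part is exactly the paper's remark that $\sum_i\lambda_i\A\ten{R}_i\B=\A\B$, which here equals $\S\S^{-1}=\mathbb{I}_m$. Your one-line verification of that proposition, inserting $\B\A=\mathbb{I}_m$ between the two factors, also supplies the argument the paper leaves unwritten.
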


It is therefore natural to work up to the equivalence relation defined on $\mathbb{F}^{n\times m\times m}$ by the following equation, i.e., to consider orbits under simultaneous conjugation.
\begin{equation}
\ten{R} \sim \tilde{\ten{R}} 
\quad \Longleftrightarrow \quad \exists \S \in \mathrm{GL}_m(\mathbb{F}) \ \text{such that}\ \tilde{\ten{R}}_i=\S \ten{R}_i \S^{-1}\ \ \forall i\in[n]
\end{equation}
One might hope this allows choosing a diagonal normal form. However, this would require a single change of basis that diagonalizes all $\ten{R}_i$ at once. In general the $\ten{R}_i$ need not commute or be simultaneously diagonalizable, so this reduction is not available without additional assumptions.
Nevertheless, let us imagine we restrict ourselves to diagonal solutions. In that case, writing $\ten{R}_i=\mathrm{diag}(\vec{r}_i^{(1)},\dots,\vec{r}_i^{(m)})$ for all  $i\in[n]$, the defining system (Equation~\eqref{eq:equation_for_R_in_B_simple}) decouples coordinate-wise. For each  $t\in[m]$, the scalars $(\vec{r}_1^{(t)},\dots,\vec{r}_n^{(t)})$ must satisfy
\begin{equation}
\label{eq:equation_for_R_in_B_diagonal}
\begin{split}
& \vec{r}_i^{(t)} \vec{r}_j^{(t)} = \sum_{k=1}^n \ten{C}_{ijk} \vec{r}_k^{(t)} \quad \forall i,j\in[n]
, \quad
\sum_{i=1}^n \lambda_i \vec{r}_i^{(t)}=1 \text{ if } \alg{A} \text{ is unital}
\end{split}
\end{equation}
Thus, the diagonal ansatz reduces the matrix problem to a family of scalar problems, one for each diagonal entry. 
A diagonal solution of size $m$ is obtained by picking $m$ points in 
\begin{equation}
\mathcal{S}(\ten{C}) := \left\{ \vec{r} \in\mathbb{F}^n : \vec{r}_i \vec{r}_j=\sum_k \ten{C}_{ijk} \vec{r}_k\ \forall (i,j)\in [n]^2 \ \text{and}\ \sum_i \lambda_i \vec{r}_i=1\ \text{if unitality holds} \right\}
\end{equation}
and placing them on the diagonal (conjugation by permutation matrices only reorders the diagonal). Thus, diagonal solutions correspond to $\mathcal{S}(\ten{C})^m/S_m$\footnote{The quotient $/S_m$ indicates that two $m$-tuples which differ only by a permutation of their coordinates correspond to the same diagonal solution.}. In particular, a nontrivial diagonal solution exists if and only if $\mathcal{S}(\ten{C}) \setminus\{0\}\neq\emptyset$. Hence, the minimal matrix size for a diagonal solution is $m=1$ whenever $\mathcal{S}(\ten{C})$ contains a nonzero point; otherwise, no diagonal solution exists for any $m$.

While the defining equations for representations are explicit (see~\eqref{eq:equation_for_R_in_B_simple} and the diagonal reduction~\eqref{eq:equation_for_R_in_B_diagonal}), a general description of their solution set (e.g., orbit structure, invariances, dimensions, and stratification) would require tools from algebraic geometry, invariant theory, or representation theory. 
For this reason, we now provide a few examples of representation for the algebras of Section~\ref{sec:fda_examples} and proceed to the next section, which addresses the problem of learning algebras using deep learning models.

\begin{example}[Complex Numbers]
\label{eg:complex_number_R}
For the algebra of complex numbers  (Example~\ref{eg:complex_number}) with $\mathbb{F}=\mathbb{R}$ and $\{\a^{(1)},\a^{(2)}\}=\{1,\icomplex\}$, the relations are
\begin{equation}
\begin{split}
& \ten{R}_1^2=\ten{R}_1, \ \ten{R}_1\ten{R}_2=\ten{R}_2\ten{R}_1=\ten{R}_2, \ \ten{R}_2^2=-\ten{R}_1
, \quad 
\ten{R}_1=\mathbb{I}_m \text{ since } 1_{\alg{A}}=\a^{(1)}
\end{split}
\end{equation}
Therefore $\ten{R}_1=\mathbb{I}_m$ (representation of multiplication by $1$ in $\mathbb{C}$ for $m=2$) and $\ten{R}_2^2=-\mathbb{I}_m$. This requires $m$ to be even, because over $\mathbb{R}$ there is no odd-dimensional $m \times m$ real matrix squaring to $-\mathbb{I}_m$. 
The minimal case is $m=2$ with $\ten{R}_2= \ten{C}_2^\top =  \begin{bmatrix}  \begin{bmatrix} 0 & -1 \end{bmatrix}, \begin{bmatrix} 1 & 0 \end{bmatrix}\end{bmatrix} = - \ten{C}_2$ (Equation~\eqref{eq:C_complex}), which is the standard real representation of multiplication by $\icomplex$. 
Writing $m=2k$, the general solution is $\ten{R}_2 = \S (\mathbb{I}_k \otimes \ten{C}_2^\top) \S^{-1}$ for an arbitrary invertible real matrix $\S\in \mathrm{GL}_m(\mathbb{R})$. Equivalently, $\ten{R}_2$ is any real matrix with minimal polynomial $x^2+1$. If one requires $\ten{R}_2$ to be orthogonal, then $\ten{R}_2 = \Q (\mathbb{I}_k \otimes \ten{C}_2^\top) \Q^{\top}$ for any $\Q \in O(m)$, with $O(m)$ the general orthogonal group. 

With $\mathbb{F}=\mathbb{Z}/p\mathbb{Z}$ for $p$ prime, we have  $\ten{C}_{221}=(-1)\%p=p-1$, so the relation 
$\ten{R}_2^2=-\ten{R}_1=-\mathbb{I}_m$ over $\mathbb{R}$ becomes $\ten{R}_2^2=(p-1)\,\mathbb{I}_m$.  Since $p-1>0$, the polynomial $x^2-(p-1)=(x-\sqrt{p-1})(x+\sqrt{p-1})$ has distinct real roots, hence any real solution $\ten{R}_2$ is diagonalizable over $\mathbb{R}$ with eigenvalues in $\{\pm\sqrt{p-1}\}$. Equivalently, $\ten{R}_2=(p-1)^{1/2} \mat{J}$ with $\mat{J}^2=\mathbb{I}_m$ i.e., $\ten{R}_2$ is $\sqrt{p-1}$ times an involution. There is no parity restriction on $m$ (unlike the $\mathbb{F}=\mathbb{R}$ case).
A convenient normal form is $\ten{R}_2 = (p-1)^{1/2} \S \diag \big(\boldsymbol{\epsilon}\big) \S^{-1}$ for any $\S \in \mathrm{GL}_m(\mathbb{R})$ and signs $\boldsymbol{\epsilon}\in\{\pm1\}^m$.
If one also requires $\ten{R}_2$ to be symmetric (hence orthogonally diagonalizable), then $\ten{R}_2 = (p-1)^{1/2} \Q \diag \left(\boldsymbol{\epsilon}\right)\Q^{\top}$ for any $\Q\in O(m)$.
\end{example}

\begin{example}[Dual Numbers]
\label{eg:dual_number_R}
For the algebra of dual numbers  (Example~\ref{eg:dual_number}) with $\{\a^{(1)},\a^{(2)}\}=\{1,\varepsilon\}$, $\ten{R}_1=\mathbb{I}_m$ and $\ten{R}_2^2=0$ (corresponding to the dual number $\varepsilon$). This yields nontrivial solutions whenever $m\ge 2$, since square-zero (nilpotent) matrices exist, e.g. $\ten{R}_2 = \u \v^\top \in \mathbb{F}^{m\times m}$ for any $\u, \v \in \mathbb{F}^{m}$ such that $\u^\top \v = 0$.
\end{example}

\begin{example}[Polynomials modulo $x^n$]
\label{eg:polynomials_R}
$\alg{A} = \left(\mathbb{F}_{n-1}[x], \cdot \right)$ with basis $\a^{(i)}=x^i \ \forall i \in \llbracket 0, n-1 \rrbracket$ and product $x^i\cdot x^j=x^{(i+j)\bmod n}$ (Example~\ref{eg:polynomials}). The structure constants are $\ten{C}_{ijk} = \delta_{k, (i+j) \% n} \ \forall i, j, k \in \llbracket 0, n-1 \rrbracket$, so we have the relations
\begin{equation}
\label{eq:poly-eqs-reduced}
\begin{split}
& \ten{R}_i \ten{R}_j = \ten{R}_{(i+j) \% n}  \ \forall i,j \in \llbracket 0, n-1 \rrbracket
, \quad
\ten{R}_0 = \mathbb{I}_m \text{ since } 1_{\alg{A}}=\a^{(0)}
\end{split}
\end{equation}
Setting $\A:=\ten{R}_1$, \eqref{eq:poly-eqs-reduced} yields $\ten{R}_{i+1}= \A \ten{R}_i$ and hence $\ten{R}_i=\A^i$ for all $i$. In particular, $\A^n=\ten{R}_n=\ten{R}_0 = \mathbb{I}_m$.
Conversely, for any $\A \in \mathcal{M}_m(\mathbb{F})$ with $\A^n=\mathbb{I}_m$, the assignment $\ten{R}_i:=\A^i \ \forall i\in \llbracket 0, n-1 \rrbracket$ solves \eqref{eq:poly-eqs-reduced}. In fact, since $\A^n=\mathbb{I}_m$, we have for all $k = qn + r$ with $q \ge 0$ and $r = k \% n \in \llbracket 0, n-1 \rrbracket$ (Euclidean division), $\A^{k} = \A^{qn+r} = (\A^n)^q \A^r = (\mathbb{I}_m)^q \A^r = \A^r = \A^{k \% n}$. Therefore, size-$m$ solutions are in bijection with matrices $\A$ satisfying $\A^n=\mathbb{I}_m$, via $\A\mapsto (\ten{R}_i=\A^i)_{i=0}^{n-1}$.

Over $\mathbb{R}$, a faithful (i.e. injective) size-$n$ representation is obtained by choosing $\A$ to be the $n\times n$ cyclic shift matrix
\begin{equation}
\A=\begin{pmatrix}
0&0&\cdots&0&1\\
1&0&\cdots&0&0\\
0&1&\ddots&\vdots&\vdots\\
\vdots&\ddots&\ddots&0&0\\
0&\cdots&0&1&0
\end{pmatrix}
\end{equation}
Since $\A \e^{(k)} = \e^{(k+1)} \ \forall k \in [n-1]$ and $\A \e^{(n)} = \e^{(1)}$, we have $\A_n^n \e^{(k)} = \e^{(k)} \ \forall k \in [n]$. Therefore $\A^n=\mathbb{I}_n$.
Define $\rho(x^i)=\A^{i} \ \forall i \in \llbracket 0, n-1 \rrbracket$. Then for all $i,j \in \llbracket 0, n-1 \rrbracket$, $\rho(x^i)\rho(x^j)=\A^{i} \A^{j}=\A^{i+j} = \A^{(i+j) \% n}=\rho\big(x^{(i+j) \% n}\big)$. So $\rho$ is an algebra homomorphism. This is the regular (cyclic) real representation. More generally, any real solution arises from some $\A \in \mathbb{F}^{m \times m}$ with $\A^n=\mathbb{I}_m$ by $\ten{R}_j=\A^j$. Over $\mathbb{R}$ such $\A$ is real-similar to a block diagonal matrix with $1\times1$ blocks at $\pm1$ (when $n$ is even) and $2\times2$ rotation blocks corresponding to conjugate pairs $e^{\pm 2\pi \icomplex k/n}$ (Equation \eqref{eq:representation_g_k_group}).
If fact, if $\A\in\mathbb{R}^{m\times m}$ satisfies $\A^{n}=\mathbb{I}_m$, then every eigenvalue $\lambda$ of $\A$ solves $\lambda^{n}=1$, i.e. $\lambda=e^{2\pi \mathrm{i}k/n}$ for some $k\in\llbracket 0,n-1\rrbracket$.
The real roots among these are $\lambda=1$ (always) and, when $n$ is even, $\lambda=-1$; these give $1\times 1$ Jordan blocks $[1]$ and (if $n$ is even) $[-1]$.
All remaining roots occur in complex-conjugate pairs $e^{\pm \mathrm{i}\theta}$ with $\theta=2\pi k/n$. Over $\mathbb{R}$, each such conjugate pair corresponds to a $2\times 2$ real block that acts as a rotation by angle $\theta$.

\end{example}

\begin{example}[Square matrices of size $n$]
\label{eg:matrices_R}
$\alg{A} = \left( \mathcal{M}_n(\mathbb{F}), \cdot \right)$, with canonical matrix unit basis $B = \{ \a^{((q-1)n+r)} \}_{(q, r) \in [n]^2}$, where each $\a^{((q-1)n+r)}$ has $1_\mathbb{F}$ at position $(q, r) \in [n]^2$ and $0_\mathbb{F}$ everywhere (Example~\ref{eg:matrices}). 
Since $\a^{((q_1-1)n+r_1)} \cdot \a^{((q_2-1)n+r_2)} = \delta_{q_2r_1} \a^{((q_1-1)n+r_2)}$, the structure constants of $\alg{A}$ in $B$ are  $\ten{C}_{(q_1-1)n+r_1, (q_2-1)n+r_2, (q_3-1)n+r_3} = \delta_{q_2r_1} \delta_{q_1q_3} \delta_{r_2r_3}$. The representation equations, therefore, read
\begin{equation}
\label{eq:matrices-eqs-reduced}
\begin{split}
& \ten{R}_{(q_1-1)n+r_1} \ten{R}_{(q_2-1)n+r_2} = \sum_{q, r = 1}^n \delta_{q_2r_1} \delta_{q_1q} \delta_{r_2r} \ten{R}_{(q-1)n+r} = \delta_{q_2r_1} \ten{R}_{(q_1-1)n+r_2} \qquad \forall q_1,r_1,q_2,r_2\in[n]
\\ & \sum_{q, r = 1}^n \delta_{qr} \ten{R}_{(q-1)n+r} = \sum_{i = 1}^n \ten{R}_{(i-1)n+i} = \mathbb{I}_m \text{ since } 1_{\alg{A}} = \mathbb{I}_n = \sum_{q, r = 1}^n \delta_{qr} \a^{((q-1)n+r)}
\end{split}
\end{equation}
Define $m=n$ and set, for each $(q,r)\in[n]^2$, $\ten{R}_{(q-1)n+r} = \a^{((q-1)n+r)} \in \mathbb{R}^{n\times n}$. Then
\begin{equation}
\begin{split}
& \ten{R}_{(q_1-1)n+r_1} \ten{R}_{(q_2-1)n+r_2} 
= \delta_{q_2r_1} \a^{((q_1-1)n+r_2)}  = \delta_{q_2r_1} \ten{R}_{(q_1-1)n+r_2} \ \forall q_1,r_1,q_2,r_2\in[n]
\\ & \sum_{i = 1}^n \ten{R}_{(i-1)n+i} = \sum_{i = 1}^n \a^{((i-1)n+i)} = \mathbb{I}_n
\end{split}
\end{equation}
So \eqref{eq:matrices-eqs-reduced} holds. This gives the standard left action $\rho:\mathcal{M}_n(\mathbb{F})\to \mathcal{M}_n(\mathbb{F})$, $\rho(\A)=\A$.
For a general size $m=\kappa n$ with $\kappa \in\mathbb{N}$,  set $\ten{R}_{(q-1)n+r} = \a^{((q-1)n+r)} \otimes \mathbb{I}_\kappa \in \mathbb{F}^{\kappa n\times \kappa n}$. We have
\begin{equation}
\begin{split}
& \ten{R}_{(q_1-1)n+r_1} \ten{R}_{(q_2-1)n+r_2} = \left( \a^{((q_1-1)n+r_1)} \a^{((q_2-1)n+r_2)} \right) \otimes \mathbb{I}_\kappa  = \delta_{q_2r_1} \a^{((q_1-1)n+r_2)} \otimes \mathbb{I}_\kappa = \delta_{q_2r_1} \ten{R}_{(q_1-1)n+r_2} 
\\ & \sum_{i = 1}^n \ten{R}_{(i-1)n+i} = \sum_{i = 1}^n \a^{((i-1)n+i)} \otimes \mathbb{I}_\kappa =  \mathbb{I}_n \otimes \mathbb{I}_\kappa = \mathbb{I}_{\kappa n} 
\end{split}
\end{equation}
Thus \eqref{eq:matrices-eqs-reduced} holds with $m=\kappa n$. By conjugacy invariance (Corollary~\ref{corollary:conjugacy_invariance}), the family $\ten{R}_{(q-1)n+r}:= \S ( \a^{((q-1)n+r)} \otimes \mathbb{I}_\kappa) \S^{-1}$ is again a solution for any $\S\in \mathrm{GL}_{\kappa n}(\mathbb{F})$. In this way, one obtains all size-$m=\kappa n$ solutions up to simultaneous conjugation.
\end{example}

\begin{example}[Square matrices of size $n$ with the commutator]
\label{eg:lie_algebra_R}
If we replace matrix multiplication $\A\B$ in $\alg{A} = \left( \mathcal{M}_n(\mathbb{F}), \cdot \right)$ with the matrix commutator $[\A, \B] = \A\B-\B\A$, then $\ten{C}_{(q_1-1)n+r_1, (q_2-1)n+r_2, (q_3-1)n+r_3} = \delta_{q_2r_1} \delta_{q_1q_3} \delta_{r_2r_3} - \delta_{q_1r_2} \delta_{q_2q_3} \delta_{r_1r_3}$ (Example~\ref{eg:lie_algebra}).
The representation equations, therefore, read
\begin{equation}
\label{eq:lie_algebra-eqs-reduced}
\begin{split}
& \ten{R}_{(q_1-1)n+r_1} \ten{R}_{(q_2-1)n+r_2} 
= \delta_{q_2r_1} \ten{R}_{(q_1-1)n+r_2} - \delta_{q_1r_2} \ten{R}_{(q_2-1)n+r_1} 
 \ \forall q_1,r_1,q_2,r_2\in[n]
\end{split}
\end{equation}
Fix $m\ge 2$ and choose any nonzero matrix $\mat{N} \in \mathbb{F}^{m\times m}$ with $\mat{N}^2=0$. Define $\ten{R}_{(q-1)n+r} = \delta_{qr} \mat{N} \ \forall (q,r)\in[n]^2$.
For all $q_1,r_1,q_2,r_2\in[n]$, 
\begin{equation}
\ten{R}_{(q_1-1)n+r_1} \ten{R}_{(q_2-1)n+r_2} = (\delta_{q_1 r_1} \mat{N}) (\delta_{q_2 r_2} \mat{N}) = \delta_{q_1 r_1} \delta_{q_2 r_2} \mat{N}^2 = 0
\end{equation}
and  
\begin{equation}
\delta_{q_2r_1} \ten{R}_{(q_1-1)n+r_2} - \delta_{q_1r_2} \ten{R}_{(q_2-1)n+r_1} = \delta_{q_2r_1} \delta_{q_1 r_2} \mat{N} - \delta_{q_1r_2} \delta_{q_2 r_1} \mat{N} = 0
\end{equation}
So $\ten{R}$ satisfies the system \eqref{eq:lie_algebra-eqs-reduced}. This yields nontrivial solutions whenever $m\ge 2$, since square-zero (nilpotent) matrices exist, e.g. $\mat{N} = \u \v^\top \in \mathbb{F}^{m\times m}$ for any $\u, \v \in \mathbb{F}^{m}$ such that $\u^\top \v = 0$.
Any simultaneous conjugate $\S \ten{R}_{(q-1)n+r} \S^{-1}$ for $\S \in \mathrm{GL}_{m}(\mathbb{F})$ is again a solution, so replacing $\mat{N}$ by any conjugate $\S \mat{N} \S^{-1}$ with $(\S \mat{N} \S^{-1})^2=0$ also works. Note that for all $\S \in \mathrm{GL}_{m}(\mathbb{F})$, we have $\mat{N}^2 = 0 \Longleftrightarrow (\S \mat{N} \S^{-1})^2=0$.
      
If $m=1$ over a field of characteristic $\neq 2$ (e.g. $\mathbb{R}$), the only solution is trivial: the equations force all scalars to be $0$. Hence, the minimal $m$ for a nontrivial solution is $m=2$.
\end{example}

\begin{proposition}
\label{proposition:group_to_algebra_representation}
Let $\rho_G: G\to \mathrm{GL}_m(\mathbb{F})$ be a representation of a goup $(G, \circ)$. Define the linear map
\begin{equation}
\rho: \mathbb{F}[G]\to \mathcal{M}_m(\mathbb{F}),\ \rho \left(\sum_{i=1}^n \alpha_i g_i\right) := \sum_{i=1}^n \alpha_i\, \rho_G(g_i)
\end{equation}
Then $\rho$ is an $\mathbb{F}$-algebra homomorphism, and hence a representation of $(\mathbb{F}[G],\cdot)$. 
\vspace{-3mm}
\begin{proof}
$\rho$ is linear by definition. For all $\u=\sum_i \u_i g_i$ and $\v=\sum_j \v_j g_j$, $\rho(\u\cdot \v) = \rho\left(\sum_{i,j}\u_i\v_j (g_i\circ g_j)\right) = \sum_{i,j} \u_i\v_j \rho_G(g_i\circ g_j) = \sum_{i,j}\u_i\v_j \rho_G(g_i)\rho_G(g_j) = \rho(\u)\rho(\v)$. Finaly, $\rho(1_{\mathbb{F}[G]})=\rho(e)=\mathbb{I}_m$ since $\rho_G(e)=\mathbb{I}_m$.
\end{proof}
\end{proposition}

\begin{example}[Group algebra of the symmetric group  $S_t$ over $\mathbb{F}$]
\label{eg:group_algebra_sn_R}
Write $n = t!$.
The structure tensor of $\alg{A} = \left(\mathbb{F}[S_t], \cdot \right)$ in its canonical basis $(\a^{(1)}, \dots, \a^{(n)}) \equiv (\sigma)_{\sigma \in S_t}$ is  $\ten{C}_{ijk} = \mathbb{1}( \a^{(k)} = \a^{(i)} \circ \a^{(j)}) \ \forall i, j, k \in [n]$ (Example~\ref{eg:group_algebra_sn}), so the representation equations read (by abuse of notations)
\begin{equation}
\label{eq:Sn-eqs}
\ten{R}_{\sigma_i} \ten{R}_{\sigma_j} = \ten{R}_{\sigma_i \circ \sigma_j}
\ \forall \sigma_i,\sigma_j \in S_t
\end{equation}
Take $m=t$ and define $\ten{R}_\sigma = \P_\sigma \in \mathbb{F}^{m\times m} \ \forall \sigma \in S_t$, where $\P_\sigma$ is the permutation matrix associated to $\sigma$, i.e. $(\P_\sigma)_{ab}=1$ if $\sigma(b)=a$ and $0$ otherwise (under cycle notation for permutations). Then $\ten{R}_{\sigma_i} \ten{R}_{\sigma_j} = \P_{\sigma_i} \P_{\sigma_j} = \P_{\sigma_i \circ \sigma_j} = \ten{R}_{\sigma_i \circ \sigma_j}$, so \eqref{eq:Sn-eqs} is satisfied. This is the standard permutation representation of $S_t$ on $\mathbb{F}^t$.
More generally, let $m=n$ and for each $\sigma\in S_t$ set $\ten{R}_\sigma := \text{matrix of left multiplication by $\sigma$ on $\mathbb{F}[S_t]$}$. Explicitly, $(\ten{R}_\sigma)_{\pi,\gamma} = \mathbb{1}(\gamma=\sigma\circ\pi) \ \forall \pi,\gamma\in S_t$. Then $\ten{R}_{\sigma_i} \ten{R}_{\sigma_j}=\ten{R}_{\sigma_i \circ \sigma_j}$, so~\eqref{eq:Sn-eqs} holds. 
This is the left regular representation (Proposition~\ref{proposition:min_m_associative_fda}).
In general, for any $m\ge 1$, any group representation $\rho:S_t\to \mathrm{GL}_{m}(\mathbb{F})$ gives a solution of~\eqref{eq:Sn-eqs} by setting  $\ten{R}_\sigma:=\rho(\sigma)$ (Proposition~\ref{proposition:group_to_algebra_representation}). Conversely, any solution of~\eqref{eq:Sn-eqs} defines a group representation of $S_t$. Thus, the solutions of the FDA system for $\mathbb{F}[S_t]$ are in bijection 
with group representations of $S_t$.
\end{example}


\section{Learning Finite-Dimensional Algebra}
\label{sec:grokking_fda_appendix}

\subsection{A Linear Inverse View for \texorpdfstring{$\mathbb{F}=\mathbb{R}$}{F=R}}
\label{sec:grokking_fda_F=R_appendix}

In $\mathbb{F} = \mathbb{R}$, the problem of learning a FDA with structure tensor $\ten{C}^{*}  \in \mathbb{F}^{n \times n \times n}$ is similar to a matrix factorization problem with matrix $\ten{C}^{*\top}_{(3)} \in \mathbb{F}^{n^2 \times n}$ (Equation~\eqref{eq:fda_as_compressed_sensing}).
Giving the measures $\U, \V \in \mathbb{F}^{N \times n}$, let $\X := \V \bullet \U = \left[ -\V_s \otimes \U_s - \right]_{s \in [N]} \in \mathbb{F}^{N \times n^2}$. The training loss is
\begin{equation}
\begin{split}
\mathcal{L}(\ten{C}) 
&= \sum_{s \in [N]} \left\| \left( \ten{C} - \ten{C}^* \right) \times_1 \U_s \times_2 \V_s \right\|_2^2 
= \sum_{s \in [N]} \left\| \left( \ten{C}_{(3)} - \ten{C}^*_{(3)} \right) \X_s \right\|_2^2 
=  \left\| \X \left( \ten{C}_{(3)} - \ten{C}^*_{(3)} \right)^\top   \right\|_{\text{F}}^2 
\end{split}
\end{equation}

By writing $\ten{C}^* =  \llbracket \A^*, \B^*, \C^* \rrbracket := \sum_{\ell=1}^{R} \A^*_{:,\ell} \circ \B^*_{:,\ell} \circ \C^*_{:,\ell}$ as the CP decomposition of rank $R$ of $\ten{C}^*$, we can use a parametrization $\ten{C} =  \llbracket \A, \B, \C \rrbracket$, so that $\ten{C}_{(3)} = \C(\B \star \A)^\top$. In that case, if we are interested in the global structure of $\ten{C}^*$, we can try to evaluate the effect of the properties of the algebra $\alg{A}$ generated by $\ten{C}^*$ on the CP rank $R$ of $\ten{C}^{*}$, and thus classify which properties of this algebra increase its CP rank (and therefore make it more difficult to learn). Note that $\ten{C} =  \llbracket \A, \B, \C \rrbracket$ implies $\ten{C}_{(3)} = \C (\B \star \A)^{\top}$ and $\ten{C} \times_1 \u \times_2 \v = \C \left( (\A^\top \u) \odot (\B^\top \v) \right) \ \forall \u, \v \in \mathbb{F}^n$, so that $\X \ten{C}_{(3)}^\top = \left( \V \bullet \U \right) (\B \star \A) \C^{\top} = \left( (\V \B) \odot (\U \A) \right) \C^{\top}$. We recall that $\odot$ is the Hadamard product, $\star$ the Khatri-Rao product, $\bullet$ the face-splitting product, and $\circ$ the outer product.

Another way to make things simple, is to just analyze the rank $r$ of $\ten{C}^{*\top}_{(3)}$ to characterize the solvability of the problem using a parameterization $\ten{C}^{\top}_{(3)} = \mat{W}^{(L)} \cdots \mat{W}^{(1)}$, with $\mat{W}^{(L)} \in \mathbb{R}^{n^2 \times d}$, $\mat{W}^{(i)} \in \mathbb{R}^{d \times d}$ for $1 < i < L$, and $\mat{W}^{(1)} \in \mathbb{R}^{d \times n}$. This corresponds to a linear network with $L$ layers.  That said, if we are interested in the global structure of $\ten{C}^*$, we can try to evaluate the effect of the properties of the algebra $\alg{A}$ generated by $\ten{C}^*$ on the rank $r$ of $\ten{C}^{*\top}_{(3)}$, and thus classify which properties of this algebra increase its rank (and therefore make it more difficult to learn). For example, if $\ten{C}^* =  \llbracket \A^*, \B^*, \C^* \rrbracket$, then $\ten{C}^{*\top}_{(3)} = (\B^* \star \A^*) \C^{*\top}$, so that\footnote{$\rank(\A)+\rank(\B)-d_2 \le \rank(\A\B) \le \min\left( \rank(\A), \rank(\B) \right)$ for all $\A \in \mathbb{R}^{d_1 \times d_2}$ and $\B \in \mathbb{R}^{d_2 \times d_3}$.} $\rank(\B^* \star \A^*) + \rank(\C^{*}) - R \le r \le \min\left( \rank(\B^* \star \A^*), \rank(\C^{*})  \right)$.
By “making things simple,” we mean that this is a well-studied problem in the matrix setting. With $L=1$, there is a need for $\ell_*$ (nuclear norm) regularization (or any other form of appropriate regularization\footnote{For example, if $\ten{C}^{*\top}_{(3)}$ is extremely sparse so that the notion of sparsity prevails over the notion of rank, then $\ell_1$ is needed for generalization under gradient descent optimization~\citep{notsawo2025grokking}.}) to recover $\ten{C}^{*\top}_{(3)}$ when $N$ is large enough \citep{candes2010power,candes2012exact,notsawo2025grokking}. But when $L \ge 2$ (and the initialization scale is small), there is no need for $\ell_*$ (or any other form of regularization) to recover $\ten{C}^{*\top}_{(3)}$ \citep{gunasekar2017implicit,arora2018optimizationdeepnetworksimplicit,DBLP:journals/corr/abs-1905-13655,DBLP:journals/corr/abs-1904-13262,DBLP:journals/corr/abs-1909-12051,DBLP:journals/corr/abs-2005-06398,DBLP:journals/corr/abs-2012-09839}. Increasing $L$ implicitly biases $\ten{C}^{\top}_{(3)}$ toward a low-rank solution, which oftentimes leads to more accurate recovery for sufficiently large $N$.

The rank is not the only thing to take into account if we treat the problem as a matrix factorization problem. To see this, consider the matrix $\e^{(k)} \e^{(l)\top}$ for $k, l \in [n]$. Even if the rank of this matrix is $1$, it has only zeros everywhere except $1$ at position $(k, l)$, so we have very little chance of reconstructing it in high dimension by observing a portion of its inputs uniformly at random. The only way to guarantee observation of the input at position $(k, l)$ is to choose measurements coherently with its singular basis $\e^{(k)} \otimes \e^{(l)}$. This idea is formulated more generally below.

\begin{definition} Let $U$ be a subspace of $\mathbb{R}^n$ of dimension $r$ and $\P_U$ be the orthogonal projection onto $U$. Then, the coherence of $U$ vis-a-vis a basis $\{ \u^{(i)} \}_{i \in [n]}$ is defined by 
$\mu(U) = \frac{n}{r} \max_{i} \| \P_U \u^{(i)} \|^2$. 
\end{definition}

For a matrix $\A = \U \Sigma \V^\top \in \mathbb{R}^{n_1 \times n_2}$ under the compact SVD, the projection on the left singular value is $\x \to \U\U^\top \x$, and $\| \U\U^\top \x \|^2_2 = \| \U^\top \x \|^2_2$ for all $\x$ (similarly for the right singular value). We have the following definition of coherence, which considers each matrix entry.

\begin{definition}[Local coherence \& Leverage score]
Let $\A = \U \Sigma \V^\top \in \mathbb{R}^{n_1 \times n_2}$ be the compact SVD of a matrix $\A$ of rank $r$. The local coherences of $\A$ are defined by 
\begin{equation}
\begin{split}
& \mu_i(\A) = \frac{n_1}{r} \| \U^\top \e^{(i)} \|^2 = \frac{n_1}{r} \| \U_{i,:} \|^2 \quad \forall i \in [n_1]
\\ & \nu_j(\A) = \frac{n_2}{r} \| \V^\top \e^{(j)} \|^2 = \frac{n_2}{r} \| \V_{j,:} \|^2 \quad \forall j \in [n_2]
\end{split}
\end{equation}
with $\mu_i$ for row $i$ and $\nu_j$ for row $j$. In $\U^\top \e^{(i)}$, $\e^{(i)} \in \mathbb{R}^{n_1}$, and in $\V^\top \e^{(j)}$, $\e^{(j)} \in \mathbb{R}^{n_2}$. We did not distinguish them explicitly for simplicity's sake. 
%
%
%
\end{definition}

The analysis of recovery guarantees for matrix factorization hinges on local coherence of a target matrix $\A^*$. The local coherence measures $(\mu_i, \nu_j)_{(i, j) \in [n_1] \times [n_2]}$ of $\A^* \in \mathbb{R}^{n_1 \times n_2}$ quantify how strongly individual rows and columns align with the top singular vectors. These quantities, also known as leverage scores, indicate the  “influence” of each row $i$ or column $j$ on the low-rank structure. A row/column with a high leverage score projects strongly onto the span of the singular vectors, meaning that a relatively small number of its entries capture much of the matrix's structure. Uniformly low coherence ($\mu_i$ and $\nu_i$ close to $1$) implies that the matrix’s information is well-distributed across rows and columns, thereby reducing the number of samples needed for exact recovery. For example, \citet{pmlr-v32-chenc14} show in the context of matrix completion that sampling the training inputs at position $(i, j)$ with probability $p_{ij}$ proportional to $\mu_i+\nu_j$ allows for perfect recovery of $\A^*$ with fewer samples than uniform sampling, and called such sampling strategies \textit{local coherence sampling} (see Theorem 3.2 and Corollary 3.3 in \citep{pmlr-v32-chenc14}). 
The minimal number of observations required for perfect recovery also depends on the coherence measures~\citep{candes2010power,candes2012exact}. 
Let $\mu^{(0)}$ and $\mu^{(1)}$ be two constants such that $\mu^{(0)} \ge \max\left(\max_{i}\mu_i, \max_{i}\nu_i \right)$ and $\max_{i, j} [\U^*\V^{*\top}]_{ij} \le \mu^{(1)} \sqrt{r/(n_1n_2)}$\footnote{Since $\left| [\U^*\V^{*\top}]_{ij}\right| = \left| \sum_k \U^*_{i,k} \V^*_{j,k} \right| 
\le 
\sqrt{\sum_k \U^{*2}_{i,k}} \sqrt{\sum_k \V^{*2}_{j,k}} = 
\| \U^*_{i,:} \|_2 \| \V^*_{j,:} \|_2 = \frac{r}{\sqrt{n_1 n_2}} \sqrt{ \mu_i \nu_j} \le \frac{r}{\sqrt{n_1 n_2}} \mu_0$ for all $i, j$; we can take any $\mu^{(1)} \ge \mu^{(0)} \sqrt{r}$.}.~\citet{candes2012exact} show that if $\mu_0$ and $\mu_1$ are low, few samples are required to recover $\A^*$. More precisely, put $n=\max(n_1,n_2)$. Suppose we observe $N$ entries of $\A^*$ with locations sampled uniformly at random. There are numerical constants $C$ and $c$ such that if $N \ge C \max\left( \mu_1^2, \mu_0^{\frac{1}{2}} \mu_1, \mu_0 n^{\frac{1}{4}} \right) n r \beta \log\left( n\right)$ for some $\beta>2$, then perfect recovery of $\A^*$ is possible from this $N$ observations with probability at least $1-c/n^3$. In addition, if $r \le n^{1/5} / \mu_0$, then the recovery is exact with probability at least $1-c/n^3$ provided that $N \ge C \mu_0 n^{6/5} r \beta \log\left( n\right)$ (see Theorem 1.3 in \citep{candes2012exact})

The research question here can be therefore to know which natural properties of $\alg{A}$ affects the rank and the local coherences of $\ten{C}^{*\top}_{(3)}$; or more generally, how do the properties of $\alg{A}$ affect the rank and the local coherences of $\ten{C}^{*\top}_{(3)}$. We leave these questions for future work.

\subsection{Finite Fields \texorpdfstring{$\mathbb{F} = \mathbb{Z}/p\mathbb{Z}$}{F=Z/pZ} and Representation-Centric Modeling}
\label{sec:grokking_fda_F=Zp/Z_appendix}

For a finite-field FDA ($\mathbb{F}=\mathbb{F}_p= \mathbb{Z}/p\mathbb{Z}$, $p$ prime), we treat $\alg{A}$ as a vocabulary, index each element $\u\in\alg{A}$ by $\langle \u\rangle \in [q]$ with $q=|\alg{A}|=p^n$, and learn an embedding matrix $\E\in\mathbb{R}^{q\times d}$ so that $\E_{\langle \u\rangle}$ is the trainable vector attached to $\u$. We hypothesize that grokking corresponds to the point at which $\E$ and the downstream layers collectively represent the algebra, so a simple linear readout recovers the correct output token. 
To verify this hypothesis: (i) we vary algebraic properties of $\mathbb{F}_p[\ten{C}^*]$ and measure their effect on grokking delay and generalization; (ii) we study how structural features of $\ten{C}^*$ shape learning dynamics; and (iii) we probe whether models learn algebraic representations during training. This finite-field setting provides the cleanest laboratory for observing how algebraic structure governs grokking.

\section{Experiment Setup}
\label{sec:experiments_setup_appendix}

\subsection{Task Description and Model Architecture}
\label{sec:task_model}

From now on we work over the finite field $\mathbb{F}=\mathbb{F}_p$ (prime $p$). Identify $\alg{A}\equiv \mathbb{F}_p^n$ and set $q:=|\alg{A}|=p^n$. Each element $\u\in\alg{A}$ is treated as a vocabulary symbol with index $\langle \u\rangle\in[q]$.  The models we use will associate to each of these symbols a trainable vector $\E_{\langle \u \rangle} \in \mathbb{R}^d$. $\E \in \mathbb{R}^{V \times d}$, with $V\ge q$ the vocabulary size, which include the set special tokens $\mathcal{S}$ (see section below). Let $\Psi : \alg{A} \to \{0, 1\}^V$ be the one-hot encoding defined the vocabulary $\mathcal{V} = \alg{A} \cup \mathcal{S}$, $\Psi(\a) := \left[ \mathbb{1}(\v = \u) \right]_{\v \in \mathcal{V}} \ \forall \u \in \mathcal{V}$. We have $\E_{\langle \u \rangle} = \E^\top \Psi(\u)$.

\subsubsection{Classification}
\label{sec:task=classification}

Here, $V=q+2$, with $1$ for the special tokens $\mathcal{S} = \left\{ \times, \texttt{=} \right\}$. The logits for $\x = (\u, \times, \v, \texttt{=})$ with $(\u, \v)  \in \alg{A}^2$ are given by $y_\theta(\x) = \varphi \left(\phi \left( \E_{\langle \u \rangle} \oplus \E_{\langle \times \rangle} \oplus \E_{\langle \v \rangle} \oplus \E_{\langle \texttt{=} \rangle}  \right) \right) \in \mathbb{R}^{q}$, where $\oplus$ is the vector concatenation, and $\varphi$ the classifier. We use a linear classifier $\varphi(\z) = \b + \mat{W}\z$. For $\phi$, we use MLP, LSTM and Transformer (encoder). The learnable parameters $\theta$ are the union of $\left\{ \E, \mat{W}, \b \right\}$ and the parameters of $\phi$.

For LSTM and Transformer, $\phi$ takes the embeddings $\z \in \mathbb{R}^{4 \times d}$ and returns a hidden representation of size $m^2$, for instance, the representation of the last token $\texttt{=}$ of $\x$.
For a MLP $\varphi \circ \phi$ of $L>1$ layers with activation function $g$, we defined $\phi$ by 
\begin{equation}
\phi(\z) = g \left( \b^{(L-1)} + \mat{W}^{(L-1)} g \left( \cdots g\left( \b^{(2)} + \mat{W}^{(2)} g\left( \b^{(1)} + \mat{W}^{(1)} \vect(\z) \right) \right) \cdots \right) \right) 
\end{equation}
with  $\z = \vect\left(\E_{\langle  \u \rangle} \oplus \E_{\langle  \v \rangle}\right) \in \mathbb{R}^{2d}$, where $\mat{W}^{(i)} \in \mathbb{R}^{d_{i+1} \times d_i}$ and $\b^{(i)} \in \mathbb{R}^{d_{i+1}}$ for all $i \in [L-1]$, $(d_1, d_{L}) = (2d, m^2)$.

The dataset $\mathcal{D}=\{(\x,\y^*(\x)) \mid \x = (\u, \times, \v, \texttt{=}), (\u, \v)  \in \alg{A}^2 \}$ has size $N=q^2$. $\mathcal{D}$ is randomly partitioned into two disjoint and non-empty sets $\mathcal{D}_{\text{train}}$ and $\mathcal{D}_{\text{test}}$, the training and the validation dataset respectively, following a ratio  $r := |\mathcal{D}_{\text{train}}| / |\mathcal{D}| \in (0, 1]$.  
The models are trained to minimize the average cross-entropy loss $\mathcal{L}_{\text{train}}(\theta)$ given in Equation~\eqref{eq:avg_cross_entropy_loss}. We denoted by $\mathcal{A}_{\text{train}}$ the corresponding accuracy ($\mathcal{L}_{\text{test}}$ and $\mathcal{A}_{\text{test}}$ on $\mathcal{D}_{\text{test}}$).
\begin{equation}
\label{eq:avg_cross_entropy_loss}
\mathcal{L}_{\text{train}}(\theta) = \sum_{(\x, \y^*) \in \mathcal{D}_{\text{train}}} \ell \left( y_{\theta}(\x), \langle \y^* \rangle \right)
\ \text{ with }
\ell \left( \y, i \right)  = - \log \left( \frac{\exp\left( \y_i \right)}{\sum_{j \in [q]} \exp\left( \y_j \right)} \right)  \ \forall \y \in \mathbb{R}^q, i \in [q]
\end{equation}


\subsubsection{Language Modeling}
\label{sec:task=lm}

Here, $V=q+4$, with $4$ for the special tokens $\mathcal{S} = \left\{ \times, \texttt{=}, \texttt{bos}, \texttt{eos} \right\}$. We train the model on the algebra using an auto-regressive approach. For $\x=(\u,\v) \in \alg{A}^2$ with $\y = \ten{C}^* \times_1 \u \times_2 \v$,  let $s = \langle \texttt{bos} \rangle \langle \u \rangle \langle \times \rangle \langle \v \rangle \langle \texttt{=} \rangle \langle \y \rangle \langle \texttt{eos} \rangle \in [V]^7$. The training is performed by maximizing the likelihood under the direct autoregressive factorization, and the loss (as well as the accuracy) is calculated only on the answer part $s_6s_7 = \langle \y \rangle \langle \texttt{eos} \rangle$ of the equation. 
More precisely, let $\phi$ be an encoder that takes a sequence of embedding vectors and returns a hidden representation of size $m^2$, and $\varphi : \mathbb{R}^{m^2} \to \mathbb{R}^V$ be the classifier. For $s \in [V]^7$ and $k \ge 2$, write 
\begin{align}
s^{(k)} &= s_1 \cdots s_k \in [V]^k \\
\E^{(k)} &= [ \E_{s_1}, \cdots, \E_{s_k} ]^\top \in \mathbb{R}^{k \times d} \\
\z^{(k)} &= \varphi(\phi(\E^{(k)}) \in \mathbb{R}^V \\
\mathbb{P}(i \mid s_{<k}) &= \frac{\exp(\z^{(k)}_i)}{\sum_{j} \exp(\z^{(k)}_j)} \in \{0, 1\}^V \ \forall i \in [V]
\end{align}
The likelihood for $s = s_1 \cdots s_6 s_7 = \langle \texttt{bos} \rangle \langle \u \rangle \langle \times \rangle \langle \v \rangle \langle \texttt{=} \rangle \langle \y \rangle \langle \texttt{eos} \rangle \in [V]^7$ is 
\begin{equation}
p_{\theta} (s_6 s_7  \mid s_{<6}) = \mathbb{P}(s_6 \mid s_{<6})  \mathbb{P}(s_7 \mid s_{<7})
\end{equation}

The dataset $\mathcal{D}$ of all possible equations (which has size $q^2$) is randomly partitioned into two disjoint and non-empty sets $\mathcal{D}_{\text{train}}$ and $\mathcal{D}_{\text{test}}$, the training and the validation dataset respectively, following a ratio  $r := |\mathcal{D}_{\text{train}}| / |\mathcal{D}|$.  The models are trained to minimize the average $\mathcal{L}_{\text{train}}(\theta)$ of the negative loglikelihood $-\log p_{\theta} (s_6 s_7  \mid s_{<6})$ over $\mathcal{D}_{\text{train}}$ (Equation~\eqref{eq:avg_loglikelihood}). We denoted by $\mathcal{A}_{\text{train}}$ the corresponding accuracy ($\mathcal{L}_{\text{test}}$ and $\mathcal{A}_{\text{test}}$ on $\mathcal{D}_{\text{test}}$).
\begin{equation}
\label{eq:avg_loglikelihood}
\mathcal{L}_{\text{train}}(\theta) = - \sum_{s \in \mathcal{D}_{\text{train}}} \log p_{\theta} (s_6 s_7  \mid s_{<6})
\end{equation}

We use a linear classifier $\varphi(\z) = \b + \mat{W}\z \in \mathbb{R}^{V}$.  For $\phi$, we use MLP, LSTM, and Transformer (encoder). The learnable parameters $\theta$ are the union of $\left\{ \E, \mat{W}, \b \right\}$ and the parameters of $\phi$.
For LSTM and Transformer, $\phi$ takes the embeddings $\z \in \mathbb{R}^{T \times d}$ and returns a hidden representation of size $m^2$, for instance, the representation of the last element (last token). 
For MLP, we ignored the special tokens for simplicity, i.e $\phi$ directly takes as input the embedding $\z = \vect\left(\E_{\langle  \u \rangle} \oplus \E_{\langle  \v \rangle}\right) \in \mathbb{R}^{2d}$. For $L>1$, we defined $\phi$ by
\begin{equation}
\phi(\z) = g \left( \b^{(L-1)} + \mat{W}^{(L-1)} g \left( \cdots g\left( \b^{(2)} + \mat{W}^{(2)} g\left( \b^{(1)} + \mat{W}^{(1)} \z \right) \right) \cdots \right) \right)
\end{equation}
where $\mat{W}^{(i)} \in \mathbb{R}^{d_{i+1} \times d_i}$ and $\b^{(i)} \in \mathbb{R}^{d_{i+1}}$ for all $i \in [L-1]$, $(d_1, d_{L}) = (2d, m^2)$.  
So $\varphi \circ \phi$ is a MLP of $L$ layers with $g$ as activation function.

\subsection{Representation Learning}
\label{sec:representation_learning_appendix}

\subsubsection{Proof of Propositon~\ref{prop:grokking_linear_sep}}
\label{sec:proof_proposition_grokking_linear_sep}

\begin{proposition}
\label{prop:grokking_linear_sep_appendix}
Assume that $(\alg{A}, \times)$ has a group structure (of size $q$), and denote the inverse of an element $\u \in \alg{A}$ by $\u^{-1}$. Let $\rho : \alg{A} \to \mathbb{R}^{m \times m}$ be a faithful matrix representation of $\alg{A}$.  
Suppose a model $y_\theta(\x) = \mat{W} \phi(\x) \in \mathbb{R}^{q}$ encodes pairs $\x=(\u,\v) \in \alg{A}^2$ as $\phi(\u,\v) = \rho(\u \times \v)$ and decodes with a linear classifier parameterized by weights $\mat{W} \in \mathbb{R}^{q \times m^2}$ containing $\rho(\w^{-1})$ for each $\w \in \alg{A}$.  
Then the predicted label satisfies $\argmax_{i \in [q]} y_\theta(\x)[i] = \u \times \v$. As a consequence, the classifier linearly separates all $q$ outputs.
\vspace{-3mm}
\begin{proof}
Consider the linear classifier $y_\theta(\x) =  \mat{W} \phi(\u,\v)$ acting on $\phi(\u,\v) \in \mathbb{R}^{m^2}$, that is $y_\theta(\u,\v)[i] = \langle \mat{W}_i, \phi(\u,\v) \rangle$ where the inner product is the Frobenius product on $\mathbb{R}^{m \times m}$.  Write $\mat{W}_i = \rho(\w^{-1})$ for some $\w \in \alg{A}$, so that
\begin{equation}
y_\theta(\u,\v)[i] = \langle \rho(\u \times \v), \rho(\w^{-1}) \rangle 
= \tr\left( \rho(\u \times \v) \rho(\w^{-1})^\top \right)
\end{equation}
Since $\rho(\w^{-1})^\top = \rho(\w^{-1})$ for an orthogonal (unitary) representation, this reduces to
\begin{equation}
y_\theta(\u,\v)[i] = \tr\left( \rho(\u \times \v)\rho(\w^{-1}) \right) 
= \chi_\rho\left((\u \times \v) \times \w^{-1}\right)
\end{equation}
where $\chi_\rho$ is the character of $\rho$.  Now, recall that $\chi_\rho$ uniquely distinguishes group elements whenever $\rho$ is faithful.  
In particular, $\max_{\w \in \alg{A}} \chi_\rho\left((\u \times \v)\w^{-1}\right)$ is attained uniquely at $\w = \u \times \v$.  
Therefore,
\begin{equation}
\argmax_{i \in [q]} y_\theta(\u,\v)[i] = \argmax_{\w \in \alg{A}} \chi_\rho\left((\u \times\v) \times\w^{-1}\right) = \u \times \v
\end{equation}
\end{proof}
\end{proposition}

\subsubsection{Representation Quality}
\label{sec:representation_quality}

Let $\f^{(\ell)}(\u) \in \mathbb{R}^{m_1^2}$ and $\f^{(r)}(\u) \in \mathbb{R}^{m_1^2}$ denote the feature vectors produced by the model for $\u \in \alg{A}$ when $\u$ appears on the left or on the right of an equation, respectively; and let $\f(\u) \in \mathbb{R}^{m^2}$ denote the feature vector of $\u$ produced by the unembedding layer (the classifier). For the first layer of the model (the embedding), we have $\f^{(\ell)}(\u) = \f^{(r)}(\u) = \E_{\langle \u \rangle}$ and thus $m_1^2 = d$. For Transformer, $m_1^2 = d$ for all layers, but for LSTM (resp.\ MLP), $m_1^2$ is the hidden dimension (resp.\ number of hidden units in the considered layer), which can differ from the embedding size $d$.
We want to determine whether there exists $\ten{W} \in \mathbb{R}^{m_1^2 \times m_1^2 \times m^2}$ such that 
\begin{equation}
\label{eq:rep_learning_eq}
\f(\u \times \v) 
= \hat{\f}(\u \times \v)
:= \ten{W} \times_1 \f^{(\ell)}(\u) \times_2 \f^{(r)}(\v) 
= \ten{W}_{(3)} \big( \f^{(r)}(\v) \otimes \f^{(\ell)}(\u) \big) 
\quad \forall (\u, \v) \in \alg{A}^2
\end{equation}
with $\ten{W}_{(3)} \in \mathbb{R}^{m^2 \times m_1^4}$.
Here, we are seeking a linear transformation from $\mathbb{R}^{m_1^4}$ to $\mathbb{R}^{m^2}$, with $m_1^4 \approx m^4$ (since $m$ and $m_1$ are of similar order). Because $N = |\alg{A}^2| = q^2$ is not very large, the problem may become trivial if the dimension is too high\footnote{In fact, if $m_1^4 \gg N$, then the system is overparameterized and admits near-perfect solutions regardless of whether the features encode algebraic structure.}. To avoid this, we first project $\f^{(\ell)}(\u)$ and $\f^{(r)}(\v)$ into a lower-dimensional space of size $m_2^2 \le m$. Formally, we want $\ten{W} \in \mathbb{R}^{m_2^2 \times m_2^2 \times m^2}$, $\P^{(\ell)} \in \mathbb{R}^{m_2^2 \times m_1^2}$ and $\P^{(r)} \in \mathbb{R}^{m_2^2 \times m_1^2}$ such that
\begin{equation}
\label{eq:rep_learning_eq_with_proj}
\f(\u \times \v) 
= \hat{\f}(\u \times \v)
:= \ten{W}_{(3)} \Big( \big( \P^{(r)} \f^{(r)}(\v) \big) \otimes \big( \P^{(\ell)} \f^{(\ell)}(\u) \big) \Big) 
= \ten{W}_{(3)} \, \P \, \big( \f^{(r)}(\v) \otimes \f^{(\ell)}(\u) \big) 
\quad \forall (\u, \v) \in \alg{A}^2
\end{equation}
with $\ten{W}_{(3)} \in \mathbb{R}^{m^2 \times m_2^4}$ and $\P = \P^{(r)} \otimes \P^{(\ell)} \in \mathbb{R}^{m_2^4 \times m_1^4}$. 
We do not learn $\P^{(\ell)}, \P^{(r)}$ jointly with $\ten{W}$, since this would give $\ten{W}$ too much flexibility and risk overfitting (the decomposition \eqref{eq:rep_learning_eq_with_proj} could still hold trivially without revealing whether algebra-like structure emerges in the model’s features). Instead, we fix $\P^{(\ell)}, \P^{(r)}$ in advance, either by drawing them at random or by using a PCA-based construction.  
We use the mean squared error $\mathcal{L}_{\text{rep}}$ and the cosine similarity $\mathcal{A}_{\text{rep}}$ associated with solving this problem (i.e.\ finding $\ten{W}$) as measures of how algebra-like structure emerges in the model’s feature space.  
\begin{align}
\mathcal{L}_{\text{rep}}  &= \frac{1}{|\alg{A}^2|} \frac{ 1}{  \sum_{(\u,\v) \in \alg{A}^2} 
\left\| \f(\u \times \v)  \right\|_2^2} \sum_{(\u,\v) \in \alg{A}^2} 
\left\| \f(\u \times \v) - \widehat{\f}(\u,\v) \right\|_2^2 \\
\mathcal{A}_{\text{rep}}  &= \frac{1}{|\alg{A}^2|} \sum_{(\u,\v) \in \alg{A}^2} 
\frac{ \langle \f(\u \times \v), \widehat{\f}(\u,\v) \rangle }
{ \|\f(\u \times \v)\|_2 \, \|\widehat{\f}(\u,\v)\|_2 }
\end{align}

\paragraph{Random projections} 
A natural choice is to draw $\P^{(\ell)}, \P^{(r)}$ at random, for example with i.i.d.\ Gaussian entries followed by row-orthonormalization. Random projections preserve pairwise geometry up to small distortions (Johnson–Lindenstrauss lemma~\citep{Johnson1984ExtensionsOL}) and avoid introducing systematic biases. They are also simple, fast to generate, and hard to overfit since they are fixed across training runs. 
To pick $m_2$, we balance two constraints: (i) avoid the trivial regime where $m_2^4 \gg N$, and (ii) retain enough information from the features. In practice we set $m_2 \approx \min\big( \lfloor m^{1/2}\rfloor, \ \lfloor (N/\kappa)^{1/4} \rfloor \big)$ with $\kappa \ge 1$ a safety factor. This ensures $m_2^4 \lesssim N$ while keeping $m_2^2$ on the same scale as $m$. To ensure robustness, we repeat the probe with several independent draws of $\P$ and report mean $\pm$ standard deviation of $(\mathcal{L}_{\text{rep}}, \mathcal{A}_{\text{rep}})$.

\paragraph{PCA projections}
When the feature distribution is highly anisotropic, random projections may discard important directions. In this case, we compute $\P^{(\ell)}$ and $\P^{(r)}$ using PCA, fitted separately on $\{\f^{(\ell)}(\u)\}_{\u \in \alg{A}}$ and $\{\f^{(r)}(\u)\}_{\u \in \alg{A}}$, respectively. We retain the top $m_2^2$ components and whiten them to equalize variances. This choice improves the signal-to-noise ratio by ensuring that the projection focuses on the most informative directions while still enforcing the same dimensionality constraint $m_2^4 \le m^2$.  
Thus, PCA projections can complement random projections: the former adapt to data anisotropy, while the latter serve as a neutral and robust baseline.

\begin{figure}[htp]
\begin{center}
\centerline{\includegraphics[width=\linewidth]{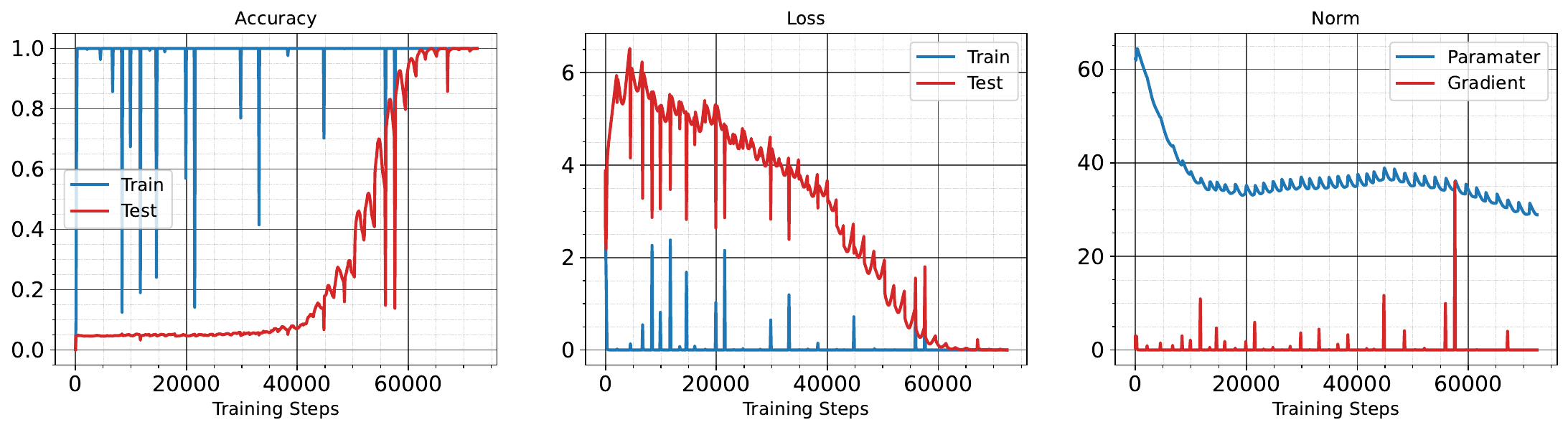}}
\caption{Grokking on 2 layers Tansformer trained on the algebra of complex numbers in $\mathbb{Z}/7\mathbb{Z}$ using a training data fraction of $r=0.5$.}
\label{fig:grokking_complex_numbers_gpt}
\end{center}
\vspace{-0.2in}
\end{figure}

\section{Experimentation Details and Additional Experiments} 
\label{sec:experimentation_details}

\paragraph{MLP}
For Figures~\ref{fig:grokking_complex_numbers},
~\ref{fig:performances_vs_representation_per_steps_mlp_gpt} (a),
~\ref{fig:performances_vs_representation_mlp_gpt} (a), 
~\ref{fig:loss_and_acc_vs_steps_small}, 
~\ref{fig:loss_test_and_t4_vs_r_train}, 
~\ref{fig:performances_vs_rank_and_sparisity},
~\ref{fig:Delta_t_matrix_algebra},
~\ref{fig:rtrain_vs_loss_and_accs}, \ref{fig:rtrain_vs_phases},~\ref{fig:loss_and_acc_vs_steps} and~\ref{fig:performances_vs_rank_and_sparisity_mode12}, $\varphi \circ \phi$ is implemented as a three-layer MLP with ReLU activations and input, hidden, and output dimensions all set to $m^2 = 2^8$. 
More precisely, we define $\varphi(\z) = \mat{W}\z \in \mathbb{R}^{q}, \quad \mat{W} \in \mathbb{R}^{q \times m^2}$ and $\phi(\z) = g\left( \b^{(2)} + \mat{W}^{(2)} g\!\left( \b^{(1)} + \mat{W}^{(1)} \vect(\z) \right) \right)$
where $\mat{W}^{(1)} \in \mathbb{R}^{m^2 \times 2d}$, $\b^{(1)} \in \mathbb{R}^{2d}$, $\mat{W}^{(2)} \in \mathbb{R}^{m^2 \times m^2}$, $\b^{(2)} \in \mathbb{R}^{m^2}$, and $g(z) = \max(0,z)$. 
The embedding dimension is therefore $d = m^2/2$.  
All models are initialized using PyTorch’s default initialization scheme. 
Training was carried out with the AdamW optimizer, a learning rate of $10^{-2}$, weight decay of $10^{-1}$, and a minibatch size of $2^{10}$. 
Models were optimized on a  NVIDIA Tesla T4 GPU.  
For Figure~\ref{fig:grokking_complex_numbers}, the training data fraction is $r=0.5$.

\paragraph{Transformer} 
 For Figures~\ref{fig:performances_vs_representation_per_steps_mlp_gpt} (b),~\ref{fig:performances_vs_representation_mlp_gpt} (b) and ~\ref{fig:grokking_complex_numbers_gpt}, $\phi$ is implemented as a two-layer Transformer Encoder with $2$ heads and a hidden/embedding dimension of $d = m_1^2 = 2^6$. 
The models are initialized using PyTorch’s default initialization scheme and trained with the AdamW optimizer, a learning rate of $10^{-3}$, weight decay of $10^{-1}$, and a minibatch size of $2^{10}$.  

\paragraph{LSTM} For Figures~\ref{fig:grokking_complex_numbers_lstm} and~\ref{fig:performances_vs_representation_lstm}, $\phi$ is implemented as a three-layer LSTM with hidden dimension $m_1^2$ equal to the embedding dimension $d = 2^8$. 
The models are initialized using PyTorch’s default initialization scheme and trained with the AdamW optimizer, a learning rate of $10^{-2}$, weight decay of $10^{-1}$, and a minibatch size of $2^{10}$.

\paragraph{Linear Probing} For the probing experiments (Figures~\ref{fig:performances_vs_representation_per_steps_mlp_gpt},~\ref{fig:performances_vs_representation_mlp_gpt} and~\ref{fig:performances_vs_representation_lstm}), we apply PCA and retain the top $m_2^2 = m$ components to construct the projection matrices $\P^{(\ell)}, \P^{(r)} \in \mathbb{R}^{m_2^2 \times m_1^2}$ with $m_1 = m$. 
The problem~\eqref{eq:rep_learning_eq_with_proj} is then solved using ordinary least squares with $\ell_2$ regularization ($10^{-3}$), implemented via \texttt{sklearn}'s Ridge regression.

\begin{figure}[htp]
\begin{center}
\centerline{\includegraphics[width=\linewidth]{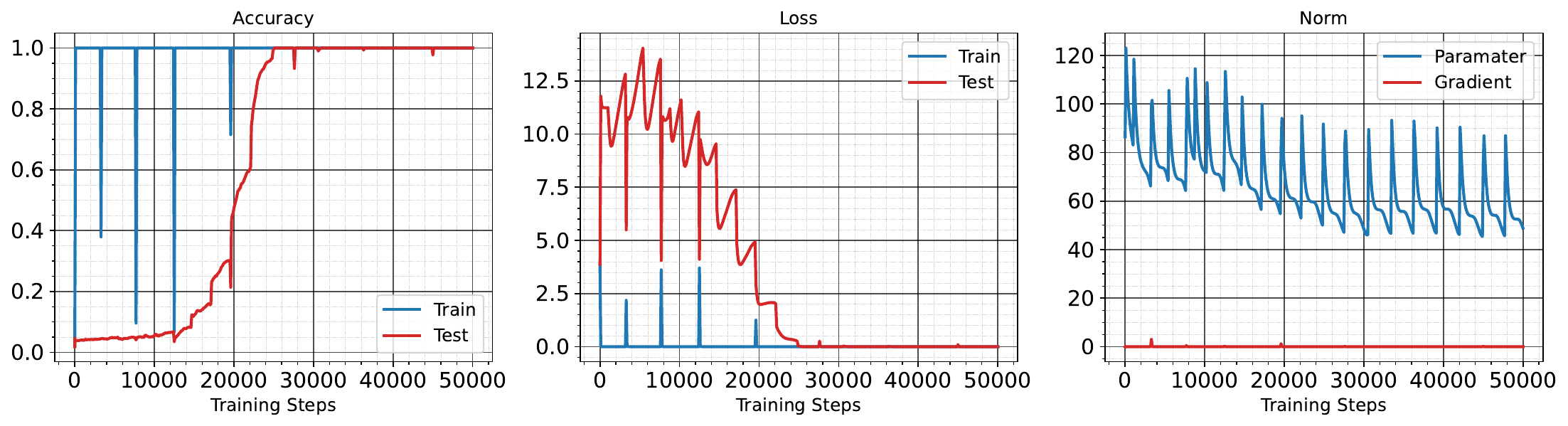}}
\caption{Grokking on 3 layers LSTM trained on the algebra of complex numbers in $\mathbb{Z}/7\mathbb{Z}$ using a training data fraction of $r=0.3$.}
\label{fig:grokking_complex_numbers_lstm}
\end{center}
\vspace{-0.2in}
\end{figure}

\begin{figure}[htp]
\vspace{-0.1in}
\begin{center}
\centerline{\includegraphics[width=0.8\linewidth]{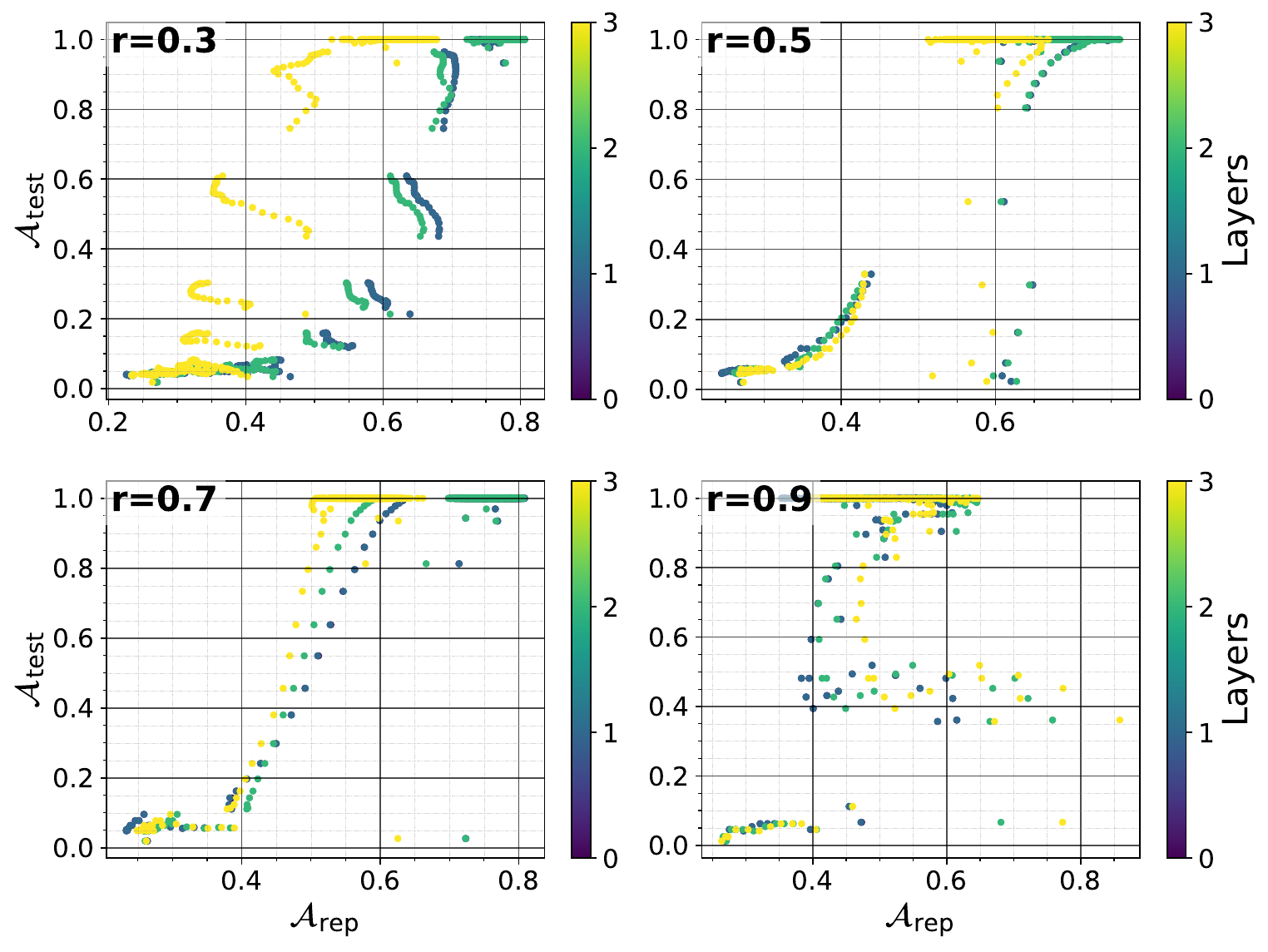}}
\caption{Generalization accuracy as a function of representation quality for different training data size ($r$) and model layers ($0$ for first layer, $1$ for the second, etc.) for a LSTM: $\mathcal{A}_{\text{test}}$ increases with $\mathcal{A}_{\text{rep}}$ in a low-data regime.}
\label{fig:performances_vs_representation_lstm}
\end{center}
\vspace{-0.1in}
\end{figure}

\begin{figure}
\vspace{-0.1in}
\centerline{
\includegraphics[width=1.\linewidth]{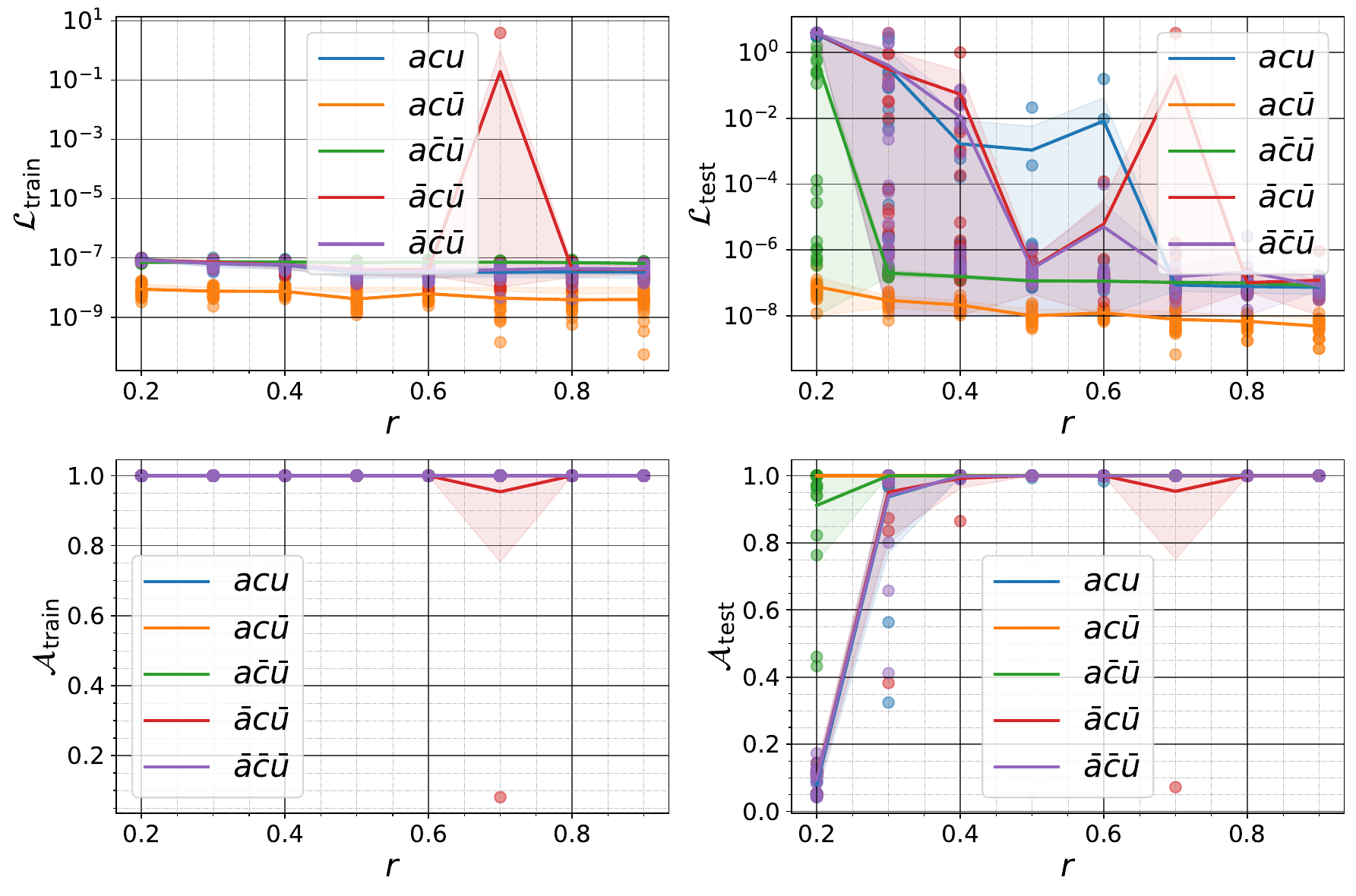}
}
\caption{Training and test loss and accuracy as a function of the training data fraction.}
\label{fig:rtrain_vs_loss_and_accs}
\vspace{-0.2in}
\end{figure}

\begin{figure}[htp]
\vspace{-0.05in}
\centerline{
\includegraphics[width=1.\linewidth]{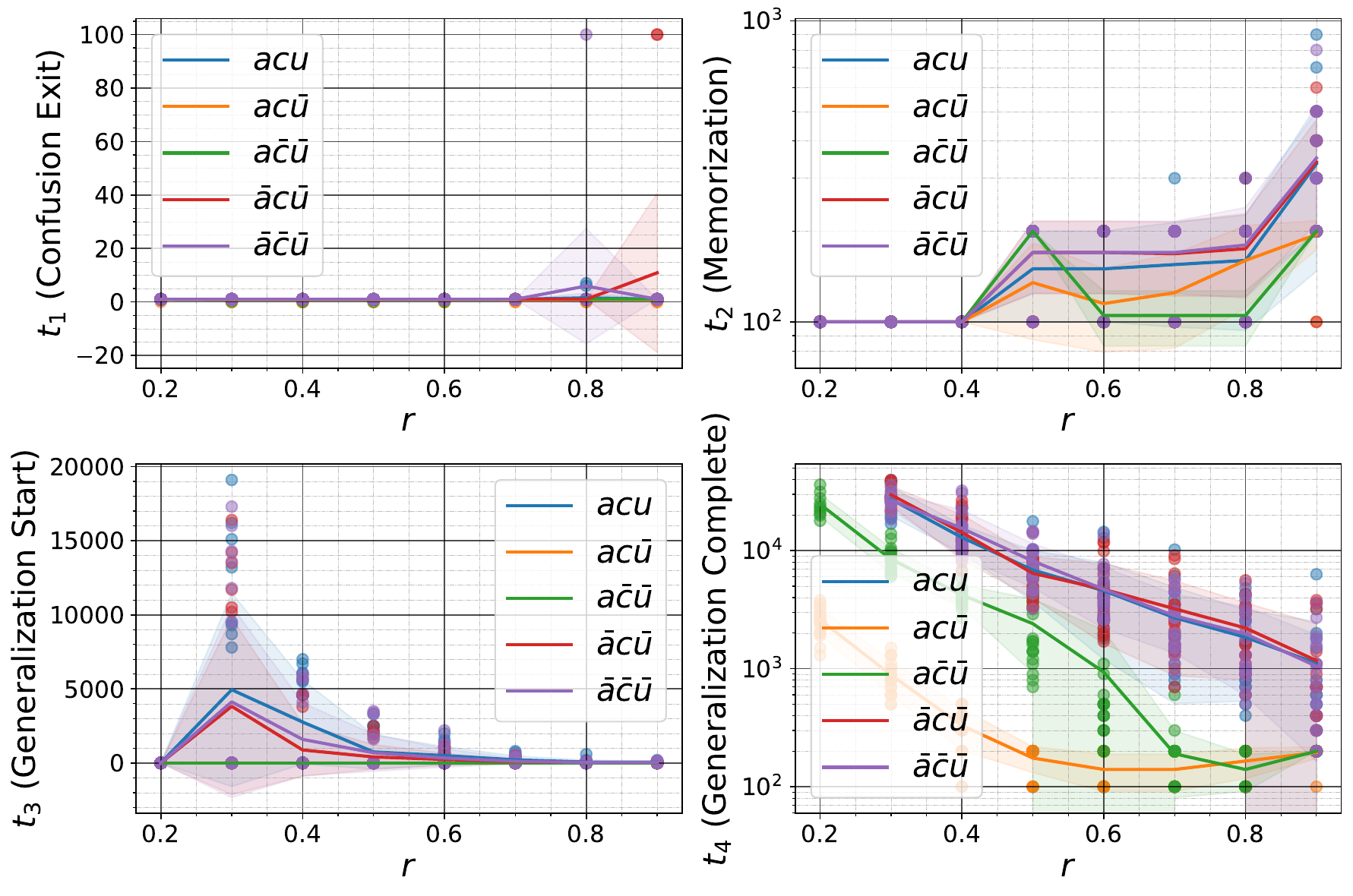}
}
\caption{Phase transition times as a function of the fraction of data used to train the model: $t_{1}$ is the step a which the training accuracy $\mathcal{A}_{\text{train}}$ reaches a value strictly greater than $5\%$ for the first time, $t_2$ the step at which $\mathcal{A}_{\text{train}}$ reaches $99\%$ for the first time, $t_{3}$ the step at which the test accuracy $\mathcal{A}_{\text{test}}$ reaches a value strictly greater than $5\%$ for the first time, and $t_2$ the step at which $\mathcal{A}_{\text{test}}$ reaches $\approx 99\%$ for the first time.}
\label{fig:rtrain_vs_phases}
\vspace{-0.05in}
\end{figure}

\begin{figure}[htp]
\vspace{-0.1in}
\begin{center}
\centerline{\includegraphics[width=\columnwidth]{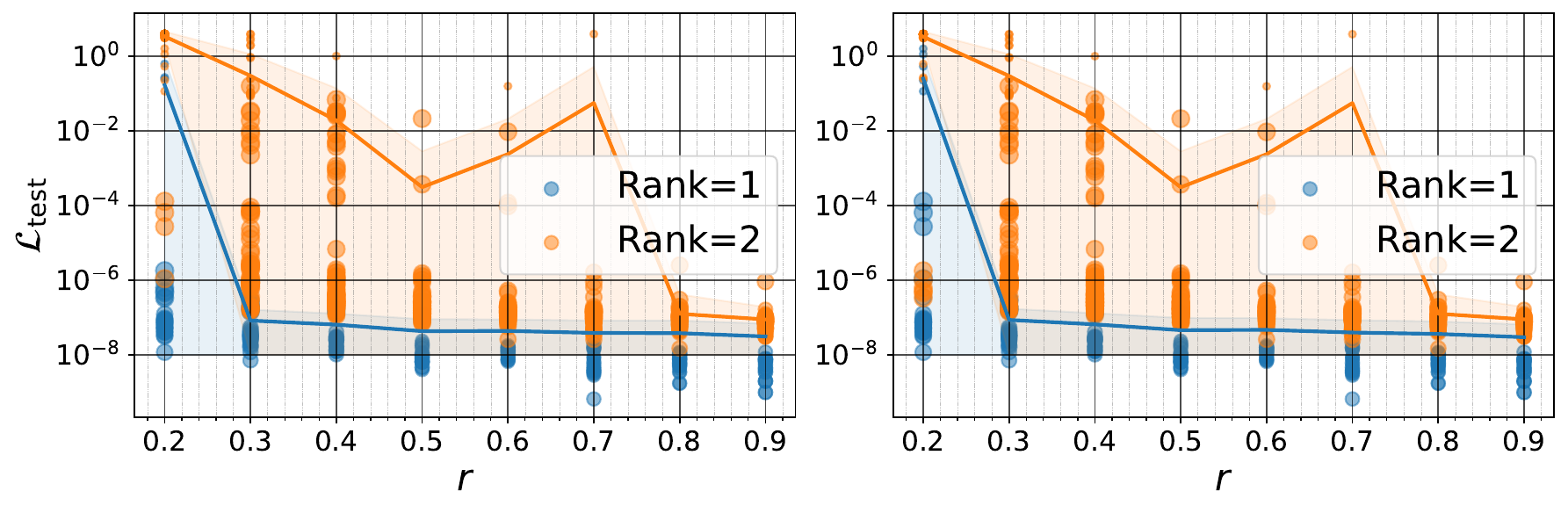}}
\caption{
Test loss as a function of training data fraction $r$ for $\rank(\ten{C}^*_{(1)})$ (left) and $\rank(\ten{C}^*_{(2)})$ (right). The sizes of the dots are proportional to $t_4$.}
\label{fig:performances_vs_rank_and_sparisity_mode12}
\end{center}
\vspace{-0.2in}
\end{figure}

\begin{figure*}[htp]
\begin{center}
\foreach \rtrain in {0.2, 0.3, 0.4, 0.5, 0.6, 0.7, 0.8, 0.9}{
\centerline{
\includegraphics[width=\textwidth]{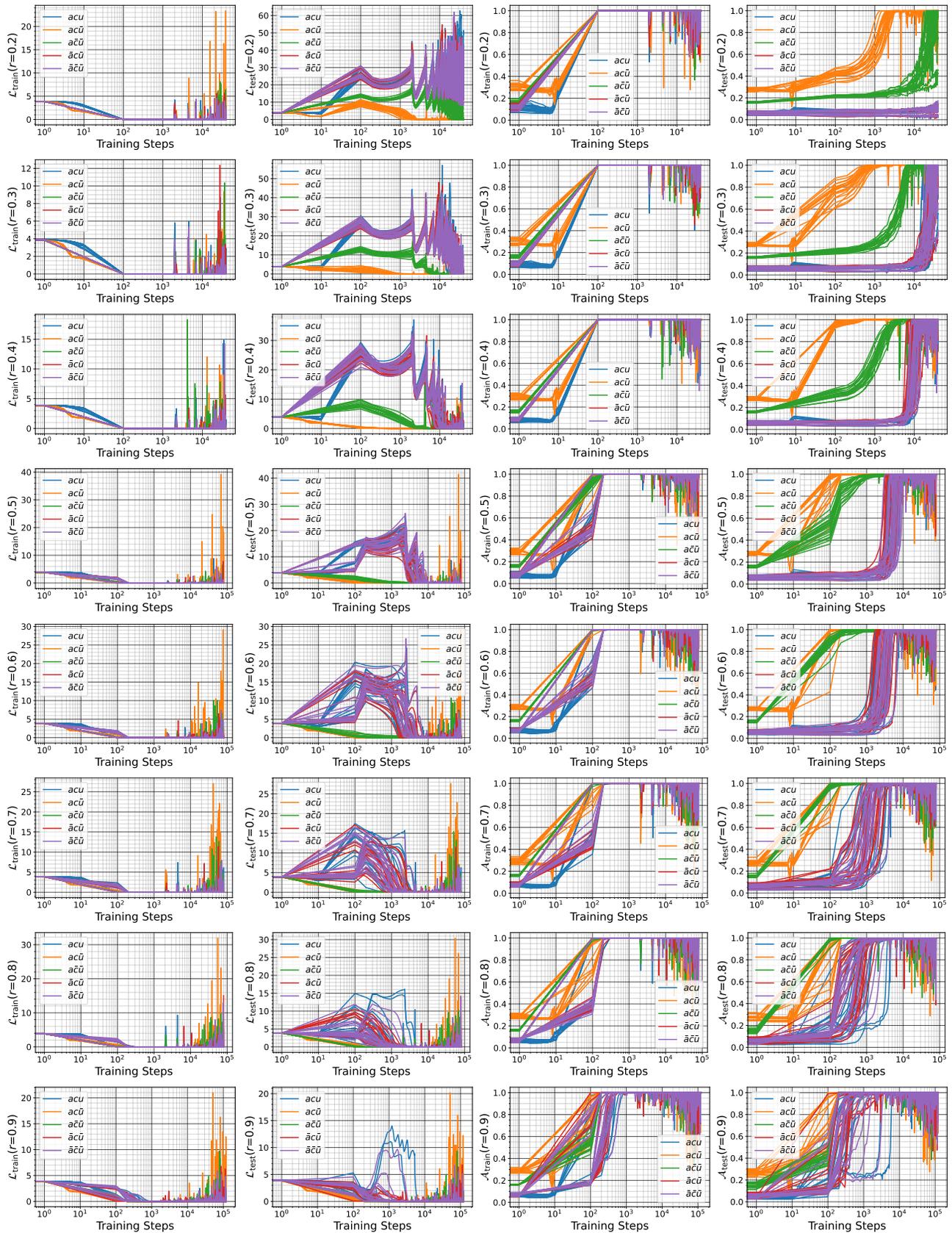}
}
}
\caption{Training and test loss and accuracy as a function of training steps for different values of the training data fraction $r$.
}
\label{fig:loss_and_acc_vs_steps}
\end{center}
\vspace{-0.2in}
\end{figure*}

\end{document}

%% file: macros.tex
\usepackage{etoolbox}
\usepackage{xparse}

\newcommand{\e}{\ten}

\usepackage{pgffor}
\foreach \x in {A,...,Z}{%
\expandafter\xdef\csname \x bb\endcsname{\noexpand\ensuremath{\noexpand\mathbb{\x}}}
}

\foreach \x in {A,...,Z}{%
\expandafter\xdef\csname \x cal\endcsname{\noexpand\ensuremath{\noexpand\mathcal{\x}}}
}

\renewcommand{\vec}[1]{\ensuremath{\mathbf{#1}}}
\newcommand{\vecs}[1]{\ensuremath{\mathbf{\boldsymbol{#1}}}}
\newcommand{\mat}[1]{\ensuremath{\mathbf{#1}}}

\newcommand{\ten}[1]{\mat{\ensuremath{\boldsymbol{\mathcal{#1}}}}}

\newcommand{\A}{\mat{A}}
\newcommand{\B}{\mat{B}}
\newcommand{\C}{\mat{C}}
\newcommand{\E}{\mat{E}}

\newcommand{\T}{\ten{T}}

\newcommand{\U}{\mat{U}}

\newcommand{\Q}{\mat{Q}}
\newcommand{\X}{\mat{X}}

\newcommand{\M}{\mat{M}}

\newcommand{\V}{\mat{V}}
\renewcommand{\P}{\mat{P}}
\renewcommand{\S}{\mat{S}}
\renewcommand{\v}{\vec{v}}
\renewcommand{\u}{\vec{u}}
\renewcommand{\a}{\vec{a}}
\renewcommand{\b}{\vec{b}}
\renewcommand{\c}{\vec{c}}

\renewcommand{\e}{\vec{e}}

\newcommand{\w}{\vec{w}}
\newcommand{\x}{\vec{x}}
\newcommand{\z}{\vec{z}}
\newcommand{\q}{\vec{q}}
\newcommand{\y}{\vec{y}}

\newcommand{\f}{\vec{f}}

\newcommand{\kron}{\otimes}

\DeclareMathOperator*{\argmax}{arg\,max}

\DeclareMathOperator*{\rank}{rank}

\newcommand{\nstates}{n}

\newcommand{\szerosymbol}{\alpha}
\newcommand{\szero}{\vecs{\szerosymbol}}

\newcommand{\sinfsymbol}{\omega}
\newcommand{\sinf}{\vecs{\sinfsymbol}}

\DeclareDocumentCommand{\wa}{  O{A} O{\Fbb^\nstates} O{\szero} O{\sinf} }%
{(#2,#3,\{\mat{#1}^\sigma\}_{\sigma\in\Sigma},#4)}
\DeclareDocumentCommand{\waR}{  O{A} O{\Rbb^\nstates} O{\szero} O{\sinf} }%
{(#2,#3,\{\mat{#1}^\sigma\}_{\sigma\in\Sigma},#4)}

\newcommand{\tzerosymbol}{\alpha}
\newcommand{\tzero}{\vecs{\tzerosymbol}}

\newcommand{\tinfsymbol}{\omega}
\newcommand{\tinf}{\vecs{\tinfsymbol}}

\DeclareDocumentCommand{\wta}{ O{T} O{\Rbb^\nstates} O{\tzero} O{\tinf} O{\Fcal}}%
{(#2,#3,\{\ten{#1}^g\}_{g\in #5_{\geq 1}},\{#4^\sigma\}_{\sigma\in #5_0})}

\DeclareDocumentCommand{\trees}{g}{\IfNoValueTF{#1}{\mathfrak{T}}{\mathfrak{T}_{#1}}}
\DeclareDocumentCommand{\contexts}{g}{\IfNoValueTF{#1}{\mathfrak{C}}{\mathfrak{C}_{#1}}}

\newcommand{\gwmprod}{\diamond}

\DeclareDocumentCommand{\gwm}{  O{M} O{\Fbb^\nstates}}{(#2, \{\ten{#1}^x\}_{x\in\Sigma})}
\DeclareDocumentCommand{\gwmcirc}{  O{M} O{\Rbb^\nstates}}{(#2, \{\mat{#1}^\sigma\}_{\sigma\in\Sigma})}
\DeclareDocumentCommand{\dgwm}{ O{M} O{\Fbb^\nstates}}{(#2, \{\ten{#1}^x\}_{x\in\Sigma},\gwmprod)}

\renewcommand{\widehat}{\hat}